\algrenewcommand{\algorithmiccomment}[3]{\hfill {\small \textcolor{darkgray}{#1 \hspace{0.5em} \makebox[5.5em][l]{#2} \makebox[3.5em][l]{#3}}}}
\algrenewcommand\algorithmicindent{2em}
\algrenewcommand\alglinenumber[1]{\small {\textcolor{darkgray}{#1}}}
\algorithmic\endcsname{\itemsep\z@}{\itemsep=0.25ex}{}{}
\newcommand\fs@booktabsruled{%
    \def\@fs@cfont{\bfseries\strut}\let\@fs@capt\floatc@ruled
    \def\@fs@pre{\hrule height\heavyrulewidth depth0pt \kern\belowrulesep}%
    \def\@fs@mid{\kern\aboverulesep\hrule height\lightrulewidth\kern\belowrulesep}%
    \def\@fs@post{\kern\aboverulesep\hrule height\heavyrulewidth\relax}%
    \let\@fs@iftopcapt\iftrue
}
\newcommand{\mycitestyle}{numeric-comp}
\newcommand{\mybibstyle}{numeric}
    \renewcommand{\mycitestyle}{alphabetic}
    \renewcommand{\mybibstyle}{alphabetic}
\let\cite\citep
\definecolor{MPLblue}{HTML}{1f77b4}
\definecolor{MPLorange}{HTML}{ff7f0e}
\definecolor{MPLgreen}{HTML}{2ca02c}
\definecolor{MPLred}{HTML}{d62728}
\definecolor{MPLpurple}{HTML}{9467bd}
\definecolor{SNSblue}{rgb}{0.1216, 0.4666, 0.7059}
\definecolor{SNSorange}{rgb}{1.0, 0.4980, 0.0549}
\definecolor{SNSgreen}{rgb}{0.1725, 0.6274, 0.1725}
\definecolor{SNSred}{rgb}{0.84, 0.15, 0.16}
\definecolor{SNSpurple}{rgb}{0.58, 0.40, 0.74}
\definecolor{SNSblue_shaded}{HTML}{8ebad9}
\definecolor{SNSorange_shaded}{HTML}{ffcea3}
\definecolor{SNSgreen_shaded}{HTML}{cae7ca}
\definecolor{SNSred_shaded}{HTML}{ea9293}
\definecolor{SNSblue_colorblind}{HTML}{0173b2}
\definecolor{SNSyellow_colorblind}{HTML}{de8f05}
\definecolor{SNSgreen_colorblind}{HTML}{029e73}
\definecolor{SNSred_colorblind}{HTML}{d55e00}
\definecolor{SNSpink_colorblind}{HTML}{cc78bc}
\definecolor{TUgray}{RGB}{185,184,188}
\definecolor{TUdarkgray}{RGB}{50,65,75}
\definecolor{TUgold}{RGB}{180,160,105}
\definecolor{TUdarkblue}{RGB}{65,90,140}
\definecolor{TUblue}{RGB}{0,105,170}
\definecolor{TUlightblue}{RGB}{80,170,200}
\definecolor{TUlightgreen}{RGB}{130,185,160}
\definecolor{TUgreen}{RGB}{125,165,75}
\definecolor{TUdarkgreen}{RGB}{50,110,30}
\definecolor{TUred}{RGB}{165,30,55}
\definecolor{TUlightred}{RGB}{200,80,60}
\definecolor{TUorange}{RGB}{210,150,0}
\definecolor{TUpurple}{RGB}{175,110,150}
\definecolor{TUocre}{RGB}{200,80,60}
\definecolor{TUviolet}{RGB}{175,110,150}
\definecolor{TUmauve}{RGB}{180,160,150}
\definecolor{TUbeige}{RGB}{215,180,105}
\definecolor{TUbrown}{RGB}{145,105,70}
\definecolor{CUblue}{RGB}{29,79,145}
\definecolor{CUlightblue}{RGB}{185,217,235}
\definecolor{CUdarkblue}{RGB}{0,48,135}
\definecolor{CUgray}{RGB}{117,120,123}
\definecolor{CUlightgray}{RGB}{208,208,206}
\definecolor{CUdarkgray}{RGB}{83,86,90}
\definecolor{CUmagenta}{RGB}{174,37,115}
\definecolor{CUyellow}{RGB}{255,152,0}
\definecolor{CUorange}{RGB}{185,71,0}
\definecolor{CUred}{RGB}{166,10,61}
\definecolor{CUgreen}{RGB}{118,136,29}
\colorlet{maincolor}{CUblue}		        
\colorlet{secondcolor}{CUblue}		        
\colorlet{lightgraycolor}{CUlightgray}  
\colorlet{darkgraycolor}{TUdarkgray}	
\colorlet{alertcolor}{CUred}
\colorlet{alertlightcolor}{CUred!50!white}
\colorlet{CholeskyGP}{CUgray}
\colorlet{SGPR}{SNSpink_colorblind!90!black}
\colorlet{SVGP}{SNSred_colorblind}
\colorlet{CGGP}{SNSyellow_colorblind}
\colorlet{CaGP-CG}{SNSblue_colorblind}
\colorlet{CaGP-Opt}{SNSgreen_colorblind}
    \colorlet{citecolor}{CUblue}
    \colorlet{citelightcolor}{CUlightblue}
    \colorlet{linkcolor}{CUblue}
    \colorlet{linklightcolor}{CUlightblue}
    \colorlet{urlcolor}{CUblue}
    \colorlet{urllightcolor}{CUlightblue}
    \colorlet{citecolor}{MPLblue}
    \colorlet{linkcolor}{black}
    \colorlet{urlcolor}{MPLblue}
\pgfplotsset{compat=1.15}
\DeclarePairedDelimiterX{\set}[1]\{\}{%
\newcommand*{\R}{\mathbb{R}}
\newcommand*{\spacesym}[1]{{\mathbb{#1}}}  
\newcommand*{\hilbertsp}[1][H]{\spacesym{#1}}  
\NewDocumentCommand{\sobolevsp}{o m o}{%
  \IfNoValueTF{#1}{
    H^{#2}
  }{
    W^{#1,#2}
  }
  \IfNoValueF{#3}{\left( #3 \right)}
}
\newcommand*{\rkhs}[1]{\hilbertsp_{#1}}
\renewcommand*{\vec}[1]{{\bm{#1}}}
\newcommand*{\mat}[1]{{\bm{#1}}}
\DeclarePairedDelimiterXPP\linspan[1]{\operatorname{span}}{(}{)}{}{#1}
\DeclarePairedDelimiterXPP\colsp[1]{\operatorname{colsp}}{(}{)}{}{#1}
\DeclarePairedDelimiterXPP\im[1]{\operatorname{im}}{(}{)}{}{#1}
\let\ker\relax 
\DeclarePairedDelimiterXPP\ker[1]{\operatorname{ker}}{(}{)}{}{#1}
\DeclarePairedDelimiterXPP\rank[1]{\operatorname{rank}}{(}{)}{}{#1}
\DeclarePairedDelimiterXPP\trace[1]{\operatorname{tr}}{(}{)}{}{#1}
\let\det\relax
\DeclarePairedDelimiterXPP\det[1]{\operatorname{det}}{(}{)}{}{#1}
\newcommand*{\tr}{{\mathsf{T}}}
\renewcommand{\top}{{\tr}} 
\DeclareSymbolFont{stmry}{U}{stmry}{m}{n}
\DeclareMathSymbol\obar\mathrel{stmry}{"3A}
\DeclareMathSymbol\otimes\mathrel{stmry}{"0F}
\DeclareMathSymbol\ominus\mathrel{stmry}{"17}
\newcommand{\superimpose}[2]{
  {\ooalign{$#1\@firstoftwo#2$\cr\hfil$#1\@secondoftwo#2$\hfil\cr}}}
\DeclarePairedDelimiterX\abs[1]{\lvert}{\rvert}{
  \ifblank{#1}{\:\cdot\:}{#1}
}
\DeclarePairedDelimiterX\norm[1]{\lVert}{\rVert}{
  \ifblank{#1}{\:\cdot\:}{#1}
}
\DeclarePairedDelimiterX\innerprod[2]{\langle}{\rangle}{
  \ifblank{#1}{\:\cdot\:}{#1},\ifblank{#2}{\:\cdot\:}{#2}}
\DeclareDocumentCommand\partialderivative{ s o m g g d() }
{ 
  \IfBooleanTF{#1}
  {\let\fractype\flatfrac}
  {\let\fractype\frac}
  \IfNoValueTF{#4}
  {
    \IfNoValueTF{#6}
    {\fractype{\partial \IfNoValueTF{#2}{}{^{#2}}}{\partial #3\IfNoValueTF{#2}{}{^{#2}}}}
    {\fractype{\partial \IfNoValueTF{#2}{}{^{#2}}}{\partial #3\IfNoValueTF{#2}{}{^{#2}}} \argopen(#6\argclose)}
  }
  {
    \IfNoValueTF{#5}
    {\fractype{\partial \IfNoValueTF{#2}{}{^{#2}} #3}{\partial #4\IfNoValueTF{#2}{}{^{#2}}}}
    {\fractype{\partial^2 #3}{\partial #4 \partial #5}}
  }
}
\NewDocumentCommand{\jac}{o m m o}{%
  \operatorname{D}\IfNoValueF{#1}{_{#1}}\mathopen{}%
  #2\mathopen{}\left( #3 \right)\mathclose{}%
  \IfNoValueF{#4}{%
    \IfNoValueTF{#1}{%
      |_{#3}
    }{%
      |_{#1=#4}
    }
  }
}
\NewDocumentCommand{\grad}{o m m o}{%
  \nabla\IfNoValueF{#1}{_{#1}}\mathopen{}%
  #2\mathopen{}\left( #3 \right)\mathclose{}%
  \IfNoValueF{#4}{%
    \IfNoValueTF{#1}{%
      |_{#3}
    }{%
      |_{#1=#4}
    }
  }
}
\NewDocumentCommand{\natgrad}{o m m o}{%
  \tilde{\nabla}\IfNoValueF{#1}{_{#1}}\mathopen{}%
  #2\mathopen{}\left( #3 \right)\mathclose{}%
  \IfNoValueF{#4}{%
    \IfNoValueTF{#1}{%
      |_{#3}
    }{%
      |_{#1=#4}
    }
  }
}
\NewDocumentCommand{\hessian}{o m m o}{%
  \nabla^2\IfNoValueF{#1}{_{#1}}\mathopen{}%
  #2\mathopen{}\left( #3 \right)\mathclose{}%
  \IfNoValueF{#4}{%
    \IfNoValueTF{#1}{%
      |_{#3}
    }{%
      |_{#1=#4}
    }
  }
}
\newcommand*{\@probsymbol}{\mathrm{P}}
\newcommand*{\@given}[1]{%
  \nonscript\:#1\vert
  \allowbreak
  \nonscript\:
  \mathopen{}}
\providecommand*{\given}{}
\DeclarePairedDelimiterXPP{\@prob}[1]{\@probsymbol}{(}{)}{}{%
  \renewcommand*{\given}{\@given{\delimsize}}%
  #1}
\newcommand*{\prob}[1]{\ifblank{#1}{\@probsymbol}{\@prob*{#1}}}
\NewDocumentCommand{\expval}{o o m}{%
  \operatorname{\mathbb{E}}\IfNoValueF{#1}{_{#1\IfNoValueF{#2}{\sim #2}}}\mathopen{}\left( #3 \right)
}
\NewDocumentCommand{\cov}{o o m}{%
  \operatorname{Cov}\IfNoValueF{#1}{_{#1\IfNoValueF{#2}{\sim #2}}}\mathopen{}\left( #3 \right)
}
\NewDocumentCommand{\var}{o o m}{%
  \operatorname{Var}\IfNoValueF{#1}{_{#1\IfNoValueF{#2}{\sim #2}}}\mathopen{}\left( #3 \right)
}
\newcommand*{\gaussian}[2]{{\ensuremath{\operatorname{\mathcal{N}}\mathopen{}\left(#1, #2\right)}}}
\newcommand*{\gaussianpdf}[3]{%
  {\ensuremath{\operatorname{\mathcal{N}}\mathopen{}\left(#1; #2, #3\right)}}%
}
\NewDocumentCommand{\entropy}{o o m}{%
\operatorname{H}\IfNoValueF{#1}{_{#1\IfNoValueF{#2}{\sim #2}}}\mathopen{}\left( #3 \right)
}
\NewDocumentCommand{\infogain}{o o m}{%
\operatorname{IG}\IfNoValueF{#1}{_{#1\IfNoValueF{#2}{\sim #2}}}\mathopen{}\left( #3 \right)
}
\NewDocumentCommand{\mutualinfo}{o o m}{%
\operatorname{MI}\IfNoValueF{#1}{_{#1\IfNoValueF{#2}{\sim #2}}}\mathopen{}\left( #3 \right)
}
\NewDocumentCommand{\dkl}{m m}{%
  \operatorname{KL}\mathopen{}\left( \ifblank{#1}{\:\cdot\:}{#1}\;\middle\|\;\ifblank{#2}{\:\cdot\:}{#2} \right)
}
\NewDocumentCommand{\dw}{O{2} m m}{%
  \operatorname{W}\IfNoValueF{#1}{_{#1}}\mathopen{}\left( \ifblank{#2}{\:\cdot\:}{#2},\ifblank{#3}{\:\cdot\:}{#3} \right)
}
\newcommand*{\gp}[2]{{\ensuremath{\operatorname{\mathcal{GP}}}\mathopen{}\left(#1, #2\right)}}
\newcommand{\meanfn}{\mu}
\newcommand{\kernel}{K}
\newcommand{\covfn}{\kernel}
\newcommand{\postmeanfn}{\mu_{\star}}
\newcommand{\postcovfn}{\covfn_{\star}}
\newcommand{\outputscale}{o}
\newcommand{\noisescale}{\sigma}
\newcommand{\lengthscale}{l}
\newcommand{\kernmat}{\mat{K}}
\newcommand{\rweights}{{\vec{v}_\star}}
\newcommand{\lossnll}{\loss^{\mathrm{NLL}}}
\DeclareMathOperator*{\argmin}{arg\,min}
\DeclareMathOperator*{\argmax}{arg\,max}
\DeclarePairedDelimiterXPP\bigO[1]{\mathcal{O}}{(}{)}{}{#1}
\DeclarePairedDelimiterXPP\smallo[1]{o}{(}{)}{}{#1}
\DeclarePairedDelimiterXPP\bigOmega[1]{\Omega}{(}{)}{}{#1}
\DeclarePairedDelimiterXPP\smallomega[1]{\omega}{(}{)}{}{#1}
\DeclarePairedDelimiterXPP\bigTheta[1]{\Theta}{(}{)}{}{#1}
\newcommand{\residual}{\vec{r}}
\newcommand{\nnz}{\operatorname{nnz}}
\newcommand*{\inputdim}{d}
\newcommand*{\inputspace}{\spacesym{X}}
\newcommand*{\targetspace}{\spacesym{Y}}
\newcommand*{\numtraindata}{n}
\newcommand*{\traindata}{\mat{X}}
\newcommand*{\targets}{\vec{y}}
\newcommand*{\symboltestdata}{\diamond}
\newcommand*{\numtestdata}{n_\symboltestdata}
\newcommand*{\testdata}{\mat{X}_\symboltestdata}
\newcommand*{\testpoint}{\vec{x}_\symboltestdata}
\newcommand*{\paramspacedim}{p}
\newcommand*{\hyperparams}{\vec{\theta}}
\newcommand*{\loss}{\ell}
\newcommand{\inducingpoint}{\vz}
\newcommand{\inducingpoints}{\mZ}
\newcommand{\numinducingpoints}{m}
\newcommand{\rweightsapprox}{\vec{v}}
\newcommand{\rweightscov}{\mat{\Sigma}}
\newcommand{\kernmatinvapprox}{\mat{C}}
\newcommand{\lossitergp}{\loss_{\mathrm{CaGP}}^{\mathrm{ELBO}}}
\newcommand{\projnllitergp}{\loss^{\mathrm{NLL}}_{\mathrm{proj}}}
    \renewcommand{\action}{\vec{s}}
    \newcommand{\action}{\vec{s}}
\newcommand{\nnzactions}{k}
\newcommand{\mActions}{\mat{S}}
\newcommand{\kernmataction}{\vec{z}}
\newcommand{\mKernmatActions}{\mat{Z}}
\newcommand{\mCholfacActionsKernmatActions}{\mat{L}}
\newcommand{\observ}{\alpha}
\newcommand{\searchdir}{\vec{d}}
\newcommand{\searchdirsqnorm}{\eta}
\newcommand{\vd}{\vec{d}}
\newcommand{\vf}{\vec{f}}
\newcommand{\vm}{\vec{m}}
\newcommand{\vq}{\vec{q}}
\newcommand{\vr}{\vec{r}}
\newcommand{\vs}{\vec{s}}
\newcommand{\vu}{\vec{u}}
\newcommand{\vx}{\vec{x}}
\newcommand{\vy}{\vec{y}}
\newcommand{\vz}{\vec{z}}
\newcommand{\vmu}{\vec{\mu}}
\newcommand{\vtheta}{\vec{\theta}}
\newcommand{\mA}{\mat{A}}
\newcommand{\mB}{\mat{B}}
\newcommand{\mD}{\mat{D}}
\newcommand{\mI}{\mat{I}}
\newcommand{\mK}{\mat{K}}
\newcommand{\mL}{\mat{L}}
\newcommand{\mQ}{\mat{Q}}
\newcommand{\mR}{\mat{R}}
\newcommand{\mS}{\mat{S}}
\newcommand{\mT}{\mat{T}}
\newcommand{\mU}{\mat{U}}
\newcommand{\mV}{\mat{V}}
\newcommand{\mW}{\mat{W}}
\newcommand{\mX}{\mat{X}}
\newcommand{\mZ}{\mat{Z}}
\newcommand{\mZero}{\mat{0}}
\newcommand{\mLambda}{\mat{\Lambda}}
\newcommand{\mSigma}{\mat{\Sigma}}
\newcommand{\idxiter}{i}
\newcommand{\itergp}{CaGP\ }
\newcommand{\indicatorfn}{\mathsf{1}}
\newcommand{\shat}[1]{\vphantom{#1}\smash[t]{\hat{#1}}}
\newcommand*{\eg}{e.g.\@\xspace}
\newcommand*{\ie}{i.e.\@\xspace}
\newcommand*{\st}{s.t.\@\xspace}
\declaretheoremstyle[
    headfont=\normalfont\bfseries,
    notefont=\normalfont,
    bodyfont=\normalfont,
    headpunct={},
    postheadspace=\newline,
]{definitionstyle}
\declaretheoremstyle[
    headfont=\normalfont\bfseries,
    notefont=\normalfont,
    bodyfont=\normalfont\itshape,
    headpunct={},
    postheadspace=\newline,
]{lemmastyle}
\declaretheoremstyle[
    headfont=\normalfont\bfseries,
    notefont=\normalfont,
    bodyfont=\normalfont\itshape,
    headpunct={},
    postheadspace=\newline,
]{theoremstyle}
\declaretheoremstyle[
    headfont=\normalfont\bfseries,
    notefont=\normalfont,
    bodyfont=\normalfont,
    headpunct={},
    postheadspace=\newline,
]{remarkstyle}
    \declaretheorem[style=lemmastyle,name=Lemma]{lemma}
\newcommand{\beginsupplementary}{%

  \renewcommand{\thesection}{S\arabic{section}}
  \renewcommand{\thesubsection}{\thesection.\arabic{subsection}}
  \renewcommand{\theHsection}{S\arabic{section}} 

  \setcounter{table}{0}
  \renewcommand{\thetable}{S\arabic{table}}
  \setcounter{figure}{0}
  \renewcommand{\thefigure}{S\arabic{figure}}
  \setcounter{section}{0}
  \renewcommand{\theequation}{S\arabic{equation}} 
  \renewcommand{\thetheorem}{S\arabic{theorem}} 
  \renewcommand{\thedefinition}{S\arabic{definition}} 
  \renewcommand{\thecorollary}{S\arabic{corollary}} 
  \renewcommand{\theproposition}{S\arabic{proposition}} 
  \renewcommand{\theremark}{S\arabic{remark}} 
  \renewcommand{\thelemma}{S\arabic{lemma}} 
  \renewcommand{\theexample}{S\arabic{example}} 
  \renewcommand{\thealgorithm}{S\arabic{algorithm}}
}
\newcommand{\papertitle}{Computation-Aware Gaussian Processes:\\ Model Selection And Linear-Time Inference}
\newcommand{\authorinfo}{
	Jonathan Wenger$^{1}$ \qquad Kaiwen Wu$^{2}$ \qquad Philipp Hennig$^{3}$ \qquad Jacob R. Gardner$^{2}$ \vspace{0.5em}\\ \bf Geoff Pleiss$^{4}$ \qquad John P. Cunningham$^{1}$ \qquad  \vspace{0.5em}\\
	$^1$ Columbia University\\
	$^2$ University of Pennsylvania\\
	$^3$ University of T\" ubingen, Tübingen AI Center\\
	$^4$ University of British Columbia, Vector Institute
}
\title{\papertitle}
\author{\authorinfo}
\begin{document}

\maketitle

\begin{abstract}
Model selection in Gaussian processes scales prohibitively with the size of the training dataset, both in time and memory.
While many approximations exist, all incur inevitable approximation error.
Recent work accounts for this error in the form of computational uncertainty, which enables---at the cost of quadratic complexity---an \emph{explicit} tradeoff between computational efficiency and precision.
Here we extend this development to model selection, which requires significant enhancements to the existing approach, including linear-time scaling in the size of the dataset.
We propose a novel training loss for hyperparameter optimization and demonstrate empirically that the resulting method can outperform SGPR, CGGP and SVGP, state-of-the-art methods for GP model selection, on medium to large-scale datasets.
Our experiments show that model selection for computation-aware GPs trained on 1.8 million data points can be done within a few hours on a single GPU.
As a result of this work, Gaussian processes can be trained on large-scale datasets without significantly compromising their ability to quantify uncertainty---a fundamental prerequisite for optimal decision-making.
\end{abstract}

\section{Introduction}
\label{sec:intro}

Gaussian Processes (GPs) remain a popular probabilistic model class, despite the challenges in scaling them to large datasets.
Since both computational and memory resources are limited in practice, approximations are necessary for both inference and model selection.
Among the many approximation methods, perhaps the most common approach is to map the data to a lower-dimensional representation.
The resulting posterior approximations typically have a functional form similar to the exact GP posterior,
except where posterior mean and covariance feature \emph{low-rank updates}. 
This strategy can be explicit---by either defining feature functions (\eg Nyström \cite{Williams2001UsingNystrom}, RFF \cite{Rahimi2007RandomFeatures})---or a lower-dimensional latent inducing point space (\eg SoR, DTC, FITC \cite{Quinonero-Candela2005UnifyingView}, SGPR \cite{Titsias2009}, SVGP \cite{Hensman2013}), or implicit---by using an iterative numerical method (\eg CGGP \cite{Gibbs1997BayesianGaussian,Gardner2018GPyTorchBlackbox,Pleiss2018ConstanttimePredictive,Wang2019,Wenger2022PreconditioningScalable}). 
All of these methods then compute coefficients for this lower-dimensional representation from the full set of observations by direct projection
(\eg CGGP) or via an optimization objective (\eg SGPR, SVGP).

While effective and widely used in practice, the inevitable approximation error adversely impacts predictions, uncertainty quantification, and ultimately downstream decision-making.
Many proposed methods come with theoretical error bounds
\citep[e.g.][]{Rahimi2007RandomFeatures,bach2017equivalence,Potapczynski2021BiasFreeScalable,Burt2019RatesConvergence,kang2024asymptotic},
offering insights into the scaling and asymptotic properties of each method.
However, theoretical bounds often require too many assumptions about the data-generating process
to offer ``real-world'' guarantees \citep{Ober2022RecommendationsBaselines},
and in practice, the fidelity of the approximation is ultimately determined by the available computational resources.
One central pathology is overconfidence, which has been shown to be detrimental in key applications of GPs such as Bayesian optimization \cite[\eg variance starvation of RFF,][]{Wang2018BatchedLargescale}, and manifests itself even in state-of-the-art variational methods like SVGP.
SVGP, because it treats inducing variables as ``virtual observations'', can be overconfident at the locations of the inducing points if they are not in close proximity to training data, which becomes increasingly likely in higher dimensions. 
This phenomenon can be seen in a toy example in \cref{fig:visual-abstract}, where SVGP has near zero posterior variance at the inducing point away from the data. See also \cref{sec:inducing-points-placement-svgp} for a more detailed analysis.

\begin{figure}[t]
    \vspace{-0.4em}
	\centering
	\includegraphics[width=\textwidth]{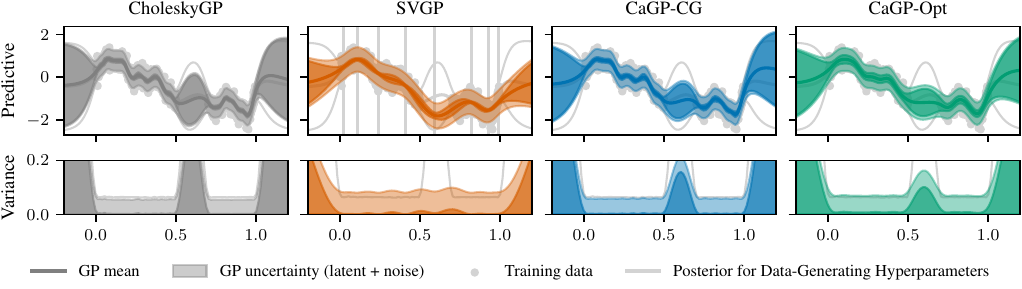}
  \caption{
    Comparison of an exact GP posterior (CholeskyGP) and three scalable approximations:
    SVGP, {CaGP-CG} and {CaGP-Opt} (ours).
    Hyperparameters for each model were optimized using model selection strategies specific to each approximation.
    The posterior predictive given the data-generating hyperparameters is denoted by gray lines and
    for each method the posterior (dark-shaded) and the posterior predictive are shown (light-shaded).
    While all methods, including the exact GP, do not recover the data-generating process,
    CaGP-CG and CaGP-Opt are much closer than SVGP.
    SVGP expresses almost no posterior variance near the inducing point in the data-sparse region
    and thus almost all deviation from the posterior mean is considered to be observational noise.
    In contrast, CaGP-CG and CaGP-Opt express significant posterior variance in regions with no data.
  }
	\label{fig:visual-abstract}
    \vspace{-0.9em}
\end{figure}

These approximation errors are a central issue in inference, but they are exacerbated in model selection, where errors compound and result in biased selections of hyperparameters \cite{Turner2011TwoProblems,Bauer2016UnderstandingProbabilistic,Potapczynski2021BiasFreeScalable}.
Continuing the example, SVGP has been observed to overestimate the observation noise \cite{Bauer2016UnderstandingProbabilistic}, which can lead to oversmoothing.
This issue can also be seen in \Cref{fig:visual-abstract},
where the SVGP model produces a smoother posterior mean than the exact (Cholesky)GP
and attributes most variation from the posterior mean to observational noise (see also \cref{subfig:svgp-overconfidence-at-inducing-points}).
There have been efforts to understand these biases \cite{Bauer2016UnderstandingProbabilistic} and to mitigate the impact of approximation error on model selection for certain approximations \cite[\eg CGGP,][]{Potapczynski2021BiasFreeScalable}, but overcoming these issues for SVGP remains a challenge.

Recently, \citet{Wenger2022PosteriorComputational} introduced computation-aware Gaussian processes (CaGP), a class of GP approximation methods which---for a fixed set of hyperparameters---provably does not suffer from overconfidence.
Like SVGP and the other approximations mentioned above,
CaGP also relies on low-rank posterior updates.
Unlike these other methods, however, CaGP's posterior updates are constructed to guarantee that its posterior variance is always larger than the exact GP variance.
This conservative estimate can be interpreted as additional uncertainty quantifying the approximation error due to limited computation; i.e. \emph{computational uncertainty}.
However, so far CaGP has fallen short in demonstrating wallclock time improvements  for posterior
inference over variational methods
and model selection has so far remained an open problem.

\textbf{Contributions}\quad
In this work, we extend computation-aware Gaussian processes by demonstrating how to perform inference in linear time in the number of training data, while maintaining its theoretical guarantees. 
Second, we propose a novel objective that allows model selection without a significant bias that would arise from naively conducting model selection on the projected GP. In detail,
we enforce a sparsity constraint on the ``actions'' of the method, which combined with hardware acceleration unlocks linear-time inference.
We optimize these actions end-to-end alongside the hyperparameters with respect to a custom training loss, to optimally retain as much information from the data as possible given a limited computational budget.
The resulting hyperparameters are less prone to oversmoothing and attributing variation to observational noise,
as can be seen in \Cref{fig:visual-abstract}, when compared to SVGP.
We demonstrate that our approach is strongly competitive on large-scale data with state-of-the-art variational methods, such as SVGP, without inheriting their pathologies.
As a consequence of our work, one can train GPs on up to $1.8$ million data points in a few hours on a single GPU without adversely impacting uncertainty quantification.

\section{Background}
\label{sec:background}

We aim to learn a latent function mapping from \(\inputspace \subseteq \R^d\) to \(\targetspace
\subseteq \R\) given a training dataset \(\traindata = \begin{pmatrix}\vx_1, \dots, \vx_\numtraindata \end{pmatrix} \in \R^{\numtraindata \times \inputdim}\) of \(\numtraindata\) inputs
\(\vx_j \in \R^d\) and corresponding targets \(\targets = \begin{pmatrix} y_1, \dots, y_\numtraindata \end{pmatrix}^{\top} \in \R^\numtraindata\).

\paragraph{Gaussian Processes}
A \emph{Gaussian process} \(f \sim \gp{\meanfn}{\covfn_\hyperparams}\) is a stochastic process with mean function \(\meanfn\) and kernel \(\covfn_\hyperparams\) such that \(\vf = f(\traindata)
= (f(\vx_1), \dots, f(\vx_n))^\top \sim\gaussian{\vec{\meanfn}}{\mK_\hyperparams}\) is jointly Gaussian with mean \(\vmu_i= \meanfn(\vx_i)\) and
covariance \(\mK_{ij} =\kernel_\hyperparams(\vx_i,\vx_j)\). The kernel \(\covfn_\hyperparams\) depends on hyperparameters \(\hyperparams \in \R^\paramspacedim\), which we omit in our notation.
Assuming \(\vy \mid f(\traindata) \sim \gaussian{f(\traindata)}{\sigma^2 \mI}\), the posterior is a Gaussian process \(\gp{\postmeanfn}{\postcovfn}\) where the mean and covariance functions evaluated at a test input \(\testpoint \in \R^{\inputdim}\) are given by
\begin{equation}
	\label{eqn:gp-posterior}
	\begin{aligned}
		\postmeanfn(f(\testpoint))               & = \meanfn(\testpoint) + \kernel(\testpoint, \mX)\rweights,                                           \\
		\postcovfn(f(\testpoint), f(\testpoint)) & = \kernel(\testpoint, \testpoint) - \kernel(\testpoint, \mX)\shat{\mK}^{-1}\kernel(\mX, \testpoint),
	\end{aligned}
\end{equation}
where \(\shat{\mK} = \mK + \sigma^2 \mI\) and the \emph{representer weights} are defined as \(\rweights = \shat{\mK}^{-1} (\vy - \vmu)\).

In model selection, the computational bottleneck when optimizing kernel hyperparameters \(\vtheta\) is the repeated
evaluation of the \emph{negative \(\log\)-marginal likelihood}
\begin{equation}
	\label{eqn:neg-log-marginal-likelihood}
	\lossnll(\vtheta) = -\log p(\vy \mid \vtheta)  =\tfrac{1}{2} \big(\underbracket[0.1ex]{(\vy-\vmu)^\top \shat{\mK}^{-1} (\vy-\vmu)}_{\text{quadratic loss: promotes data fit}} + \underbracket[0.1ex]{\log\det{\shat{\mK}}}_{\mathclap{\text{model complexity}}}+ \numtraindata \log(2\pi)\big)
\end{equation}
and its gradient.
Computing \eqref{eqn:neg-log-marginal-likelihood} and its gradient via a Cholesky
decomposition has time complexity \(\mathcal{O}(\numtraindata^3)\) and requires \(\mathcal{O}(\numtraindata^2)\) memory, which is prohibitive for large \(\numtraindata\). 

\paragraph{Sparse Gaussian Process Regression (SGPR) \cite{Titsias2009}}
Given a set of $\numinducingpoints \ll \numtraindata$ \emph{inducing points} $\mZ = (\vz_1, \ldots, \vz_m)^\top$
and defining $\vu := f(\mZ) = (f(\vz_1), \ldots, f(\vz_m))^\top$,
SGPR defines a variational approximation to the posterior through the factorization
$p_\star(f(\cdot) \mid \vy) \approx q(f(\cdot)) = \int p( f(\cdot) \mid \vu ) \: q(\vu) d \vu,$
where $q(\vu)$ is an $\numinducingpoints$-dimensional multivariate Gaussian.
The mean and covariance of $q(\vu)$
(denoted as $\vm := \expval[q]{\vu}$, $\mSigma := \cov[q]{\vu}$)
are jointly optimized alongside the kernel hyperparameters $\vtheta$
using the evidence lower bound (ELBO) as an objective:
\begin{gather}
	\vm, \mSigma, \vtheta, \mZ = \textstyle{\argmin_{\vm, \mSigma, \vtheta, \mZ}} \loss^\mathrm{ELBO},
	\\
	\begin{aligned}
		\loss^\mathrm{ELBO}(\vm, \mSigma, \vtheta, \mZ)
		:= & \loss^\mathrm{NLL}(\vtheta) + \dkl{q(\vf)}{p_\star(\vf \mid \vy, \vtheta)}
		\\
		=  & -\expval[q(\vf)]{\log p(\vy \mid \vf)}
		+ \dkl{q(\vu)}{p(\vu)}
		\geq - \log p(\vy \mid \vtheta).
	\end{aligned}
	\label{eqn:elbo}
\end{gather}
The inducing point locations \(\inducingpoints\) can be either optimized as additional parameters during training or chosen a-priori, typically in a data-dependent way \citep[see \eg Sec.~7.2 of][]{Burt2020ConvergenceSparse}.
Following \citet{Titsias2009}, ELBO optimization and posterior inference
both require $\bigO{\numtraindata\numinducingpoints^2}$ computation and $\bigO{\numtraindata\numinducingpoints}$ memory, a significant improvement over the costs of exact GPs.

\paragraph{Stochastic Variational Gaussian Processes (SVGP) \cite{Hensman2013}}
SVGP extends SGPR to reduce complexity further to $\bigO{\numinducingpoints^3}$ computation and $\bigO{\numinducingpoints^2}$ memory.
It accomplishes this reduction by replacing the first term in \cref{eqn:elbo} with an unbiased approximation
\[
	\expval[q(\vf)]{\log p(\vy \mid \vf)}
	=
	\expval[q(\vf)]{\textstyle{\sum_{i=1}^\numtraindata} \log p(y_i \mid f(\vx_i))}
	\approx
	n \expval[q(f(\vx_i))]{\log p(y_i \mid f(\vx_i))}.
\]
Following \citet{Hensman2013}, we optimize $\vm,\mSigma$ alongside $\vtheta,\mZ$ through joint gradient updates.
Because the asymptotic complexities no longer depend on $n$, SVGP can scale to extremely large datasets that would not
be able to fit into computer/GPU memory.

\paragraph{Computation-aware Gaussian Process Inference (CaGP) \cite{Wenger2022PosteriorComputational}}
CaGP\footnote{\citet{Wenger2022PosteriorComputational} named their computation-aware inference algorithm ``IterGP'', to emphasize its iterative nature (see \Cref{alg:itergp}). We adopt the naming ``CaGP'' instead, since if the actions \(\mActions\) are not chosen sequentially, it is more efficient to compute the computation-aware posterior non-iteratively (see \Cref{alg:itergp-batch}).} maps the data \(\targets\) into a lower dimensional subspace defined by its \emph{actions} \(\mActions_\idxiter \in \R^{\numtraindata \times \idxiter}\) on the data, which defines an approximate posterior \(\gp{\mu_\idxiter}{\kernel_\idxiter}\) with
\begin{equation}
	\label{eqn:computation-aware-gp-posterior}
	\begin{aligned}
		\mu_\idxiter(\testpoint)                 & = \mu(\testpoint) + \kernel(\testpoint, \mX)\rweightsapprox_\idxiter                                             \\
		\kernel_\idxiter(\testpoint, \testpoint) & = \kernel(\testpoint, \testpoint) - \kernel(\testpoint, \mX)\kernmatinvapprox_\idxiter \kernel(\mX, \testpoint),
	\end{aligned}
\end{equation}
where \(\kernmatinvapprox_\idxiter = \mActions_\idxiter (\mActions_\idxiter^\top \shat{\kernmat}\mActions_\idxiter)^{-1} \mActions_\idxiter^\top \approx {\kernmat}^{-1}\) is a low-rank approximation of the precision matrix and \(\rweightsapprox_\idxiter = \kernmatinvapprox_\idxiter (\targets - \vmu) \approx \rweights\) approximates the representer weights. Both converge to the corresponding exact quantity as the number of iterations, equivalently the downdate rank, \(\idxiter \to \numtraindata\).
Note that the CaGP posterior only depends on the space spanned by the columns of \(\mActions_\idxiter\), not the actual matrix (\cref{lem:cagp-posterior-uniquely-defined-column-space-actions}).
Finally, the CaGP posterior can be computed in \(\bigO{\numtraindata^2\idxiter}\) time and \(\bigO{\numtraindata\idxiter}\) memory.

CaGP captures the approximation error incurred by limited computational resources as additional uncertainty about the latent function.  More precisely, for any data-generating function \(y \in \rkhs{\covfn^\sigma}\) in the RKHS \(\rkhs{\covfn^\sigma}\) defined by the kernel, the worst-case squared error of the corresponding {approximate} posterior mean \(\meanfn_\idxiter^y\) is equal to the {approximate} predictive variance (see \cref{thm:cagp-worst-case-error}):
\begin{equation}
	\sup\limits_{y \in \rkhs{\covfn^\sigma}, \norm{y}_{\rkhs{\covfn^\sigma}} \leq 1} \left(y(\testpoint) - \meanfn_\idxiter^y(\testpoint)\right)^2 = \covfn_\idxiter(\testpoint, \testpoint) + \sigma^2
\end{equation}
This guarantee is identical to one for the exact GP posterior mean and variance (see \cref{thm:gp-worst-case-error}), except with the \emph{approximate} quantities instead.\footnote{At first glance, SVGP satisfies a similar result (\cref{thm:svgp-worst-case-error}). However, this statement does \emph{not} express the variance in terms of the worst-case squared error to the ``true'' latent function. See \cref{sec:worst-case-error-interpretations-of-variance} for details.}
Additionally, it holds that CaGP's marginal (predictive) variance is always larger or equal to the (predictive) variance of the exact GP and monotonically decreasing, \ie \(\covfn_\idxiter(\testpoint, \testpoint) \geq \covfn_j(\testpoint, \testpoint) \geq \covfn_n(\testpoint, \testpoint)= \postcovfn(\testpoint, \testpoint)\) for \(\idxiter \leq j \leq n\) (see \cref{prop:cagp-variance-decreases-monotonically}).
Therefore, given fixed hyperparameters, \itergp is guaranteed to \emph{never} be overconfident and as the computational budget increases, the precision of its estimate increases.  Here we call such a posterior \emph{computation-aware}, and we will extend the use of this object to model selection.

\section{Model Selection in Computation-Aware Gaussian Processes}
\label{sec:computation-aware-gps}

Model selection for GPs most commonly entails maximizing the evidence \(p(\vy \mid \vtheta)\) as a function of the kernel hyperparameters \(\hyperparams \in \R^\paramspacedim\).
As with posterior inference, evaluating this objective and its gradient is computationally prohibitive in the large-scale setting.
Therefore, our central goal will be to perform model selection for computation-aware Gaussian processes in order to scale to a large number of training data while avoiding the introduction of pathologies via approximation.

We begin by viewing the computation-aware posterior as \emph{exact inference} assuming we can only observe \(\idxiter\) linear projections \(\tilde{\targets}\) of the data defined by (linearly independent) actions \(\mActions_\idxiter \in \R^{\numtraindata \times \idxiter}\), \ie 
\begin{equation}
    \label{eqn:projected-data}
    \smash{\tilde{\targets} \coloneqq {\mActions'_\idxiter}^\tr \targets \in \R^{\idxiter}}, \qquad \text{where}\qquad \smash{\mActions'_\idxiter = \mActions_\idxiter \operatorname{chol}({\mActions_\idxiter}^\tr \mActions_\idxiter)^{-\tr} \in \R^{\numtraindata \times \idxiter}}
\end{equation}
is the action matrix with orthonormalized columns. The corresponding likelihood is given by
\begin{equation}
	\label{eqn:likelihood-itergp}
	\smash{p(\tilde{\targets} \mid f(\traindata)) = \gaussianpdf{\tilde{\targets}}{{\mActions'_\idxiter}^\top f(\traindata)}{\sigma^2 \mI}.}
\end{equation}
As we show in \cref{lem:itergp-modified-observation-model}, the resulting Bayesian posterior is then given by \cref{eqn:computation-aware-gp-posterior}.
Recall that the CaGP posterior only depends on the column space of the actions \(\mActions_\idxiter\) (see \cref{lem:cagp-posterior-uniquely-defined-column-space-actions}), which is why \cref{eqn:computation-aware-gp-posterior} can be written in terms of \(\mActions_\idxiter\) directly rather than using \(\mActions'_\idxiter\).\footnote{Given this observation, we sometimes abuse terminology and refer to the actions as ``projecting'' the data to a lower-dimensional space, although \(\mActions_\idxiter\) does not need to have orthonormal columns.}

\paragraph{Projected-Data Log Marginal Likelihood}
The reinterpretation of the computation-aware posterior as exact Bayesian inference immediately suggests using evidence maximization for the projected data \(\smash{\tilde{\targets} = {\mActions'_\idxiter}^\tr \targets \in \R^{\idxiter}}\) as the model selection criterion, leading to the following loss (see \cref{lem:projected-data-log-marginal-likelihood}):
\begin{equation}
	\label{eqn:projected-log-marginal-likelihood}
	\begin{aligned}
	&\projnllitergp(\hyperparams) = -\log p(\tilde{\targets} \mid \hyperparams)
	= -\log \gaussianpdf{\tilde{\targets}}{ {\mActions'_\idxiter}^\top \vmu}{{\mActions'_\idxiter}^\top \shat{\kernmat} \mActions'_\idxiter}\\[-1em]
	&= \frac{1}{2} \big( \underbracket[0.1ex]{({\targets}-\vmu)^\top\mActions_\idxiter (\mActions_\idxiter^\top\shat{\mK}\mActions_\idxiter)^{-1} \mActions_\idxiter^\top({\targets}-\vmu)}_{\text{quadratic loss: promotes fitting projected data \(\tilde{\targets}\)}} + \underbracket[0.1ex]{\log\det{\mActions_\idxiter^\top\shat{\mK}\mActions_\idxiter} - \overbracket[0.1ex]{\log\det{\mActions_\idxiter^\tr \mActions_\idxiter}}^{\mathclap{\text{penalizes near-colinear actions}}}}_{\mathclap{\text{projected model complexity}}} + \idxiter \log(2\pi) \big)
	\end{aligned}
\end{equation}
\Cref{eqn:projected-log-marginal-likelihood} involves a Gaussian random variable of dimension \(\idxiter \ll \numtraindata\),
and so all previously intractable quantities in \cref{eqn:neg-log-marginal-likelihood}
(\ie the inverse and determinant) are now cheap to compute.
Analogous to the CaGP posterior, we can express the projected-data log marginal likelihood fully in terms of the actions \(\mActions_\idxiter\) without having to orthonormalize, which results in an additional term penalizing colinearity.
Unfortunately, this training loss does not lead to good generalization performance,
as there is only training signal in the \(\idxiter\)-dimensional space spanned
by the actions \(\mActions_\idxiter\) the data are projected onto.
Specifically, the quadratic loss term only promotes fitting the projected data
\(\tilde{\targets}\), not all observations \(\targets\).
See \cref{fig:training-losses,fig:training-losses-predictives} for experimental validation of this claim.

\paragraph{ELBO Training Loss}
Motivated by this observation, we desire a tractable training loss that encourages maximizing the evidence for the \emph{entire set of targets} \(\targets\).  Importantly though, we need to accomplish this evidence maximization without incurring prohibitive \(\bigO{\numtraindata^3}\) computational cost. 
We define a variational objective, using the computation-aware posterior \(q_{\idxiter}(\vf \mid \targets, \hyperparams)\) in \cref{eqn:computation-aware-gp-posterior} to define a variational family \(\mathcal{Q} \coloneq \left\{q_{\idxiter}(\vf) = \gaussianpdf{\vf}{\meanfn_\idxiter(\traindata)}{\covfn_\idxiter(\traindata, \traindata)} \mid \mActions \in \R^{\numtraindata \times \idxiter}\right\}\) parametrized by the action matrix \(\mActions\).
We can then specify a (negative) evidence lower bound (ELBO) as follows:
\begin{equation}
	\label{eqn:elbo-itergp-training-loss}
	\begin{aligned}
	\lossitergp(\hyperparams) 
	&=\underbracket[0.1ex]{\lossnll(\hyperparams)}_{\text{balances data fit and model complexity}} + \underbracket[0.1ex]{\dkl{q_{\idxiter}}{p_\star}}_{\text{regularizes \st \(q_\idxiter \approx p_\star\)}} \geq -\log p(\targets \mid \vtheta).
	\end{aligned}
\end{equation}
This loss promotes learning the same hyperparameters as if we were to maximize the computationally intractable evidence \(\log p(\targets \mid \hyperparams)\) while minimizing the error due to posterior approximation \(q_\idxiter(\vf) \approx p_\star(\vf \mid \targets, \hyperparams)\). In the computation-aware setting, this translates to minimizing computational uncertainty, which captures this inevitable error.

Although both the evidence and KL terms of the ELBO involve computationally intractable terms, these problematic terms cancel out when combined.
This results in an objective that costs the same as evaluating CaGP's predictive distribution, \ie
\begin{equation}
	\label{eqn:elbo-itergp-training-loss-explicit-form}
	\begin{aligned}
		\lossitergp(\hyperparams) & = \frac{1}{2}\bigg(\frac{1}{\sigma^2}\Big(\norm{\targets - \meanfn_\idxiter(\traindata)}_2^2 + \sum_{j=1}^{\numtraindata}\covfn_\idxiter(\vx_j, \vx_j) \Big) + (\numtraindata-\idxiter)\log(\sigma^2)+ \numtraindata\log(2\pi)                              \\
		                          &\, +\tilde{\rweightsapprox}_\idxiter^\tr \mActions^\tr \kernmat \mActions \tilde{\rweightsapprox}_\idxiter - \trace{(\mActions^\top\hat{\kernmat}\mActions)^{-1}\mActions^\top{\kernmat}\mActions} + \log\det{\mActions^\top\hat{\kernmat}\mActions} - \log\det{\mActions^\top \mActions} \bigg)
	\end{aligned}
\end{equation}
where \(\tilde{\rweightsapprox}_\idxiter = (\mActions^\top\hat{\kernmat}\mActions)^{-1}\mActions^\tr(\targets - \vmu)\).
For a derivation of this expression of the loss see \cref{lem:explicit-form-elbo}.
If we compare the training loss \(\lossitergp\) in \cref{eqn:elbo-itergp-training-loss} with the projected-data NLL in \cref{eqn:projected-log-marginal-likelihood}, there is an explicit squared loss penalty on \emph{the entire data} \(\targets\), rather than just for the projected data \(\tilde{\targets}\), resulting in better generalization as \cref{fig:training-losses,fig:training-losses-predictives} show on synthetic data.
In our experiments, this objective was critical to achieving state-of-the-art performance.

\section{Choice of Actions}
\label{sec:policy-choice}

So far we have not yet specified the actions \(\mActions \in \R^{\numtraindata \times \idxiter}\) mapping the data to a lower-dimensional space.
Ideally, we would want to optimally compress the data both for inference and model selection.

\paragraph{Posterior Entropy Minimization}
We can interpret choosing actions \(\mActions\) as a form of active learning, where instead of just observing individual datapoints, we allow ourselves to observe linear combinations of the data \(\targets\). 
Taking an information-theoretic perspective \cite{Houlsby2011Bayesian}, we would then aim to choose actions such that \emph{uncertainty about the latent function is minimized}.
In fact, we show in \cref{lem:info-theory-policy} that given a prior \(f \sim \gp{\meanfn}{\covfn}\) for the latent function and a budget of \(\idxiter\) actions \(\mActions\), the actions that minimize the entropy of the posterior at the training data 
\begin{equation}
	\begin{pmatrix} \vs_1, \dots, \vs_\idxiter \end{pmatrix} = \argmin_{\mActions  \in \R^{n \times \idxiter}} \entropy[p(f(\traindata) \mid \mActions^\top \targets)]{f(\traindata)}
\end{equation}
are the top-\(\idxiter\) eigenvectors \(\vs_1, \dots, \vs_\idxiter\) of \(\shat{\kernmat}\) in descending order of the eigenvalue magnitude (see also \citet{Zhu1997GaussianRegression}).
Unfortunately, computing these actions is just as prohibitive computationally as computing the intractable GP posterior.

\paragraph{(Conjugate) Gradient / Residual Actions}
Due to the intractability of choosing actions to minimize posterior entropy, we could try to do so approximately.
The Lanczos algorithm \cite{Lanczos1950IterationMethod} is an iterative method to approximate the eigenvalues and eigenvectors of symmetric positive-definite matrices. Given an appropriate seed vector, the space spanned by its approximate eigenvectors is equivalent to the span of the gradients / residuals 
\(\vr_\idxiter = (\targets - \vmu) - \shat{\kernmat} \rweightsapprox_{\idxiter-1}\) of the method of Conjugate Gradients (CG) \cite{Hestenes1952MethodsConjugate} when used to iteratively compute an approximation \(\rweightsapprox_\idxiter \approx \rweights = \shat{\kernmat}^{-1}(\targets - \vmu)\) to the representer weights \(\rweights\). We show in \cref{lem:cagp-posterior-uniquely-defined-column-space-actions} that the CaGP posterior only depends on the span of its actions. 
Therefore choosing approximate eigenvectors computed via the Lanczos process as actions is equivalent to using CG residuals.
This allows us to reinterpret CaGP-CG, as introduced by \citet{Wenger2022PosteriorComputational}, as approximately minimizing posterior entropy.\footnote{\citet[Sec.~2.1 of][]{Wenger2022PosteriorComputational} showed that CaGP using CG residuals as actions recovers Conjugate-Gradient-based GPs (CGGP) \cite{Gardner2018GPyTorchBlackbox,Wang2019,Wenger2022PreconditioningScalable} in its posterior mean, extending this observation to CGGP.} See \Cref{suppsec:cg-policy} for details. 
As \cref{fig:policy-illustration} illustrates, CG actions are similar to the top-\(\idxiter\) eigenspace all throughout hyperparameter optimization.
However, this choice of actions focuses exclusively on posterior inference and incurs quadratic time complexity \(\bigO{\numtraindata^2\idxiter}\).

\paragraph{Learned Sparse Actions}
So far in our action choices we have entirely ignored model selection and tried to choose optimal actions assuming \emph{fixed} kernel hyperparameters.
The second contribution of this work, aside from demonstrating how to perform model selection, is recognizing that the actions should be informed by the outer optimization loop for the hyperparameters.
We thus optimize the actions alongside the hyperparameters \emph{end-to-end}, meaning the training loss for model selection defines what data projections are informative.
This way the actions are adaptive to the hyperparameters without spending unnecessary budget on computing approximately optimal actions for the current choice of hyperparameters.
Specifically, the actions are chosen by optimizing \(\lossitergp\) as a function of the hyperparameters \(\hyperparams\) and the actions \(\mActions\), \st
\begin{equation}
  (\hyperparams_\star, \mActions_\idxiter) = {\textstyle \argmin_{(\hyperparams, \mActions)}} \lossitergp(\hyperparams, \mActions).
\end{equation}
\begin{wrapfigure}{5}{0.33\textwidth}
	\begin{equation}
		\label{eqn:actions-sparsity-pattern}
		\hspace{-1em}\mActions = \begin{bmatrix}\vs'_1 & 0 & \cdots & 0 \\ 0 & \vs'_2 & & 0 \\\vdots & & \ddots & \vdots\\ 0 & \cdots & 0 & \vs'_\idxiter\end{bmatrix}
	\end{equation}
\end{wrapfigure}

Naively this approach introduces an \(\numtraindata \times \idxiter\) dimensional optimization problem, which in general is computationally prohibitive.
To keep the computational cost low and to optimally leverage hardware acceleration via GPUs, we impose a sparse block structure on the actions (see Eq.~\ref{eqn:actions-sparsity-pattern}) where each block is a column vector \(\vs'_j \in \R^{\nnzactions \times 1}\) with \(\nnzactions = \numtraindata / \idxiter\) entries such that the total number of trainable action parameters, \ie non-zero entries \(\nnz(\mActions) = \nnzactions \cdot \idxiter = \numtraindata\), equals the number of training data.
Due to the sparsity, these actions cannot perfectly match the maximum eigenvector actions.
Nevertheless, we see in \Cref{fig:policy-illustration} that optimizing these sparse actions in conjunction with hyperparameter optimization
produces a nontrivial alignment with optimal action choice minimizing posterior entropy.
Importantly, the sparsity constraint not only reduces the dimensionality of the optimization problem, but crucially also reduces the time complexity of posterior inference and model selection to \emph{linear} in the number of training data points when parallelizing the kernel matrix-matrix products with \(\mActions\) column-wise.

\begin{figure*}
	\centering
	\begin{subfigure}[t]{0.66\textwidth}
		\centering
		\includegraphics[width=\textwidth]{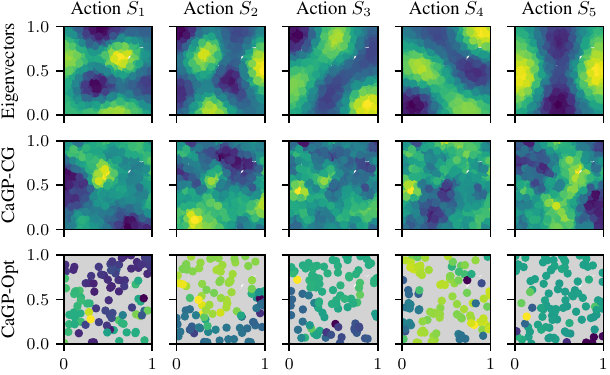}
	\end{subfigure}%
	\hfill
	\begin{subfigure}[t]{0.30\textwidth}
		\centering
		\includegraphics[width=\textwidth]{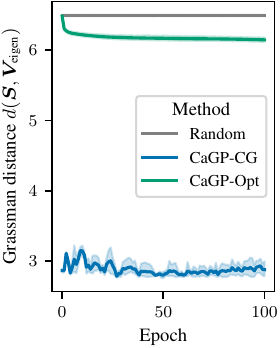}
	\end{subfigure}%
  \caption{
    \emph{Visualization of action vectors defining the data projection.}
    We perform model selection using two CaGP variants, with CG and learned sparse actions---%
    denoted as {CaGP-CG}, and {CaGP-Opt}---%
    on a toy 2-dimensional dataset.
    {Left:} For each $\vx_j \in \{\vx_1, \ldots, \vx_n\}$, we plot the magnitude of the entries of the top-5 eigenvectors of \(\shat{\kernmat}\) and of the first five action vectors. Yellow denotes larger magnitudes; blue denotes smaller magnitudes.
    {Right:} We compare the span of the actions \(\mActions\) against the top-\(\idxiter\) eigenspace throughout training by measuring the Grassman distance between the two subspaces (see also \cref{sec:subspace-distance}).
    CaGP-CG actions are closer to the kernel eigenvectors than the CaGP-Opt actions,
    both of which are more closely aligned than randomly chosen actions.
  }
	\label{fig:policy-illustration}
\end{figure*}

\subsection{Algorithms and Computational Complexity}

Algorithms both for iteratively constructed dense actions (\cref{alg:itergp}), as used in
CaGP-CG, and for sparse batch actions (\cref{alg:itergp-batch}), as used for CaGP-Opt, can be found in the
supplementary material.%
\footnote{For a detailed analysis see \cref{alg:itergp,alg:itergp-batch} in the supplementary material, which
contain line-by-line time complexity and memory analyses.}
Their time complexity is
\(\bigO{\numtraindata^2 \idxiter}\) for dense actions and
\(\bigO{\numtraindata \idxiter \max(\idxiter, \nnzactions)}\) for sparse actions,
where \(\nnzactions\) is the maximum number of non-zero entries per column of \(\mActions_\idxiter\).
For batch actions, matrix-matrix products with \(\mActions_\idxiter\) are parallelizable column-wise reducing complexity by a factor \(\idxiter\).
Both algorithms have the same linear memory
requirement: \(\bigO{\numtraindata \idxiter}\).
Since the training loss \(\lossitergp\) only involves terms
that are also present in the posterior predictive,
both model selection and predictions incur the same complexity.

\subsection{Related Work}

\textbf{Computational Uncertainty and Probabilistic Numerics}\quad
All CaGP variants discussed in this paper fall into the category of probabilistic numerical methods \citep{Hennig2015a,Cockayne2019BayesianProbabilistic,Hennig2022ProbabilisticNumerics}, which aim to quantify approximation error arising from limited computational resources via additional uncertainty about the quantity of interest \citep[\eg][]{Pfortner2023PhysicsInformedGaussian,Tatzel2024AcceleratingGeneralized,Pfoertner2024CAKF,Hegde2024CalibratedComputationaware}.
Specifically, the iterative formulation of CaGP (\ie \cref{alg:itergp}) originally proposed by \citet{Wenger2022PosteriorComputational} employs a probabilistic linear solver \cite{Hennig2015ProbabilisticInterpretation,Cockayne2019BayesianConjugate,Bartels2019ProbabilisticLinear,Wenger2020ProbabilisticLinear}.

\textbf{Scalable GPs with Lower-Bounded Log Marginal Likelihoods}\quad
Numerous scalable GP approximations beyond those in \Cref{sec:intro,sec:background} exist;
see \citet{Liu2020Gaussian} for a comprehensive review.
Many GP models \cite[e.g.,][]{Titsias2009,Hensman2013,Hensman2015Scalable,Salimbeni2018Orthogonally,WU2022Variational} learn hyperparameters through maximizing variational lower bounds in the same spirit as SGPR, SVGP and our method.
Similar to our work, interdomain inducing point methods \citep{Hensman2018Variational,Dutordoir2020Sparse,Van2020Framework}
learn a variational posterior approximation
on a small set of linear functionals applied to the latent GP.
However, unlike our method, their resulting approximate posterior is usually prone to underestimating uncertainty in the same manner as SGPR and SVGP.
Finally, similar to our proposed training loss for CaGP-CG, \citet{Artemev2021TighterBounds} demonstrate how one can use the method of conjugate gradients to obtain a tighter lower bound on the log marginal likelihood.

\textbf{GP Approximation Biases and Computational Uncertainty}\quad
Scalable GP methods inevitably introduce approximation error and thus yield biased hyperparameters and predictive distributions,
with an exception of \citet{Potapczynski2021BiasFreeScalable} which trade bias for increased variance.
Numerous works have studied pathologies associated with optimizing variational lower bounds,
especially in the context of SVGP \citep{Turner2011TwoProblems,Bauer2016UnderstandingProbabilistic,Potapczynski2021BiasFreeScalable,Wang2018BatchedLargescale}, and various remedies have been proposed.
In order to mitigate biases from approximation, several works alternatively propose replacing variational lower bounds with alternative model selection objectives,
including leave-one-out cross-validation \citep{Jankowiak2022ScalableCross}
and losses that directly target predictive performance \citep{Jankowiak2020Parametric,Wei2021Direct}.

\section{Experiments}
\label{sec:experiments}

\begin{table*}[t]
	\caption{
    \emph{Generalization error (NLL, RMSE, and wall-clock time) on UCI benchmark datasets.}
    The table shows the best results for all methods across learning rate sweeps, averaged across five random seeds.
    We report the epoch where each method obtained the lowest average test NLL,
    and all performance metrics (NLL, RMSE, and wall-clock runtime) are reported for this epoch.
    Highlighted in bold and color are the best \emph{approximate} methods per metric (difference \(>1\) standard deviation).
	}
	\label{tab:generalization-experiment}
	\small
	\centering
	\renewrobustcmd{\bfseries}{\fontseries{b}\selectfont}
	\sisetup{detect-weight=true,detect-inline-weight=math}
	\resizebox{\textwidth}{!}{%
		\begin{tabular}{lrrllrrrrrrr}
    \toprule
    \multirow{2}{*}{Dataset} & \multirow{2}{*}{$n$} & \multirow{2}{*}{$d$} & \multirow{2}{*}{Method}& \multirow{2}{*}{Optim.} & \multirow{2}{*}{LR} & \multirow{2}{*}{Epoch} &
    \multicolumn{2}{c}{Test NLL $\downarrow$} & \multicolumn{2}{c}{Test RMSE $\downarrow$} &
    \multirow{2}{*}{Avg. Runtime $\downarrow$} \\
    \cmidrule(lr){8-9} \cmidrule(lr){10-11}
    &  &  &  &  &  & & mean & std & mean & std & \\
    \midrule
\multirow[t]{7}{*}{Parkinsons \citep{Tsanas2009ParkinsonsTelemonitoring}} & \multirow[t]{7}{*}{5\,288} & \multirow[t]{7}{*}{21} & \textcolor{CholeskyGP}{CholeskyGP} & \textcolor{CholeskyGP}{LBFGS} & \textcolor{CholeskyGP}{0.100} & \textcolor{CholeskyGP}{100}
& \textcolor{CholeskyGP}{-3.645} & \textcolor{CholeskyGP}{0.002} & \textcolor{CholeskyGP}{0.001} & \textcolor{CholeskyGP}{0.000} &  \textcolor{CholeskyGP}{\phantom{0}1min \phantom{0}3s}\\
 &  &  & \multirow[t]{2}{*}{\textcolor{SGPR}{SGPR}} & Adam & 0.100 & 268 & -2.837 & 0.087 & 0.031 & 0.022 & 27s\\
 &  &  &  & LBFGS & 1.000 & 100 & -3.245 & 0.067 & 0.007 & 0.003 & 2min 14s\\
 &  &  & \textcolor{SVGP}{SVGP} & Adam & 0.100 & 1000 & -2.858 & 0.016 & 0.006 & 0.002 & 2min 25s\\ 
 &  &  & \textcolor{CGGP}{CGGP} & LBFGS & 0.100 & 81 & -2.663 & 0.141 & 0.019 & 0.013 & 1min 12s\\
 &  &  & \textcolor{CaGP-CG}{CaGP-CG} & Adam & 1.000 & 250 & -2.936 & 0.007 & 0.009 & 0.006 & 1min 44s\\
 &  &  & \multirow[t]{2}{*}{\textcolor{CaGP-Opt}{CaGP-Opt}} & Adam & 1.000 & 956 & -3.384 & 0.005 & 0.004 & 0.002 & 1min 27s\\
 &  &  &  & LBFGS & 0.010 & 37 & \textcolor{CaGP-Opt}{\bfseries-3.449} & 0.009 & \textcolor{CaGP-Opt}{\bfseries0.002} & 0.000 & 1min 53s\\
\arrayrulecolor{lightgray}\cmidrule(lr){1-12}
\multirow[t]{6}{*}{Bike \citep{Fanaee2013BikeSharing}} & \multirow[t]{6}{*}{15\,642} & \multirow[t]{6}{*}{16} & \multirow[t]{2}{*}{\textcolor{CholeskyGP}{CholeskyGP}} & \textcolor{CholeskyGP}{LBFGS} & \textcolor{CholeskyGP}{0.100} & \textcolor{CholeskyGP}{100} & \textcolor{CholeskyGP}{-3.472} & \textcolor{CholeskyGP}{0.012} & \textcolor{CholeskyGP}{0.006} & \textcolor{CholeskyGP}{0.007} & \textcolor{CholeskyGP}{7min 15s}\\
 &  &  & \textcolor{SGPR}{SGPR} & Adam & 0.100 & 948 & -2.121 & 0.110 & 0.026 & 0.004 & 4min \phantom{0}3s\\
 &  &  &  & LBFGS & 1.000 & 100 & \textcolor{SGPR}{\bfseries-3.017} & 0.022 & \textcolor{SGPR}{\bfseries0.009} & 0.002 & 4min 10s\\
 &  &  & \textcolor{SVGP}{SVGP} & Adam & 0.010 & 1000 & -2.256 & 0.020 & 0.020 & 0.002 & 6min 41s\\
 &  &  & \textcolor{CGGP}{CGGP} & LBFGS & 1.000 & 15 & -1.952 & 0.078 & 0.024 & 0.004 & 2min \phantom{0}6s\\
 &  &  & \textcolor{CaGP-CG}{CaGP-CG} & Adam & 1.000 & 250 & -2.042 & 0.024 & 0.024 & 0.002 & 5min 17s\\
 &  &  & \multirow[t]{2}{*}{\textcolor{CaGP-Opt}{CaGP-Opt}} & Adam & 1.000 & 1000 & -2.401 & 0.037 & 0.018 & 0.002 & 8min 10s\\
 &  &  &  & LBFGS & 1.000 & 100 & -2.438 & 0.038 & 0.017 & 0.001 & 14min 48s\\
 \arrayrulecolor{lightgray}\cmidrule(lr){1-12}
 \multirow[t]{6}{*}{Protein \citep{Rana2013PhysiochemicalProperties}} & \multirow[t]{6}{*}{41\,157} & \multirow[t]{6}{*}{9} & \multirow[t]{2}{*}{\textcolor{SGPR}{SGPR}} & Adam & 0.100 & 993 & 0.844 & 0.006 & 0.561 & 0.005 & 10min 25s\\
 &  &  &  & LBFGS & 0.100 & 96 & 0.846 & 0.006 & 0.562 & 0.005 & 6min 56s\\
 &  &  & \textcolor{SVGP}{SVGP} & Adam & 0.010 & 996 & 0.851 & 0.006 & 0.564 & 0.005 & 17min 19s\\
 &  &  & \textcolor{CGGP}{CGGP} & LBFGS & 0.100 & 35 & 0.853 & 0.006 & \textcolor{CGGP}{\bfseries0.517} & 0.004 & 20min 15s\\
 &  &  & \textcolor{CaGP-CG}{CaGP-CG} & Adam & 1.000 & 27 & \textcolor{CaGP-CG}{\bfseries0.820} & 0.006 & 0.542 & 0.004 & 1min 26s\\
 &  &  & \multirow[t]{2}{*}{\textcolor{CaGP-Opt}{CaGP-Opt}} & Adam & 0.100 & 941 & 0.829 & 0.005 & 0.545 & 0.004 & 13min 48s\\
 &  &  &  & LBFGS & 1.000 & 84 & 0.830 & 0.005 & 0.545 & 0.004 & 14min 29s\\
 \arrayrulecolor{lightgray}\cmidrule(lr){1-12}
 \multirow[t]{6}{*}{KEGGu \citep{Naeem2011KEGG}} & \multirow[t]{6}{*}{57\,248} & \multirow[t]{6}{*}{26} &  \multirow[t]{2}{*}{\textcolor{SGPR}{SGPR}} & Adam & 0.100 & 143 & -0.681 & 0.025 & 0.123 & 0.002 & 2min \phantom{0}4s\\
 &  &  &  & LBFGS & 1.000 & 100 & \textcolor{SGPR}{\bfseries-0.712} & 0.028 & \textcolor{SGPR}{\bfseries0.118} & 0.003 & 8min 58s\\
 &  &  & \textcolor{SVGP}{SVGP} & Adam & 0.010 & 988 & \textcolor{SVGP}{\bfseries-0.710} & 0.026 & \textcolor{SVGP}{\bfseries0.118} & 0.003 & 24min 21s\\
 &  &  &  \textcolor{CGGP}{CGGP} & LBFGS & 0.100 & 30 & -0.512 & 0.034 & \textcolor{CGGP}{\bfseries0.120} & 0.003 & 33min 55s \\
 &  &  & \textcolor{CaGP-CG}{CaGP-CG} & Adam & 1.000 & 229 & \textcolor{CaGP-CG}{\bfseries-0.699} & 0.026 & \textcolor{CaGP-CG}{\bfseries0.120} & 0.003 & 39min \phantom{0}5s\\
 &  &  & \multirow[t]{2}{*}{\textcolor{CaGP-Opt}{CaGP-Opt}} & Adam & 1.000 & 990 & \textcolor{CaGP-Opt}{\bfseries-0.693} & 0.026 & \textcolor{CaGP-Opt}{\bfseries0.120} & 0.003 & 22min \phantom{0}3s\\
 &  &  &  & LBFGS & 0.010 & 40 & \textcolor{CaGP-Opt}{\bfseries-0.694} & 0.026 & \textcolor{CaGP-Opt}{\bfseries0.120} & 0.003 & 22min \phantom{0}0s\\
 \arrayrulecolor{lightgray}\cmidrule(lr){1-12}
 \multirow[t]{2}{*}{Road \citep{Kaul2013RoadNetwork}} & \multirow[t]{2}{*}{391\,387} & \multirow[t]{2}{*}{2} & \textcolor{SVGP}{SVGP} & Adam & 0.001 & 998 & 0.149 & 0.007 & 0.277 & 0.002 & 2h \phantom{0}7min 37s\\
 &  &  & \textcolor{CaGP-Opt}{CaGP-Opt} & Adam & 0.100 & 1000 & \textcolor{CaGP-Opt}{\bfseries-0.291} & 0.011 & \textcolor{CaGP-Opt}{\bfseries0.159} & 0.003 & 2h 11min 31s\\
 \arrayrulecolor{lightgray}\cmidrule(lr){1-12}
 \multirow[t]{2}{*}{Power \citep{Hebrail2006IndividualHousehold}} & \multirow[t]{2}{*}{1\,844\,352} & \multirow[t]{2}{*}{7} & \textcolor{SVGP}{SVGP} & Adam & 0.010 & 399 & \textcolor{SVGP}{\bfseries-2.104} & 0.007 & \textcolor{SVGP}{\bfseries0.029} & 0.000 & 5h \phantom{0}7min 57s\\
 &  &  & \textcolor{CaGP-Opt}{CaGP-Opt} & Adam & 0.100 & 200 & \textcolor{CaGP-Opt}{\bfseries-2.103} & 0.006 & 0.030 & 0.000 & 4h 32min 48s\\
\bottomrule
\end{tabular}

	}
\end{table*}

We benchmark the generalization of computation-aware GPs with two different action choices, \textcolor{CaGP-Opt}{CaGP-Opt} (ours) and \textcolor{CaGP-CG}{CaGP-CG} \cite{Wenger2022PosteriorComputational}, using our proposed training objective in \cref{eqn:elbo-itergp-training-loss}
on a range of UCI datasets for regression \cite{Kelly2017UCIMachine}.
We compare against \textcolor{SVGP}{SVGP} \cite{Hensman2013}, often considered to be state-of-the-art for large-scale GP regression.
Per recommendations by \citet{Ober2022RecommendationsBaselines}, we also include \textcolor{SGPR}{SGPR} \cite{Titsias2009} as a strong baseline for all datasets where this is computationally feasible.
We also train Conjugate Gradient-based GPs (\textcolor{CGGP}{CGGP}) \citep[e.g.][]{Gardner2018GPyTorchBlackbox,Wang2019,Wenger2022PreconditioningScalable} using the training procedure proposed by \citet{Wenger2022PreconditioningScalable}. 
Note that CaGP-CG recovers CGGP in its posterior mean and produces nearly identical predictive error at half the computational cost for inference \cite[Sec.~2.1 \& 4 of][]{Wenger2022PosteriorComputational}, which is why the main difference between CaGP-CG and CGGP in our experiments is the training objective.
Finally, we also train an exact \textcolor{CholeskyGP}{CholeskyGP} on the smallest datasets, where this is still feasible.

\paragraph{Experimental Details}
All datasets were randomly partitioned into train and test sets using a \((0.9, 0.1)\) split for five random seeds.
We used a zero-mean GP prior and a Mat\'ern(\(3/2\)) kernel with an outputscale \(\outputscale^2\) and one lengthscale per input dimension \(\lengthscale_j^2\), as well as a scalar observation noise for the likelihood \(\smash{\sigma^2}\), \st \(\smash{\hyperparams = (\outputscale, \lengthscale_1, \dots, \lengthscale_\inputdim, \noisescale) \in \R^{\inputdim + 2}}\).
%
We used the existing implementations of SGPR, SVGP and CGGP in \texttt{GPyTorch} \cite{Gardner2018GPyTorchBlackbox} and also implemented CaGP in this framework (see \Cref{sec:implementation} for our open-source implementation).
For SGPR and SVGP we used \(\numinducingpoints=1024\) inducing points and for CGGP, CaGP-CG and CaGP-Opt we chose $\idxiter=512$. 
We optimized the hyperparameters \(\hyperparams\) either with Adam \cite{Kingma2015AdamMethod} for a maximum of 1000 epochs in \texttt{float32} or with LBFGS \cite{Nocedal1980UpdatingQuasiNewton} for 100 epochs in \texttt{float64}, depending on the method and problem scale.
On the largest dataset ``Power'', we used 400 epochs for SVGP and 200 for CaGP-Opt due to resource constraints.
For SVGP we used a batch size of 1024 throughout.
We scheduled the learning rate via \texttt{PyTorch}'s \cite{Paszke2019PyTorchImperative} \texttt{LinearLR(end\_factor=0.1)} scheduler for all methods and performed a hyperparameter sweep for the (initial) learning rate. 
All experiments were run on an NVIDIA Tesla V100-PCIE-32GB GPU, except for ``Power'', where we used an NVIDIA A100 80GB PCIe GPU to have sufficient memory for CaGP-Opt with \(\idxiter=512\).
Our exact experiment configuration can be found in \Cref{tab:experiment-configuration}.

\paragraph{Evaluation Metrics}
We evaluate the generalization performance once per epoch on the test set by computing the (average) negative log-likelihood (NLL) and the root mean squared error (RMSE), as well as recording the wallclock runtime.
Runtime is measured at the epoch with the best average performance across random seeds.

\begin{figure}
	\includegraphics[width=\textwidth]{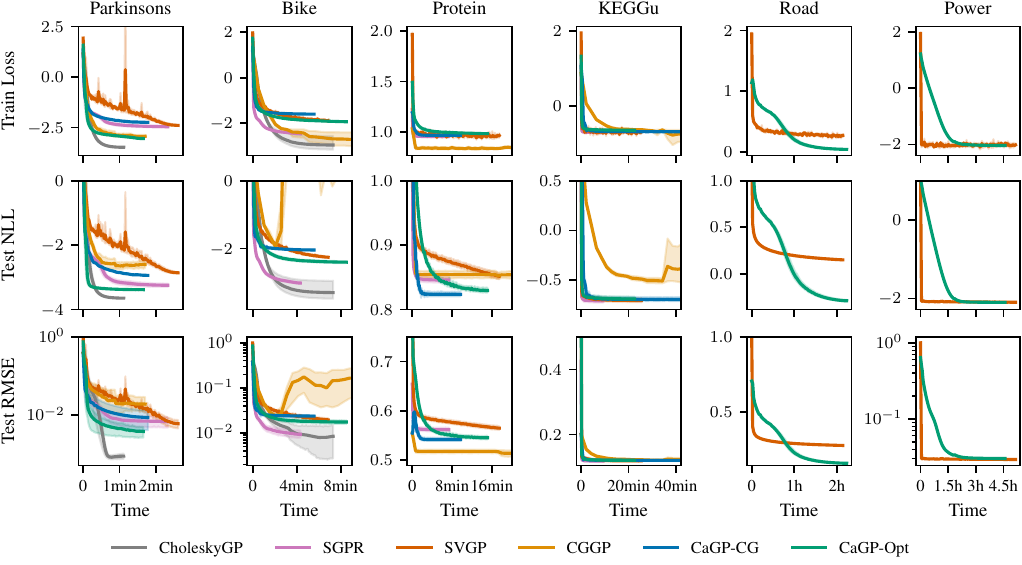}
	\caption{
    \emph{Learning curves of GP approximation methods on UCI benchmark datasets.}
    Rows show train and test loss
    as a function of wall-clock time for the best choice of learning rate per method.
    CaGP-Opt generally displays a ``ramp-up'' phase early in training where
    performance is worse than that of SVGP.
    As training progresses, CaGP-Opt matches or surpasses SVGP performance.
  }
	\label{fig:generalization-experiment}
\end{figure}

\paragraph{CaGP-Opt Matches or Outperforms SVGP}
\Cref{tab:generalization-experiment} and \Cref{fig:generalization-experiment} show generalization performance of all methods for the best choice of learning rate.
In terms of both NLL and RMSE, CaGP-Opt outperforms or matches the variational baselines SGPR and SVGP at comparable runtime (except on ``Bike'').
SGPR remains competitive for smaller datasets; however, it does not scale to the largest datasets.
There are some datasets and metrics in which specific methods dominate, for example on ``Bike'' SGPR outperforms all other approximations, while on ``Protein'' methods based on CG, i.e. CGGP and CaGP-CG, perform the best.
However, CaGP-Opt consistently performs either best or second best and scales to over a million datapoints.
These results are quite remarkable for numerous reasons.
First, CaGP is comparable in runtime to SVGP on large datasets
despite the fact that it incurs a linear-time computational complexity while SVGP is constant time.\footnote{
	While SVGP is arguably linear time since it will eventually loop through all training data,
	each computation of the ELBO uses a constant time stochastic approximation.
}
Second, while all of the methods we compare approximate the GP posterior with low-rank updates,
CaGP-Opt (with $i=512$) here uses half the rank of SGPR/SVGP $m=1024$.
Nevertheless, CaGP-Opt is able to substantially outperform SVGP even on spatial datasets like 3DRoad
where low-rank posterior updates are often poor \citep{Pleiss2020}.
These results suggest that CaGP-Opt can be a more efficient approximation than inducing point methods,
and that low-rank GP approximations may be more applicable than previously thought \citep{Datta2016,Katzfuss2021}.
\Cref{fig:generalization-experiment} shows the NLL and RMSE learning curves for the best choice of learning rate per method.
CaGP-Opt often shows a ``ramp-up'' phase, compared to SVGP, but then improves or matches its generalization performance.
This gap is particularly large on ``Road'',
where CaGP-Opt is initially worse than SVGP
but dominates in the second half of training.
\begin{wrapfigure}{R}{0.47\textwidth}
    \vspace{-1.5em}
	\centering
	\includegraphics[width=0.46\textwidth]{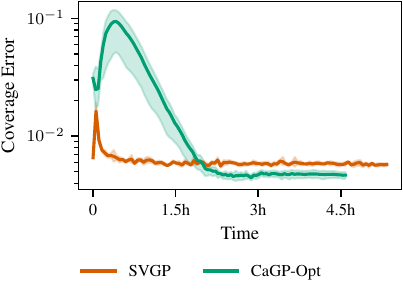}
	\caption{
    \emph{Uncertainty quantification for CaGP-Opt and SVGP.}
    Difference between the desired coverage ($95\%$)
    and the empirical coverage of the GP $95\%$ credible interval
    on the ``Power'' dataset.
	After training, CaGP-Opt has better empirical coverage than SVGP.
	}
    \vspace{-0.5em}
	\label{fig:coverage}
\end{wrapfigure}

\paragraph{SVGP Overestimates Observation Noise and (Often) Lengthscale}
In \Cref{fig:generalization-experiment-hyperparameters}
we show that SVGP typically learns larger observation noise than other methods 
as suggested by previous work \citep{Bauer2016UnderstandingProbabilistic,Jankowiak2020Parametric}
and hinted at by observations on synthetic data in \Cref{fig:visual-abstract} and \Cref{subfig:svgp-overconfidence-at-inducing-points}.  
Additionally on larger datasets SVGP also often learns large lengthscales, which in combination 
with a large observation noise can lead to an oversmoothing effect.
In contrast, CaGP-Opt generally learns lower observational noise than SVGP.
Of course, learning a small observation noise, in particular, is important for achieving low RMSE and thus also NLL,
and points to why we should expect CaGP-Opt to outperform SVGP.
These hyperparameter results suggest that CaGP-Opt interprets more of the data as signal,
while SVGP interprets more of the data as noise.
\vspace{0.5em}

\paragraph{CaGP Improves Uncertainty Quantification Over SVGP}
A key advantage of CaGP-Opt and CaGP-CG is that their posterior uncertainty estimates
capture both the uncertainty due to limited data and due to limited computation.
To that end, we assess the frequentist coverage of CaGP-Opt's uncertainty estimates.
We report the absolute difference between a desired coverage percentage \(\alpha\)
and the fraction of data that fall into the \(\alpha\) credible interval of the CaGP-Opt posterior;
\ie \(\varepsilon_{\textrm{coverage}}^{\alpha} = \abs{\alpha - \frac{1}{n_{\mathrm{test}}}\sum_{i=1}^{n_\mathrm{test}} \indicatorfn(y_i \in I^\alpha_{q(\vx_i)})}\).
\cref{fig:coverage} compares the $95\%$ coverage error for both CaGP-Opt and SVGP on the largest dataset (``Power'').
From this plot, we see that the CaGP credible intervals are more aligned with the desired coverage.
We hypothesize that these results reflect the different uncertainty properties of the methods:
CaGP-Opt overestimates posterior uncertainty while SVGP is prone towards overconfidence.

\section{Conclusion}

In this work, we introduced strategies for model selection
and posterior inference for computation-aware Gaussian processes, which 
scale linearly with the number of training data rather than quadratically
assuming parallel computing hardware.
The key technical innovations being a sparse projection of the data,
which balances minimizing posterior entropy and computational cost,
and a scalable way to optimize kernel hyperparameters, both of which are 
amenable to GPU acceleration.
All together, these advances enable competitive or
improved performance over previous approximate inference methods on large-scale
datasets, in terms of generalization and uncertainty quantification.
Remarkably, our method outperforms SVGP---often considered the de-facto GP approximation standard---
even when compressing the data into a space with half the dimension of the variational parameters.
Finally, we also demonstrate that computation-aware GPs avoid many of the pathologies often 
observed in inducing point methods, such as overconfidence and oversmoothing.

\textbf{Limitations}\quad
While CaGP-Opt obtains the same linear time and memory costs as SGPR,
it is not amenable to stochastic minibatching and thus cannot achieve the constant time/memory capabilities of SVGP.
In practice, this asymptotic difference does not result in substantially different wall clock times,
as SVGP requires many more optimizer steps than CaGP-Opt due to batching.
(Indeed, on many datasets we find that CaGP-Opt is faster.)
CaGP-Opt nevertheless requires larger GPUs than SVGP on datasets with more than a million data points.
Moreover, tuning CaGP-Opt requires choosing the appropriate number of actions (i.e. the rank of the approximate posterior update),
though we note that most scalable GP approximations have a similar tunable parameter (\eg number of inducing points).
Perhaps the most obvious limitation is that CaGP, unlike SVGP,
is limited to GP regression with a conjugate observational noise model.
We leave extensions to classification and other non-conjugate likelihoods as future work.

\textbf{Outlook and Future Work}\quad
An immediate consequence of this work is the ability to apply
computation-aware Gaussian processes to ``real-world'' problems,
as our approach solves CaGP's open problems of model selection and scalability.
Looking forward, an exciting future vision is a general framework for problems 
involving a Gaussian process model with a downstream task
where the actions are chosen optimally, given resource constraints, to solve said task. 
Future work will pursue this direction beyond Gaussian likelihoods to non-conjugate models
and downstream tasks such as Bayesian optimization.

\newpage

\begin{ack}
    JW and JPC are supported by the Gatsby Charitable Foundation (GAT3708), the Simons Foundation (542963), the NSF AI Institute for Artificial and Natural Intelligence (ARNI: NSF DBI 2229929) and the Kavli Foundation.
    JG and KW are supported by the NSF (IIS-2145644, DBI-2400135).
    PH gratefully acknowledges co-funding by the European Union (ERC, ANUBIS, 101123955). Views and opinions expressed are however those of the author(s) only and do not necessarily reflect those of the European Union or the European Research Council. Neither the European Union nor the granting authority can be held responsible for them. PH is a member of the Machine Learning Cluster of Excellence, funded by the Deutsche Forschungsgemeinschaft (DFG, German Research Foundation) under Germany's Excellence Strategy – EXC number 2064/1 – Project number 390727645; he also gratefully acknowledges the German Federal Ministry of Education and Research (BMBF) through the Tübingen AI Center (FKZ: 01IS18039A); and funds from the Ministry of Science, Research and Arts of the State of Baden-Württemberg.
    GP acknowledges support from NSERC and the Canada CIFAR AI Chair program.
\end{ack}

{
	\small
	\printbibliography
}

\clearpage

\beginsupplementary
\startcontents[supplementary]

\section*{Supplementary Material}

This supplementary material contains additional results and in particular proofs
for all theoretical statements. References referring to sections, equations or theorem-type
environments within this
document are prefixed with `S', while references to, or results from, the main paper are stated as
is.

	{\small
		\vspace{2em}
		\printcontents[supplementary]{}{1}{}
		\vspace{2em}
	}

    \section{Theoretical Results}

    \subsection{Alternative Derivation of CaGP Posterior}
    
    \begin{lemma}[CaGP Inference as Exact Inference Given a Modified Observation Model]
        \label{lem:itergp-modified-observation-model}
        Given a Gaussian process prior \(f \sim \gp{\meanfn}{\covfn}\) and training data \((\mX, \vy)\) the computation-aware GP posterior \(\gp{\meanfn_\idxiter}{\covfn_\idxiter}\) (see \cref{eqn:computation-aware-gp-posterior}) with linearly independent and fixed actions \(\mActions\)  is equivalent to an exact batch GP posterior \((f \mid \tilde{\targets})\) given data \(\smash{\tilde{\targets} = {\mActions'}^\tr\targets}\) observed according to the likelihood \(\smash{\tilde{\targets} \mid f(\mX) \sim \gaussian{{\mActions'}^\tr f(\mX)}{\sigma^2 \mI}}\), where \(\mActions' = \mActions \operatorname{chol}(\mActions^\tr \mActions)^{-\tr}\).
    \end{lemma}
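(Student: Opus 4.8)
The plan is to recognize the claimed observation model as an ordinary finite-dimensional linear-Gaussian likelihood over the latent process, compute the resulting exact posterior by the standard Gaussian conditioning formula, and finally verify that the Cholesky orthonormalization cancels, so that the result coincides with \cref{eqn:computation-aware-gp-posterior}.

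First I would justify the observation model itself. Writing \(\targets = f(\traindata) + \boldsymbol{\varepsilon}\) with \(\boldsymbol{\varepsilon} \sim \gaussian{\vzero}{\noisescale^2 \mId}\), the projected data satisfies \(\tilde{\targets} = {\mActions'}^\tr \targets = {\mActions'}^\tr f(\traindata) + {\mActions'}^\tr \boldsymbol{\varepsilon}\). The projected noise has covariance \(\noisescale^2 {\mActions'}^\tr \mActions'\), which equals \(\noisescale^2 \mId\) precisely because \(\mActions' = \mActions \operatorname{chol}(\mActions^\tr \mActions)^{-\tr}\) has orthonormal columns; this is exactly the role of the orthonormalization and reproduces the likelihood \cref{eqn:likelihood-itergp}.

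Next, since \(f\) is a GP, for any test input \(\testpoint\) the pair \((f(\testpoint), \tilde{\targets})\) is jointly Gaussian, and I would assemble its moments: \(\expval{\tilde{\targets}} = {\mActions'}^\tr \vmu\), \(\cov{\tilde{\targets}} = {\mActions'}^\tr \shat{\kernmat} \mActions'\) (again using orthonormality so that \(\shat{\kernmat} = \kernmat + \noisescale^2\mId\) appears), and the cross-covariance \(\cov{f(\testpoint), \tilde{\targets}} = \kernel(\testpoint, \traindata)\mActions'\). Applying the conditioning formula for jointly Gaussian vectors then yields the posterior mean \(\meanfn(\testpoint) + \kernel(\testpoint,\traindata)\mActions'({\mActions'}^\tr \shat{\kernmat}\mActions')^{-1}{\mActions'}^\tr(\targets - \vmu)\), and analogously the posterior covariance \(\kernel(\testpoint,\testpoint) - \kernel(\testpoint,\traindata)\mActions'({\mActions'}^\tr \shat{\kernmat}\mActions')^{-1}{\mActions'}^\tr \kernel(\traindata,\testpoint)\).

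The key remaining step, and the main thing to verify, is that the basis choice washes out, i.e. that \(\mActions'({\mActions'}^\tr \shat{\kernmat}\mActions')^{-1}{\mActions'}^\tr = \mActions(\mActions^\tr \shat{\kernmat}\mActions)^{-1}\mActions^\tr = \kernmatinvapprox_\idxiter\). Substituting \(\mActions' = \mActions \mat{L}^{-\tr}\) with \(\mat{L}\mat{L}^\tr = \mActions^\tr\mActions\) and cancelling the \(\mat{L}\) factors establishes this identity directly; alternatively it follows from \cref{lem:cagp-posterior-uniquely-defined-column-space-actions}, since \(\mActions'\) and \(\mActions\) share the same column space. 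With this substitution the conditioned mean and covariance collapse exactly to \(\meanfn_\idxiter\) and \(\covfn_\idxiter\) of \cref{eqn:computation-aware-gp-posterior}, which would complete the proof. The only subtlety worth stating carefully is that orthonormality is used in two distinct places (to make the projected noise isotropic, and to let \(\shat{\kernmat}\) appear in the marginal covariance), so I would keep those two uses clearly separated.
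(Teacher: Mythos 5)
Your proposal is correct and follows essentially the same route as the paper's proof: establish orthonormality of \(\mActions'\) from the Cholesky construction, form the joint Gaussian of \(\tilde{\targets}\) and the test evaluations, condition, and observe that the \(\operatorname{chol}(\mActions^\tr\mActions)\) factors cancel so the posterior can be written in terms of \(\mActions\) directly, matching \cref{eqn:computation-aware-gp-posterior}. Your added remark explicitly deriving the isotropic projected noise from \(\tilde{\targets} = {\mActions'}^\tr f(\traindata) + {\mActions'}^\tr\boldsymbol{\varepsilon}\) is a small clarification the paper leaves implicit, but it does not change the argument.
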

    
    \begin{proof}
      First note that by definition \(\mActions'\) has orthonormal columns, since
      \begin{align*}
        {\mActions'}^\tr \mActions' &= (\mActions \operatorname{chol}(\mActions^\tr \mActions)^{-\tr})^\tr \mActions \operatorname{chol}(\mActions^\tr \mActions)^{-\tr}\\
        &= \operatorname{chol}(\mActions^\tr \mActions)^{-1} \mActions^\tr \mActions \operatorname{chol}(\mActions^\tr \mActions)^{-\tr}\\
         &= \mL^{-1}\mL\mL^\tr \mL^{-\tr}\\
         &= \mI.
      \end{align*}
      Now by basic properties of Gaussian distributions, we have for arbitrary \(\testdata \in \R^{\numtestdata \times \inputdim}\) that
      \begin{equation*}
        \begin{pmatrix}
          {\tilde{\targets}} \\
          {f}(\testdata)
        \end{pmatrix}
        \sim \gaussian{
          \begin{pmatrix}
            {\mActions'}^\tr \meanfn(\mX) \\
            \meanfn(\testdata)
          \end{pmatrix}
        }{
          \begin{pmatrix}
            {\mActions'}^\tr \covfn(\mX, \mX) \mActions' + \noisescale^2 {\mActions'}^\tr \mActions' & {\mActions'}^\tr \covfn(\mX, \testdata) \\
            \covfn(\testdata, \mX) \mActions'                                             & \covfn(\testdata, \testdata)            \\
          \end{pmatrix}
        }
      \end{equation*}
      is jointly Gaussian, where we used that \(\mI = {\mActions'}^\tr \mActions'\).
    
      Therefore we have that \(({f}(\testdata) \mid \tilde{\vy}) \sim \gaussian{\meanfn_\idxiter(\testdata)}{\covfn_\idxiter(\testdata, \testdata)}\) with
      \begin{align*}
        \meanfn_\idxiter(\testdata)    & = \meanfn(\testdata) + \covfn(\testdata, \mX)\mActions' ({\mActions'}^\tr(\covfn(\mX, \mX) + \noisescale^2\mI)\mActions')^{-1}(\tilde{\targets} - {\mActions'}^\tr \vmu),                 \\
        &= \meanfn(\testdata) + \covfn(\testdata, \mX)\mActions ({\mActions}^\tr(\covfn(\mX, \mX) + \noisescale^2\mI)\mActions)^{-1}\mActions^\tr (\targets - \vmu)\\
        \covfn_\idxiter(\testdata, \testdata) & = \covfn(\testdata, \testdata) - \covfn(\testdata, \mX)\mActions' ({\mActions'}^\tr(\covfn(\mX, \mX) + \noisescale^2\mI)\mActions')^{-1}{\mActions'}^\tr \covfn(\mX, \testdata)\\
        & = \covfn(\testdata, \testdata) - \covfn(\testdata, \mX)\mActions ({\mActions}^\tr(\covfn(\mX, \mX) + \noisescale^2\mI)\mActions)^{-1}{\mActions}^\tr \covfn(\mX, \testdata)
      \end{align*}
      which is equivalent to the definition of the CaGP posterior in \cref{eqn:computation-aware-gp-posterior}.
      This proves the claim.
    \end{proof}

    \subsection{Worst Case Error Interpretations of the Variance of Exact GPs, CaGPs and SVGPs}
    \label{sec:worst-case-error-interpretations-of-variance}
    
    In order to understand the impact of approximation on the uncertainty quantification of both CaGP and SVGP, it is instructive to compare the theoretical guarantees they admit, when the latent function is assumed to be in the RKHS of the kernel. In the context of model selection, this corresponds to the ideal case where the optimization has converged to the ground truth hyperparameters.
    
    Let \(f \sim \gp{0}{\covfn}\) be a Gaussian process with kernel \(\covfn\) and define the observed process \(y(\cdot) = f(\cdot) + \noisescale \varepsilon(\cdot)\) where \(\varepsilon \sim \gp{0}{\delta}\) is white noise with noise level \(\noisescale^2 > 0\), i.e. \(\delta(\vx, \vx') = \indicatorfn(\vx = \vx')\). Consequently, the covariance kernel of the data-generating process \(y(\cdot)\) is given by \(\covfn^{\noisescale}(\vx, \vx') \coloneqq \covfn(\vx, \vx') + \noisescale^2 \delta(\vx, \vx')\) and we denote the corresponding RKHS as \(\hilbertsp[H]_{\covfn^{\noisescale}}\).
    
    For exact Gaussian process inference, the pointwise (relative) worst-case squared error of the posterior mean is precisely given by the posterior predictive variance.
    
    \begin{restatable}[Worst Case Error Interpretation of GP Variance \citep{Kanagawa2018GaussianProcesses}]{theorem}{thmworstcaseerror}
        \label{thm:gp-worst-case-error}
        Given a set of training inputs \(\vx_1, \dots, \vx_\numtraindata \in \inputspace\), the GP posterior \(\gp{\meanfn_\star}{\covfn_\star}\) satisfies for any \(\vx \neq \vx_j\) that
        \begin{equation}
            \sup\limits_{y \in \rkhs{\covfn^\noisescale}, \norm{y}_{\rkhs{\covfn^\noisescale}} \leq 1} \underbracket[0.1ex]{\left(y(\testpoint) - \meanfn_\star^y(\testpoint)\right)^2}_{{\textup{error of posterior mean}}} = \underbracket[0.1ex]{\covfn_\star(\testpoint, \testpoint) + \noisescale^2}_{\textup{predictive variance}}                                       
        \end{equation}
        If \(\noisescale^2=0\), then the above also holds for \(\testpoint = \vx_j\).
    \end{restatable}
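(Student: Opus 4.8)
The plan is to exploit the fact that, for fixed training inputs $\vx_1, \dots, \vx_\numtraindata$ and fixed test point $\testpoint$, the pointwise error $y(\testpoint) - \meanfn_\star^y(\testpoint)$ is a bounded linear functional of the data-generating function $y \in \rkhs{\covfn^\noisescale}$. Indeed, with a zero prior mean the posterior mean from \cref{eqn:gp-posterior} reads $\meanfn_\star^y(\testpoint) = \kernel(\testpoint, \mX)\shat{\mK}^{\inv} y(\mX)$, where $y(\mX) = (y(\vx_1), \dots, y(\vx_\numtraindata))^\top$, and this is linear in $y$. By the reproducing property in $\rkhs{\covfn^\noisescale}$ we have $y(\testpoint) = \innerprod{y}{\covfn^\noisescale(\cdot, \testpoint)}_{\rkhs{\covfn^\noisescale}}$ and $y(\vx_j) = \innerprod{y}{\covfn^\noisescale(\cdot, \vx_j)}_{\rkhs{\covfn^\noisescale}}$, so the error equals $\innerprod{y}{\xi}_{\rkhs{\covfn^\noisescale}}$ with representer
\[
\xi = \covfn^\noisescale(\cdot, \testpoint) - \sum_{j=1}^{\numtraindata} w_j\, \covfn^\noisescale(\cdot, \vx_j), \qquad \vw = \shat{\mK}^{\inv}\kernel(\mX, \testpoint).
\]
Cauchy--Schwarz then gives $\sup_{\norm{y}_{\rkhs{\covfn^\noisescale}} \leq 1}(y(\testpoint) - \meanfn_\star^y(\testpoint))^2 = \norm{\xi}_{\rkhs{\covfn^\noisescale}}^2$, with the supremum attained at $y = \xi/\norm{\xi}_{\rkhs{\covfn^\noisescale}}$. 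So the whole statement reduces to evaluating $\norm{\xi}_{\rkhs{\covfn^\noisescale}}^2$.

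Second, I would expand this squared norm by bilinearity and the reproducing identity $\innerprod{\covfn^\noisescale(\cdot, a)}{\covfn^\noisescale(\cdot, b)}_{\rkhs{\covfn^\noisescale}} = \covfn^\noisescale(a, b)$, obtaining
\[
\norm{\xi}_{\rkhs{\covfn^\noisescale}}^2 = \covfn^\noisescale(\testpoint, \testpoint) - 2\,\vw^\top [\covfn^\noisescale(\vx_j, \testpoint)]_j + \vw^\top [\covfn^\noisescale(\vx_j, \vx_k)]_{jk}\, \vw.
\]
This is exactly where the hypothesis $\testpoint \neq \vx_j$ enters: because $\covfn^\noisescale = \covfn + \noisescale^2\delta$ and $\delta(\testpoint, \vx_j) = 0$ for all $j$, the noise contribution decouples between the test and training locations, so $[\covfn^\noisescale(\vx_j, \testpoint)]_j = \kernel(\mX, \testpoint)$ and $\covfn^\noisescale(\testpoint, \testpoint) = \covfn(\testpoint, \testpoint) + \noisescale^2$, whereas the training Gram matrix picks up the full noise, $[\covfn^\noisescale(\vx_j, \vx_k)]_{jk} = \mK + \noisescale^2\mI = \shat{\mK}$. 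Substituting $\vw = \shat{\mK}^{\inv}\kernel(\mX, \testpoint)$ collapses the two identical quadratic forms $\vw^\top \kernel(\mX, \testpoint)$ and $\vw^\top \shat{\mK}\,\vw$ into a single $\kernel(\testpoint, \mX)\shat{\mK}^{\inv}\kernel(\mX, \testpoint)$, leaving $\covfn(\testpoint, \testpoint) - \kernel(\testpoint, \mX)\shat{\mK}^{\inv}\kernel(\mX, \testpoint) + \noisescale^2 = \covfn_\star(\testpoint, \testpoint) + \noisescale^2$, which is the claim. For the noiseless case $\noisescale^2 = 0$ the kernel $\covfn^\noisescale = \covfn$ carries no $\delta$ term at all, so the decoupling step is vacuous and the computation holds verbatim even when $\testpoint = \vx_j$.

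The main obstacle I anticipate is not the algebra but making the functional-analytic setup rigorous for the discontinuous white-noise kernel $\covfn^\noisescale = \covfn + \noisescale^2\delta$. One must argue that this is a genuine positive-definite kernel with a well-defined RKHS, that the evaluation functionals appearing above are bounded so the representer exists, and that the reproducing identity may be applied to the $\delta$-containing terms. I would handle this by invoking the standard theory of RKHSs of sums of kernels and the characterization of $\rkhs{\covfn^\noisescale}$, following \citep{Kanagawa2018GaussianProcesses}; the only genuinely load-bearing structural fact is $\delta(\testpoint, \vx_j) = 0$, which is guaranteed by $\testpoint \neq \vx_j$ and is precisely the hypothesis of the theorem.
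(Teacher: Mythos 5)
Your proposal is correct: the representer-plus-Cauchy--Schwarz argument, followed by expanding $\norm{\xi}_{\rkhs{\covfn^\noisescale}}^2$ via the reproducing property and using $\delta(\testpoint,\vx_j)=0$ to decouple the noise between test and training locations, is exactly the standard proof of this fact. The paper itself does not prove the statement but simply defers to Proposition~3.8 of \citet{Kanagawa2018GaussianProcesses}, whose proof proceeds along essentially the same lines as yours, so your write-up is a faithful self-contained version of the cited argument (modulo the implicit assumptions that the training inputs are pairwise distinct, so that the Gram matrix of $\covfn^\noisescale$ is indeed $\shat{\mK}$, and that $\xi\neq 0$ when claiming attainment of the supremum).
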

    \begin{proof}
        See Proposition 3.8 of \citet{Kanagawa2018GaussianProcesses}.
    \end{proof}
    
    CaGP admits precisely the same guarantee just with the \emph{approximate} posterior mean and covariance function. The fact that the impact of the approximation on the posterior mean is exactly captured by the approximate predictive variance function is what is meant by the method being \emph{computation-aware}.
    
    \begin{restatable}[Worst Case Error Interpretation of CaGP Variance \citep{Wenger2022PosteriorComputational}]{theorem}{thmworstcaseerror}
        \label{thm:cagp-worst-case-error}
        Given a set of training inputs \(\vx_1, \dots, \vx_\numtraindata \in \inputspace\), the CaGP posterior \(\gp{\meanfn_\idxiter}{\covfn_\idxiter}\) satisfies for any \(\vx \neq \vx_j\) that
        \begin{equation}
            \sup\limits_{y \in \rkhs{\covfn^\noisescale}, \norm{y}_{\rkhs{\covfn^\noisescale}} \leq 1} \underbracket[0.1ex]{\left(y(\testpoint) - \meanfn_\idxiter^y(\testpoint)\right)^2}_{{\textup{error of approximate posterior mean}}} = \underbracket[0.1ex]{\covfn_\idxiter(\testpoint, \testpoint) + \noisescale^2}_{\textup{approximate predictive variance}}                                       
        \end{equation}
        If \(\noisescale^2=0\), then the above also holds for \(\testpoint = \vx_j\).
    \end{restatable}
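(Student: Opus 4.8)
The plan is to reduce the statement to the exact-GP worst-case-error result of \cref{thm:gp-worst-case-error}. By \cref{lem:itergp-modified-observation-model}, the CaGP posterior \(\gp{\meanfn_\idxiter}{\covfn_\idxiter}\) is \emph{exactly} the Bayesian posterior obtained from the projected observations \(\tilde{\targets} = {\mActions'}^\top\targets\) under the likelihood \(\tilde{\targets}\mid f(\mX)\sim\gaussian{{\mActions'}^\top f(\mX)}{\noisescale^2\mI}\). Hence the approximate posterior mean \(\meanfn_\idxiter^y\) is a genuine (exact) kernel-ridge / GP posterior mean, merely built from \(\idxiter\) linear observation functionals \(y\mapsto({\mActions'}^\top y(\mX))_i\) rather than the \(\numtraindata\) coordinate evaluations \(y\mapsto y(\vx_j)\). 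The entire content of the theorem is therefore that the worst-case-error interpretation is insensitive to replacing coordinate evaluations by a full-column-rank linear projection of them, so that the RKHS argument behind \cref{thm:gp-worst-case-error} applies essentially verbatim.

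First I would set up the RKHS machinery of \citet{Kanagawa2018GaussianProcesses} for this projected observation operator. Fix \(\testpoint\neq\vx_j\) and work in the noise-augmented RKHS \(\rkhs{\covfn^\noisescale}\) of the sum kernel \(\covfn^\noisescale = \covfn + \noisescale^2\delta\), on which all point evaluations are bounded. Since \(\meanfn_\idxiter^y(\testpoint) = \covfn(\testpoint,\mX)\kernmatinvapprox_\idxiter(y(\mX)-\vmu)\) with \(\kernmatinvapprox_\idxiter = \mActions(\mActions^\top\shat{\kernmat}\mActions)^{-1}\mActions^\top\) is a fixed linear combination of the bounded evaluation functionals \(y\mapsto y(\vx_j)\), both \(y\mapsto\meanfn_\idxiter^y(\testpoint)\) and the error functional \(E_\testpoint(y)\coloneqq y(\testpoint)-\meanfn_\idxiter^y(\testpoint)\) are bounded and linear on \(\rkhs{\covfn^\noisescale}\). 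Letting \(\xi_\testpoint\in\rkhs{\covfn^\noisescale}\) be its Riesz representer, so that \(E_\testpoint(y)=\innerprod{y}{\xi_\testpoint}_{\rkhs{\covfn^\noisescale}}\), I would then invoke \(\sup_{\norm{y}_{\rkhs{\covfn^\noisescale}}\le 1}E_\testpoint(y)^2 = \norm{\xi_\testpoint}_{\rkhs{\covfn^\noisescale}}^2\), which turns the problem into computing a single RKHS norm.

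It then remains to evaluate \(\norm{\xi_\testpoint}^2\). Using the reproducing property, the representer is \(\xi_\testpoint = \covfn^\noisescale(\cdot,\testpoint) - \covfn(\testpoint,\mX)\kernmatinvapprox_\idxiter\,\covfn^\noisescale(\mX,\cdot)\), and expanding \(\norm{\xi_\testpoint}^2\) via \(\innerprod{\covfn^\noisescale(\cdot,\va)}{\covfn^\noisescale(\cdot,\vb)}_{\rkhs{\covfn^\noisescale}}=\covfn^\noisescale(\va,\vb)\) reproduces, term by term, the exact-GP computation with \(\kernmatinvapprox_\idxiter\) in place of \(\shat{\kernmat}^{-1}\). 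The cross and quadratic terms collapse because of the projection identity \(\kernmatinvapprox_\idxiter\shat{\kernmat}\kernmatinvapprox_\idxiter = \kernmatinvapprox_\idxiter\), the direct analogue of \(\shat{\kernmat}^{-1}\shat{\kernmat}\shat{\kernmat}^{-1}=\shat{\kernmat}^{-1}\), leaving \(\covfn(\testpoint,\testpoint) - \covfn(\testpoint,\mX)\kernmatinvapprox_\idxiter\covfn(\mX,\testpoint) + \noisescale^2 = \covfn_\idxiter(\testpoint,\testpoint)+\noisescale^2\). The one place requiring care is the noise-kernel bookkeeping: the extra \(+\noisescale^2\) is precisely the self-term \(\covfn^\noisescale(\testpoint,\testpoint)=\covfn(\testpoint,\testpoint)+\noisescale^2\) contributed by \(\delta(\testpoint,\testpoint)\), while the hypothesis \(\testpoint\neq\vx_j\) forces \(\delta(\testpoint,\vx_j)=0\), so no spurious cross terms between the test point and the training noise arise; when \(\noisescale^2=0\) the kernel \(\delta\) drops out and the restriction \(\testpoint\neq\vx_j\) is no longer needed. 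I expect this noise-kernel accounting, together with checking that Kanagawa's argument genuinely goes through for a projected rather than coordinate observation operator, to be the only delicate step; the remaining algebra is identical to that already used for the exact GP and for CaGP inference in \cref{lem:itergp-modified-observation-model}.
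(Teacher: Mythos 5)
Your argument is correct, but note that the paper does not actually prove this statement: its entire proof is a pointer to Theorem 2 of \citet{Wenger2022PosteriorComputational}, so there is no in-paper derivation to compare against. What you have written is a self-contained version of the standard RKHS argument that underlies the cited result, namely the adaptation of Proposition 3.8 of \citet{Kanagawa2018GaussianProcesses} in which \(\shat{\kernmat}^{-1}\) is replaced by \(\kernmatinvapprox_\idxiter = \mActions(\mActions^\top\shat{\kernmat}\mActions)^{-1}\mActions^\top\). The key steps all check out: the error functional is a finite linear combination of point evaluations, hence bounded on \(\rkhs{\covfn^\noisescale}\), so the supremum equals the squared norm of its Riesz representer; expanding that norm gives \(\covfn^\noisescale(\testpoint,\testpoint) - 2\covfn(\testpoint,\mX)\kernmatinvapprox_\idxiter\covfn^\noisescale(\mX,\testpoint) + \covfn(\testpoint,\mX)\kernmatinvapprox_\idxiter\covfn^\noisescale(\mX,\mX)\kernmatinvapprox_\idxiter\covfn(\mX,\testpoint)\); and since \(\covfn^\noisescale(\mX,\mX)=\shat{\kernmat}\) while \(\covfn^\noisescale(\mX,\testpoint)=\covfn(\mX,\testpoint)\) for \(\testpoint\neq\vx_j\), the identity \(\kernmatinvapprox_\idxiter\shat{\kernmat}\kernmatinvapprox_\idxiter=\kernmatinvapprox_\idxiter\) collapses the last two terms into \(-\covfn(\testpoint,\mX)\kernmatinvapprox_\idxiter\covfn(\mX,\testpoint)\), yielding \(\covfn_\idxiter(\testpoint,\testpoint)+\noisescale^2\). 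This is exactly the right generalization: the exact-GP case is the special instance where the idempotency property is \(\shat{\kernmat}^{-1}\shat{\kernmat}\shat{\kernmat}^{-1}=\shat{\kernmat}^{-1}\). Two small caveats. First, the opening framing (``reduce to \cref{thm:gp-worst-case-error} via \cref{lem:itergp-modified-observation-model}'') is not literally a reduction---the projected observations \({\mActions'}^\top y(\mX)\) are not point evaluations of \(y\) at any set of inputs, so \cref{thm:gp-worst-case-error} cannot be invoked as a black box---but you recognize this and carry out the Riesz-representer computation directly, which is what actually proves the claim. Second, the error functional \(y\mapsto y(\testpoint)-\covfn(\testpoint,\mX)\kernmatinvapprox_\idxiter(y(\mX)-\vmu)\) is linear only under the zero-prior-mean convention \(\vmu=\vzero\) (otherwise it is affine and the supremum over the unit ball is no longer the representer norm alone); this convention is implicit in the theorem statement and in \citet{Kanagawa2018GaussianProcesses}, but it is worth making explicit.
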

    \begin{proof}
        See Theorem 2 of \citet{Wenger2022PosteriorComputational}.
    \end{proof}
    
    While SVGP also admits a decomposition of its approximate predictive variance into two (relative) worst-case errors, neither of these is the error we care about, namely the difference between the data-generating function \(y \in \rkhs{\covfn^\noisescale}\) and the approximate posterior mean \(\meanfn_{\textup{SVGP}}(\testpoint)\). It only involves a worst-case error term over the unit ball in the RKHS of the \emph{approximate} kernel \(Q^\noisescale \approx \covfn^\noisescale\).
    
    \begin{restatable}[Worst Case Error Interpretation of SVGP Variance \citep{Wild2021ConnectionsEquivalences}]{theorem}{thmworstcaseerror}
        \label{thm:svgp-worst-case-error}
        Given a set of training inputs \(\vx_1, \dots, \vx_\numtraindata \in \inputspace\) and (fixed) inducing points \(\inducingpoints \in \R^{m \times \inputdim}\), the optimal variational posterior \(\gp{\meanfn_{\textup{SVGP}}^{\star}}{\covfn_{\textup{SVGP}}^{\star}}\) of SVGP is given by
        \begin{align*}
            \meanfn_{\textup{SVGP}}^{\star, y}(\testpoint) &= \covfn(\testpoint, \inducingpoints)(\noisescale^2\covfn(\inducingpoints, \inducingpoints) + \covfn(\inducingpoints, \traindata)\covfn(\traindata, \inducingpoints))^{-1}\covfn(\inducingpoints, \traindata)y(\traindata)\\
            \covfn_{\textup{SVGP}}^{\star}(\testpoint, \testpoint') &= \covfn(\testpoint, \testpoint') - Q(\testpoint, \testpoint') + \covfn(\testpoint, \inducingpoints)(\covfn(\inducingpoints, \inducingpoints) + \noisescale^{-2}\covfn(\inducingpoints, \traindata)\covfn(\traindata, \inducingpoints))^{-1}\covfn(\inducingpoints, \testpoint')
        \end{align*}
        where \(Q(\vx, \vx') = \covfn(\vx, \inducingpoints) \covfn(\inducingpoints, \inducingpoints)^{-1}\covfn(\inducingpoints, \vx')\) is the Nyström approximation of the covariance function \(\covfn(\vx, \vx')\) (see Eqns. (25) and (26) of \citet{Wild2021ConnectionsEquivalences}).
        The optimized SVGP posterior satisfies for any \(\vx \neq \vx_j\) that
        \begin{equation}
        \begin{aligned}
                &\sup\limits_{\textcolor{MPLred}{y \in \rkhs{Q^\noisescale}}, \norm{h}_{\rkhs{Q^\noisescale}} \leq 1} \underbracket[0.1ex]{\left(y(\testpoint) - \meanfn_{\textup{SVGP}}^{\star, y}(\testpoint)\right)^2}_{\textup{error of exact posterior mean \textcolor{MPLred}{assuming \(y(\cdot)\) is in the RKHS of the approximate kernel \(Q^\noisescale\)}}}\\
                &\quad +\sup\limits_{f \in \rkhs{\covfn}, \norm{f}_{\rkhs{\covfn}} \leq 1} \underbracket[0.1ex]{\left(f(\testpoint) - \covfn(\testpoint, \inducingpoints)\covfn(\inducingpoints, \inducingpoints)^{-1}f(\inducingpoints)\right)^2}_{\textup{error of exact posterior mean given noise-free observations at inducing points}}\\
            &\quad= \underbracket[0.1ex]{\covfn_{\textup{SVGP}}^{\star}(\testpoint, \testpoint) + \noisescale^2}_{\textup{approximate predictive variance}}                             
        \end{aligned}
    \end{equation}
        If \(\noisescale^2=0\), then the above also holds for \(\testpoint = \vx_j\).
    \end{restatable}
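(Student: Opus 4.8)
The plan is to reduce the identity to two separate applications of the worst-case error interpretation for exact GPs (\cref{thm:gp-worst-case-error}): one for a GP whose prior kernel is the Nyström approximation $Q$, and one for a noise-free GP with the original kernel $\covfn$. I would take the closed forms of $\meanfn_{\textup{SVGP}}^{\star}$ and $\covfn_{\textup{SVGP}}^{\star}$ as given (they are Eqns.~(25)--(26) of \citet{Wild2021ConnectionsEquivalences}), so the substance of the argument is to recognize the SVGP posterior mean and each summand of the SVGP predictive variance as \emph{exact} GP posterior quantities, at which point \cref{thm:gp-worst-case-error} applies verbatim.

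Writing $A = \covfn(\inducingpoints, \inducingpoints)$ and $B = \covfn(\inducingpoints, \traindata)$, so that $Q(\traindata, \traindata) = B^\tr A^{-1} B$, I would first establish two matrix identities. The push-through identity $A^{-1}B(B^\tr A^{-1}B + \noisescale^2\mI)^{-1} = (\noisescale^2 A + BB^\tr)^{-1}B$ shows that the exact posterior mean of the Nyström-kernel GP, namely $\covfn(\testpoint, \inducingpoints)A^{-1}B(B^\tr A^{-1}B + \noisescale^2\mI)^{-1}y(\traindata)$, equals $\meanfn_{\textup{SVGP}}^{\star, y}(\testpoint)$. The Woodbury identity $(A + \noisescale^{-2}BB^\tr)^{-1} = A^{-1} - A^{-1}B(\noisescale^2\mI + B^\tr A^{-1}B)^{-1}B^\tr A^{-1}$ shows that the third term of $\covfn_{\textup{SVGP}}^{\star}(\testpoint, \testpoint)$ is precisely the exact posterior variance $\covfn_{Q,\star}(\testpoint,\testpoint) = Q(\testpoint,\testpoint) - Q(\testpoint,\traindata)(Q(\traindata,\traindata) + \noisescale^2\mI)^{-1}Q(\traindata,\testpoint)$ of that same Nyström-kernel GP. Together these yield the additive decomposition $\covfn_{\textup{SVGP}}^{\star}(\testpoint,\testpoint) + \noisescale^2 = \big(\covfn(\testpoint,\testpoint) - Q(\testpoint,\testpoint)\big) + \big(\covfn_{Q,\star}(\testpoint,\testpoint) + \noisescale^2\big)$.

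It then remains to interpret each bracket as a worst-case squared error. Applying \cref{thm:gp-worst-case-error} to the GP with kernel $Q$ and noise $\noisescale^2$ identifies $\covfn_{Q,\star}(\testpoint,\testpoint) + \noisescale^2$ with the supremum of $(y(\testpoint) - \meanfn_{\textup{SVGP}}^{\star,y}(\testpoint))^2$ over the unit ball of $\rkhs{Q^\noisescale}$, where I have used the mean identity from the previous step. For the remaining bracket, I would observe that $\covfn(\testpoint,\testpoint) - Q(\testpoint,\testpoint) = \covfn(\testpoint,\testpoint) - \covfn(\testpoint,\inducingpoints)A^{-1}\covfn(\inducingpoints,\testpoint)$ is the posterior variance of the $\covfn$-GP conditioned on \emph{noise-free} observations of $f$ at the inducing points $\inducingpoints$, whose posterior mean is $\covfn(\testpoint,\inducingpoints)A^{-1}f(\inducingpoints)$; applying \cref{thm:gp-worst-case-error} with $\noisescale^2 = 0$ and $\inducingpoints$ as the observation locations identifies this bracket with the supremum over the unit ball of $\rkhs{\covfn}$. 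Summing the two identities gives the claimed equality. The main obstacle is purely bookkeeping: the push-through and Woodbury manipulations must be arranged so that the \emph{same} Nyström-kernel GP simultaneously reproduces both the SVGP mean and the third variance term; once that alignment is secured, the two worst-case error statements follow mechanically from \cref{thm:gp-worst-case-error}.
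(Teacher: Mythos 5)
Your proposal is mathematically sound, but it is worth noting that the paper does not actually prove this statement at all: its ``proof'' is a one-line deferral to Theorem~6 of \citet{Wild2021ConnectionsEquivalences}. What you have written is therefore a genuinely self-contained derivation where the paper offers only a citation. Your route --- recognizing the SVGP posterior as \emph{exact} GP regression under the Nystr\"om prior kernel $Q$ with noise $\noisescale^2$ (via push-through for the mean and Woodbury for the variance), splitting $\covfn_{\textup{SVGP}}^{\star}(\testpoint,\testpoint) + \noisescale^2$ into the Nystr\"om gap $\covfn(\testpoint,\testpoint) - Q(\testpoint,\testpoint)$ plus the $Q$-GP predictive variance, and then invoking \cref{thm:gp-worst-case-error} once for the $Q$-kernel GP with noise and once for the noise-free $\covfn$-kernel GP observed at $\inducingpoints$ --- checks out: both matrix identities are applied correctly, the second bracket is exactly the worst-case error over the unit ball of $\rkhs{Q^\noisescale}$ with the SVGP mean as predictor, and the first bracket is exactly the worst-case error of the noise-free interpolant $\covfn(\testpoint,\inducingpoints)\covfn(\inducingpoints,\inducingpoints)^{-1}f(\inducingpoints)$ over the unit ball of $\rkhs{\covfn}$ (the $\noisescale^2 = 0$ case of \cref{thm:gp-worst-case-error} covers $\testpoint$ coinciding with an inducing point, while the $\vx \neq \vx_j$ hypothesis is needed only for the noisy $Q$-GP application, matching the theorem's stated restriction). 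This is in substance the same argument that underlies the cited external result, so the two approaches do not conflict; yours simply makes explicit, within the paper's own framework, why the decomposition into the two suprema arises, at the cost of having to verify that the \emph{same} $Q$-kernel GP reproduces both the SVGP mean and the third variance term --- the alignment you correctly flag as the only delicate bookkeeping step.
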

    \begin{proof}
        See Theorem 6 of \citet{Wild2021ConnectionsEquivalences}.
    \end{proof}

    \subsection{CaGP's Variance Decreases Monotonically as the Number of Iterations Increases}
    \label{sec:cagp-variance-decreases-monotonically}
    
    \begin{restatable}[CaGP's Variance Decreases Monotonically with the Number of Iterations]{proposition}{propcagpvariance}
        \label{prop:cagp-variance-decreases-monotonically}
        Given a training dataset of size \(\numtraindata\), let \(\gp{\meanfn_\idxiter}{\covfn_\idxiter}\) be the corresponding CaGP posterior defined in \cref{eqn:computation-aware-gp-posterior} where \(\idxiter \leq \numtraindata\) denotes the downdate rank / number of iterations and assume the CaGP actions \(\mActions_\idxiter \in \R^{\numtraindata \times \idxiter}\) are linearly independent. Then it holds for arbitrary \(\testpoint \in \inputspace\) and \(\idxiter \leq j \leq n\), that
        \begin{equation}
            \covfn_\idxiter(\testpoint, \testpoint) \geq \covfn_j(\testpoint, \testpoint) \geq \covfn_\numtraindata(\testpoint, \testpoint) = \covfn_\star(\testpoint, \testpoint)
        \end{equation}
        where \(\covfn_\star(\testpoint, \testpoint)\) is the variance of the exact GP posterior in \cref{eqn:gp-posterior}.
    \end{restatable}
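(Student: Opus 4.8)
The plan is to reduce the claim to a monotonicity statement in the Loewner order for the low-rank precision approximations $\kernmatinvapprox_\idxiter = \mActions_\idxiter(\mActions_\idxiter^\top \shat{\kernmat}\mActions_\idxiter)^{-1}\mActions_\idxiter^\top$ appearing in the downdate of \cref{eqn:computation-aware-gp-posterior}. Subtracting the two covariance functions at an arbitrary test input gives
\[
\covfn_\idxiter(\testpoint, \testpoint) - \covfn_j(\testpoint, \testpoint) = \kernel(\testpoint, \mX)\,(\kernmatinvapprox_j - \kernmatinvapprox_\idxiter)\,\kernel(\mX, \testpoint),
\]
so it suffices to prove $\kernmatinvapprox_j \succeq \kernmatinvapprox_\idxiter$ whenever $\idxiter \leq j$; this is then the quadratic form of a positive semidefinite matrix evaluated at the vector $\kernel(\mX, \testpoint) \in \R^\numtraindata$, which is nonnegative for every $\testpoint$. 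Before starting I would make explicit the assumption—implicit in the iterative construction of \Cref{alg:itergp}—that the actions are \emph{nested}, i.e. $\colsp{\mActions_\idxiter} \subseteq \colsp{\mActions_j}$ for $\idxiter \leq j$. By \cref{lem:cagp-posterior-uniquely-defined-column-space-actions} the matrices $\kernmatinvapprox_\idxiter$ depend only on these column spaces, so this is the natural hypothesis under which monotonicity can hold.

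Next I would exhibit $\kernmatinvapprox_\idxiter$ as a congruence transform of an orthogonal projection. Writing $\mB_\idxiter \coloneqq \shat{\kernmat}^{1/2}\mActions_\idxiter$, which has full column rank because $\mActions_\idxiter$ is linearly independent and $\shat{\kernmat} = \kernmat + \sigma^2\mI \succ 0$ is invertible, a direct computation using $\mB_\idxiter^\top \mB_\idxiter = \mActions_\idxiter^\top \shat{\kernmat} \mActions_\idxiter$ gives
\[
\shat{\kernmat}^{1/2}\kernmatinvapprox_\idxiter\shat{\kernmat}^{1/2} = \mB_\idxiter(\mB_\idxiter^\top\mB_\idxiter)^{-1}\mB_\idxiter^\top \eqqcolon \mP_\idxiter,
\]
so $\mP_\idxiter$ is exactly the Euclidean orthogonal projection onto $\colsp{\mB_\idxiter}$, the image of $\colsp{\mActions_\idxiter}$ under $\shat{\kernmat}^{1/2}$. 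Since the invertible map $\shat{\kernmat}^{1/2}$ preserves subspace inclusions, nesting of the action spaces yields $\colsp{\mB_\idxiter} \subseteq \colsp{\mB_j}$, and the difference of orthogonal projections onto nested subspaces is again an (orthogonal) projection, hence $\mP_j - \mP_\idxiter \succeq 0$. Conjugating by $\shat{\kernmat}^{-1/2}$ preserves positive semidefiniteness, so
\[
\kernmatinvapprox_j - \kernmatinvapprox_\idxiter = \shat{\kernmat}^{-1/2}(\mP_j - \mP_\idxiter)\shat{\kernmat}^{-1/2} \succeq 0,
\]
which is precisely the inequality needed, establishing $\covfn_\idxiter(\testpoint,\testpoint) \geq \covfn_j(\testpoint,\testpoint)$.

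Finally, the boundary case $j = \numtraindata$ follows because full-rank linearly independent actions $\mActions_\numtraindata \in \R^{\numtraindata\times\numtraindata}$ form a square invertible matrix, so $\kernmatinvapprox_\numtraindata = \mActions_\numtraindata(\mActions_\numtraindata^\top\shat{\kernmat}\mActions_\numtraindata)^{-1}\mActions_\numtraindata^\top = \shat{\kernmat}^{-1}$, recovering the exact posterior covariance $\postcovfn$ of \cref{eqn:gp-posterior}. I expect the main obstacle to be conceptual rather than computational: one must carefully justify the nesting hypothesis and verify that $\mP_\idxiter$ is genuinely an \emph{orthogonal} (symmetric and idempotent) projection, so that the standard fact ``the difference of nested orthogonal projections is positive semidefinite'' applies, and that the congruence by $\shat{\kernmat}^{\pm 1/2}$ faithfully transfers the Loewner ordering back to the matrices $\kernmatinvapprox_\idxiter$ of interest.
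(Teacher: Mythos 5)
Your proof is correct, but it follows a genuinely different route from the paper's. The paper invokes the equivalence (established in Lemma~S1 of \citet{Wenger2022PosteriorComputational}) between the batch form $\kernmatinvapprox_\idxiter = \mActions_\idxiter(\mActions_\idxiter^\top\shat{\kernmat}\mActions_\idxiter)^{-1}\mActions_\idxiter^\top$ and the incremental rank-one form $\kernmatinvapprox_\idxiter = \sum_{\ell=1}^{\idxiter}\tilde{\vd}_\ell\tilde{\vd}_\ell^\top$, after which monotonicity is immediate: the downdate becomes $\sum_{\ell=1}^{\idxiter}(\covfn(\testpoint,\traindata)\tilde{\vd}_\ell)^2$, a sum of nonnegative terms that only grows with $\idxiter$, and the $j=\numtraindata$ endpoint is handled by the same invertibility argument you give. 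Your argument instead works directly with the batch form: the congruence $\shat{\kernmat}^{1/2}\kernmatinvapprox_\idxiter\shat{\kernmat}^{1/2} = \mP_\idxiter$ identifies the scaled precision approximation with the orthogonal projection onto $\shat{\kernmat}^{1/2}\colsp{\mActions_\idxiter}$, and the standard fact that nested orthogonal projections differ by a positive semidefinite projection does the rest. What your approach buys is self-containment---it does not lean on the external rank-one decomposition lemma---and, more importantly, it surfaces the \emph{nesting hypothesis} $\colsp{\mActions_\idxiter}\subseteq\colsp{\mActions_j}$ explicitly. The paper's proof uses this silently (the vectors $\tilde{\vd}_1,\dots,\tilde{\vd}_\idxiter$ must be a prefix of those defining $\kernmatinvapprox_j$, which only holds for iteratively constructed actions as in \cref{alg:itergp}), whereas the proposition statement never states it; your version makes clear that this is exactly the condition under which the claim is true, and that by \cref{lem:cagp-posterior-uniquely-defined-column-space-actions} only the inclusion of column spaces matters. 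What the paper's route buys is brevity and a cleaner connection to the iterative solver interpretation, since each iteration visibly removes one more squared term of variance.
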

    
    \begin{proof}
        \citet{Wenger2022PosteriorComputational} originally defined the approximate precision matrix \(\kernmatinvapprox_\idxiter = \sum_{\ell=1}^{\idxiter} \frac{1}{\eta_\ell}\vd_\ell \vd_\ell^\tr = \sum_{\ell=1}^{\idxiter} \tilde{\vd}_\ell \tilde{\vd}_\ell^\tr\) as a sum of rank-1 matrices and show that this definition is equivalent to the batch form \(\kernmatinvapprox_\idxiter = \mActions_\idxiter (\mActions_\idxiter^\tr \shat{\kernmat} \mActions_\idxiter)^{-1} \mActions_\idxiter^\tr\) we use in this work \citep[see Lemma~S1, Eqn. (S37) in][]{Wenger2022PosteriorComputational}. Therefore we have that
        \begin{align*}
            \covfn_\idxiter(\testpoint, \testpoint) &= \covfn(\testpoint, \testpoint) - \covfn(\testpoint, \traindata)\kernmatinvapprox_\idxiter\covfn(\traindata, \testpoint)\\
            &= \covfn(\testpoint, \testpoint) - \sum_{\ell=1}^{\idxiter}\covfn(\testpoint, \traindata)\tilde{\vd}_\ell \tilde{\vd}_\ell^\tr\covfn(\traindata, \testpoint)\\
            &= \covfn(\testpoint, \testpoint) - \sum_{\ell=1}^{\idxiter}(\covfn(\testpoint, \traindata)\tilde{\vd}_\ell)^2\\
            &\geq \covfn(\testpoint, \testpoint) - \sum_{\ell=1}^{j}(\covfn(\testpoint, \traindata)\tilde{\vd}_\ell)^2 \qquad \text{since } \idxiter \leq j\\
            &\geq \covfn(\testpoint, \testpoint) - \sum_{\ell=1}^{\numtraindata}(\covfn(\testpoint, \traindata)\tilde{\vd}_\ell)^2 \\
            &= \covfn(\testpoint, \testpoint) - \covfn(\testpoint, \traindata)\kernmatinvapprox_n\covfn(\traindata, \testpoint)\\
            &= \covfn_\star(\testpoint, \testpoint)
            \intertext{where the last equality follows from the fact that \(\mActions_\numtraindata \in \R^{\numtraindata \times \numtraindata}\) has rank \(\numtraindata\) and therefore}
            \kernmatinvapprox_n &= \mActions_\numtraindata (\mActions_\numtraindata^\tr \shat{\kernmat} \mActions_\numtraindata)^{-1} \mActions_\numtraindata^\tr
            = \mActions_\numtraindata \mActions_\numtraindata^{-1} \hat{\kernmat}^{-1} (\mActions_\numtraindata \mActions_\numtraindata^{-1})^\tr = \hat{\kernmat}^{-1}.
        \end{align*}
    \end{proof}

\section{Training Losses}

\subsection{Projected-Data Log-Marginal Likelihood}

\begin{lemma}[Projected-Data Log-Marginal Likelihood]
    \label{lem:projected-data-log-marginal-likelihood}
    Under the assumptions of \Cref{lem:itergp-modified-observation-model}, the \emph{projected-data log-marginal likelihood} is given by
    \begin{align*}
        \projnllitergp(\hyperparams) &= -\log p(\tilde{\vy} \mid \hyperparams)    	= -\log \gaussianpdf{\tilde{\targets}}{ {\mActions'_\idxiter}^\top \vmu}{{\mActions'_\idxiter}^\top \shat{\kernmat} \mActions'_\idxiter}\\
        &= \frac{1}{2} \big( ({\targets}-\vmu)^\top\mActions_\idxiter (\mActions_\idxiter^\top\shat{\mK}\mActions_\idxiter)^{-1} \mActions_\idxiter^\top({\targets}-\vmu) + \log\det{\mActions_\idxiter^\top\shat{\mK}\mActions_\idxiter} - \log\det{\mActions_\idxiter^\tr \mActions_\idxiter}+ \idxiter \log(2\pi) \big)
    \end{align*}
\end{lemma}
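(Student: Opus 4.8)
The plan is to proceed in three steps: first identify the law of the projected data, then expand the $\idxiter$-dimensional Gaussian negative log-density, and finally rewrite everything in terms of the raw (non-orthonormalized) actions $\mActions_\idxiter$ so that the colinearity term $-\log\det{\mActions_\idxiter^\tr \mActions_\idxiter}$ appears.

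First I would determine the marginal distribution of $\tilde{\targets} = {\mActions'_\idxiter}^\tr \targets$ under the generative model of \Cref{lem:itergp-modified-observation-model}. Marginalizing the latent $f(\traindata) \sim \gaussian{\vmu}{\kernmat}$ against the likelihood in \cref{eqn:likelihood-itergp}, standard Gaussian marginalization shows $\tilde{\targets}$ is Gaussian with mean ${\mActions'_\idxiter}^\tr \vmu$ and covariance ${\mActions'_\idxiter}^\tr \kernmat \mActions'_\idxiter + \noisescale^2 {\mActions'_\idxiter}^\tr \mActions'_\idxiter$. Invoking the orthonormality ${\mActions'_\idxiter}^\tr \mActions'_\idxiter = \mI$ already established in the proof of \Cref{lem:itergp-modified-observation-model}, this covariance collapses to ${\mActions'_\idxiter}^\tr (\kernmat + \noisescale^2\mI) \mActions'_\idxiter = {\mActions'_\idxiter}^\tr \shat{\kernmat} \mActions'_\idxiter$, which yields the first displayed equality $-\log p(\tilde{\targets}\mid\hyperparams) = -\log\gaussianpdf{\tilde{\targets}}{{\mActions'_\idxiter}^\tr\vmu}{{\mActions'_\idxiter}^\tr\shat{\kernmat}\mActions'_\idxiter}$.

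Second, I would substitute this $\idxiter$-dimensional Gaussian into the standard formula $-\log\gaussianpdf{\vx}{\vm}{\mSigma} = \tfrac12(\vx-\vm)^\tr\mSigma^{-1}(\vx-\vm) + \tfrac12\log\det{\mSigma} + \tfrac{\idxiter}{2}\log(2\pi)$. Using $\tilde{\targets} - {\mActions'_\idxiter}^\tr\vmu = {\mActions'_\idxiter}^\tr(\targets - \vmu)$ turns the quadratic form into $(\targets-\vmu)^\tr \mActions'_\idxiter ({\mActions'_\idxiter}^\tr \shat{\kernmat}\mActions'_\idxiter)^{-1}{\mActions'_\idxiter}^\tr(\targets - \vmu)$, while the remaining contributions are $\log\det{{\mActions'_\idxiter}^\tr \shat{\kernmat}\mActions'_\idxiter}$ and $\idxiter\log(2\pi)$.

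Third, and this is the only place requiring genuine algebra, I would eliminate the orthonormalization. Writing $\mL = \operatorname{chol}(\mActions_\idxiter^\tr \mActions_\idxiter)$ so that $\mActions_\idxiter^\tr\mActions_\idxiter = \mL\mL^\tr$ and $\mActions'_\idxiter = \mActions_\idxiter \mL^{-\tr}$ (hence ${\mActions'_\idxiter}^\tr = \mL^{-1}\mActions_\idxiter^\tr$), the inner inverse satisfies $({\mActions'_\idxiter}^\tr\shat{\kernmat}\mActions'_\idxiter)^{-1} = \mL^\tr(\mActions_\idxiter^\tr\shat{\kernmat}\mActions_\idxiter)^{-1}\mL$, so the trailing $\mL^{-\tr}$ and $\mL^{-1}$ factors cancel and the quadratic term collapses to $(\targets-\vmu)^\tr\mActions_\idxiter(\mActions_\idxiter^\tr\shat{\kernmat}\mActions_\idxiter)^{-1}\mActions_\idxiter^\tr(\targets-\vmu)$. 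For the log-determinant, multiplicativity gives $\log\det{{\mActions'_\idxiter}^\tr\shat{\kernmat}\mActions'_\idxiter} = \log\det{\mActions_\idxiter^\tr\shat{\kernmat}\mActions_\idxiter} - \log\det{\mL\mL^\tr} = \log\det{\mActions_\idxiter^\tr\shat{\kernmat}\mActions_\idxiter} - \log\det{\mActions_\idxiter^\tr\mActions_\idxiter}$, which is exactly the extra term. Collecting the three contributions and factoring out $\tfrac12$ recovers the claimed expression. I do not expect a real obstacle here: the argument is entirely mechanical, and the only subtlety is careful bookkeeping of the Cholesky factors and their transposes so that the cancellations land correctly.
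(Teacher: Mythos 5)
Your proposal is correct and follows essentially the same route as the paper's proof: identify the marginal law of $\tilde{\targets}$ as $\gaussian{{\mActions'_\idxiter}^\tr\vmu}{{\mActions'_\idxiter}^\tr\shat{\kernmat}\mActions'_\idxiter}$ using the orthonormality of $\mActions'_\idxiter$, expand the $\idxiter$-dimensional Gaussian log-density, and then cancel the Cholesky factors in the quadratic form while splitting the log-determinant via $\log\det{\mL\mL^\tr}=\log\det{\mActions_\idxiter^\tr\mActions_\idxiter}$. All intermediate identities you state check out, so there is nothing to add.
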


\begin{proof}
By the same argument as in \Cref{lem:itergp-modified-observation-model} we obtain that
\begin{align*}
    \projnllitergp(\hyperparams) &= -\log p(\tilde{\targets} \mid \hyperparams)    	= -\log \gaussianpdf{\tilde{\targets}}{ {\mActions'_\idxiter}^\top \vmu}{{\mActions'_\idxiter}^\top \shat{\kernmat} \mActions'_\idxiter}\\
    &= \frac{1}{2} \big( ({\mActions'_\idxiter}^\top\targets-{\mActions'_\idxiter}^\top\vmu)^\top ({\mActions'_\idxiter}^\top\shat{\mK}\mActions'_\idxiter)^{-1} ({\mActions'_\idxiter}^\top\targets-{\mActions'_\idxiter}^\top\vmu) + \log\det{{\mActions'_\idxiter}^\top\shat{\mK}\mActions'_\idxiter}+ \idxiter \log(2\pi) \big) \\
    &=\frac{1}{2} \big( (\targets-\vmu)^\top {\mActions'_\idxiter}({\mActions'_\idxiter}^\top\shat{\mK}\mActions'_\idxiter)^{-1}{\mActions'_\idxiter}^\top (\targets-\vmu) + \log\det{{\mActions'_\idxiter}^\top\shat{\mK}\mActions'_\idxiter}+ \idxiter \log(2\pi) \big) \\
    \intertext{Since \(\mActions_\idxiter' = \mActions_\idxiter \mL^{-\tr}\), where \(\mL^{-\tr}=\operatorname{chol}(\mActions_\idxiter^\tr \mActions_\idxiter)^{-\tr}\) is the orthonormalizing matrix, \(\mL^{-\tr}\) cancels in the quadratic loss term, giving}
    &=\frac{1}{2} \big( (\targets-\vmu)^\top {\mActions_\idxiter}({\mActions_\idxiter}^\top\shat{\mK}\mActions_\idxiter)^{-1}\mActions_\idxiter^\top (\targets-\vmu) + \log\det{{\mActions'_\idxiter}^\top\shat{\mK}\mActions'_\idxiter}+ \idxiter \log(2\pi) \big) \\
    \intertext{and finally we can decompose the log-determinant into a difference of log-determinants}
    &=\frac{1}{2} \big( (\targets-\vmu)^\top {\mActions_\idxiter}({\mActions_\idxiter}^\top\shat{\mK}\mActions_\idxiter)^{-1}\mActions_\idxiter^\top (\targets-\vmu) + \log\det{\mActions_\idxiter^\top\shat{\mK}\mActions_\idxiter} 
    -2\log\det{\mL}+\idxiter \log(2\pi) \big)\\
    &=\frac{1}{2} \big( (\targets-\vmu)^\top {\mActions_\idxiter}({\mActions_\idxiter}^\top\shat{\mK}\mActions_\idxiter)^{-1}\mActions_\idxiter^\top (\targets-\vmu) + \log\det{\mActions_\idxiter^\top\shat{\mK}\mActions_\idxiter} 
    -\log\det{\mL\mL^\tr}+\idxiter \log(2\pi) \big)
\end{align*}
which using \(\mL\mL^\tr = \mActions_\idxiter^\tr \mActions_\idxiter\) completes the proof.
\end{proof}

\subsection{Evidence Lower Bound (ELBO)}
\begin{lemma}[Evidence Lower Bound Training Loss]
	\label{lem:explicit-form-elbo}
	Define the variational family
	\begin{equation}
		\mathcal{Q} \coloneq \left\{q(\vf) = \gaussianpdf{\vf}{\meanfn_\idxiter(\traindata)}{\covfn_\idxiter(\traindata, \traindata)} \mid \mActions \in \R^{\numtraindata \times \idxiter}\right\}
	\end{equation}
	then the \emph{evidence lower bound (ELBO)} is given by
	\begin{equation*}
	\begin{aligned}
		\lossitergp(\hyperparams)
		&= -\log p(\targets \mid \vtheta) + \dkl{q(\vf)}{p(\vf \mid \targets)}\\
		&= -\expval[q]{\log p(\targets \mid \vf)} + \dkl{q(\vf)}{p(\vf)}\\
		&= \frac{1}{2}\bigg(\frac{1}{\sigma^2}\Big(\norm{\targets - \meanfn_\idxiter(\traindata)}_2^2 + \sum_{j=1}^{\numtraindata}\covfn_\idxiter(\vx_j, \vx_j) \Big) + (\numtraindata-\idxiter)\log(\sigma^2)+ \numtraindata\log(2\pi)\\
		&\qquad+\tilde{\rweightsapprox}_\idxiter^\tr \mActions^\tr \kernmat \mActions \tilde{\rweightsapprox}_\idxiter - \trace{(\mActions^\top\hat{\kernmat}\mActions)^{-1}\mActions^\top{\kernmat}\mActions}  + \log\det{\mActions^\top\hat{\kernmat}\mActions} - \log\det{\mActions^\top \mActions}\bigg)
	\end{aligned}
	\end{equation*}
	where \(\tilde{\rweightsapprox}_\idxiter = (\mActions^\top\hat{\kernmat}\mActions)^{-1}\mActions^\tr(\targets - \vmu)\) are the ``projected'' representer weights.
\end{lemma}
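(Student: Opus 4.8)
The plan is to start from the standard variational identity that gives the first equality of the lemma, and then evaluate the two resulting terms---the expected negative log-likelihood and the prior KL divergence---in closed form, exploiting that both $q(\vf)$ and the prior $p(\vf) = \gaussianpdf{\vf}{\vmu}{\kernmat}$ are Gaussians at the training inputs. The first equality, namely $-\log p(\targets \mid \vtheta) + \dkl{q(\vf)}{p(\vf \mid \targets)} = -\expval[q]{\log p(\targets \mid \vf)} + \dkl{q(\vf)}{p(\vf)}$, is the usual ELBO decomposition: expanding $\dkl{q(\vf)}{p(\vf \mid \targets)} = \expval[q]{\log q(\vf) - \log p(\vf \mid \targets)}$ and substituting Bayes' rule $p(\vf \mid \targets) = p(\targets \mid \vf)\,p(\vf)/p(\targets \mid \vtheta)$ makes the evidence $\log p(\targets \mid \vtheta)$ cancel and regroups the rest. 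It therefore suffices to evaluate the two terms on the right-hand side.

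For the expected negative log-likelihood I would use the observation model $p(\targets \mid \vf) = \gaussianpdf{\targets}{\vf}{\noisescale^2\mI}$ together with $q(\vf) = \gaussianpdf{\vf}{\meanfn_\idxiter(\traindata)}{\covfn_\idxiter(\traindata, \traindata)}$ and the standard second-moment identity $\expval[q]{\norm{\targets - \vf}_2^2} = \norm{\targets - \meanfn_\idxiter(\traindata)}_2^2 + \trace{\covfn_\idxiter(\traindata, \traindata)}$. Since $\trace{\covfn_\idxiter(\traindata, \traindata)} = \sum_{j=1}^{\numtraindata}\covfn_\idxiter(\vx_j, \vx_j)$, this directly yields the $\tfrac{1}{\noisescale^2}(\cdots)$ data-fit term together with $\numtraindata\log(\noisescale^2) + \numtraindata\log(2\pi)$.

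The KL term is where the real work lies. Applying the Gaussian KL formula produces three subterms. The mean-difference term is easy: because $\meanfn_\idxiter(\traindata) - \vmu = \kernmat\mActions\tilde{\rweightsapprox}_\idxiter$, the Mahalanobis quantity $(\meanfn_\idxiter(\traindata)-\vmu)^\top\kernmat^{-1}(\meanfn_\idxiter(\traindata)-\vmu)$ collapses to $\tilde{\rweightsapprox}_\idxiter^\top\mActions^\top\kernmat\mActions\tilde{\rweightsapprox}_\idxiter$. The trace term gives $\trace{\kernmat^{-1}\covfn_\idxiter(\traindata, \traindata)} = \numtraindata - \trace{(\mActions^\top\shat{\kernmat}\mActions)^{-1}\mActions^\top\kernmat\mActions}$ after cancelling $\kernmat^{-1}\kernmat$ and using the cyclic property of the trace, where the leading $\numtraindata$ will later cancel the $-\numtraindata$ appearing in the KL formula. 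The log-determinant ratio $\log\det{\kernmat} - \log\det{\covfn_\idxiter(\traindata, \traindata)}$ is the main obstacle. I would apply the matrix determinant lemma to $\covfn_\idxiter(\traindata, \traindata) = \kernmat - \kernmat\mActions(\mActions^\top\shat{\kernmat}\mActions)^{-1}\mActions^\top\kernmat$ to factor out $\det{\kernmat}$, leaving $\det{\mI - \mActions^\top\kernmat\mActions(\mActions^\top\shat{\kernmat}\mActions)^{-1}}$ (the $\kernmat^{-1}$ from the lemma cancels the outer $\kernmat$ factors). The crucial simplification is to substitute $\shat{\kernmat} = \kernmat + \noisescale^2\mI$, so that $\mActions^\top\shat{\kernmat}\mActions - \mActions^\top\kernmat\mActions = \noisescale^2\mActions^\top\mActions$ and hence $\mI - \mActions^\top\kernmat\mActions(\mActions^\top\shat{\kernmat}\mActions)^{-1} = \noisescale^2\mActions^\top\mActions(\mActions^\top\shat{\kernmat}\mActions)^{-1}$. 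Taking logarithms then gives $\log\det{\kernmat} - \log\det{\covfn_\idxiter(\traindata, \traindata)} = -\idxiter\log(\noisescale^2) - \log\det{\mActions^\top\mActions} + \log\det{\mActions^\top\shat{\kernmat}\mActions}$, which supplies exactly the $\log\det{\mActions^\top\shat{\kernmat}\mActions} - \log\det{\mActions^\top\mActions}$ pair and the otherwise-missing $-\idxiter\log(\noisescale^2)$.

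Finally, I would add the two evaluated terms. The $\numtraindata$ from the trace cancels the $-\numtraindata$ in the KL formula, and combining $\numtraindata\log(\noisescale^2)$ from the likelihood with $-\idxiter\log(\noisescale^2)$ from the log-determinant gives $(\numtraindata-\idxiter)\log(\noisescale^2)$. Collecting the remaining terms reproduces the claimed closed form. The only genuinely delicate step is the determinant manipulation with the matrix determinant lemma and the $\shat{\kernmat} = \kernmat + \noisescale^2\mI$ substitution; everything else is bookkeeping with standard Gaussian identities.
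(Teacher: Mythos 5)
Your proposal is correct and follows essentially the same route as the paper's proof: the standard ELBO decomposition, the Gaussian second-moment identity for the expected log-likelihood, and a Gaussian KL computation whose log-determinant term is reduced via the matrix determinant lemma together with the observation $\mActions^\top\shat{\kernmat}\mActions - \mActions^\top\kernmat\mActions = \noisescale^2\mActions^\top\mActions$. The only cosmetic difference is that the paper first rewrites the determinant ratio as $\det{\mI_{\numtraindata\times\numtraindata} - \kernmatinvapprox_\idxiter\kernmat}$ before collapsing it to an $\idxiter\times\idxiter$ determinant, whereas you apply the lemma directly to $\covfn_\idxiter(\traindata,\traindata)$; the resulting computation is identical.
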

\begin{proof}
The ELBO is given by
\begin{equation*}
	-\lossitergp(\hyperparams)= \expval[q]{\log p(\targets \mid \vf)} - \dkl{q(\vf)}{p(\vf)}.
\end{equation*}
We first compute the expected log-likelihood term.
\begin{align*}
	\expval[q]{\log p(\targets \mid \vf)} &= \expval[q]{-\frac{1}{2}\left( \frac{1}{\sigma^2}(\targets - \vf)^\tr(\targets - \vf) + \log\det{\sigma^2 \mI_{\numtraindata \times \numtraindata}} + n \log(2\pi) \right)}\\
	&= -\frac{1}{2}\left( \frac{1}{\sigma^2} \expval[q]{(\targets - \vf)^\tr(\targets - \vf)} + \numtraindata \log(\sigma^2) + \numtraindata \log(2\pi)\right)\\
	\intertext{Now using \(\expval{\vx^\tr \mA \vx} = \expval{\vx}^\tr \mA \expval{\vx} + \trace{\mA\cov{\vx}}\), we obtain}
	&= -\frac{1}{2}\left( \frac{1}{\sigma^2} \bigg(\norm{\targets - \meanfn_\idxiter(\traindata)}_2^2 + \trace{\covfn_\idxiter(\traindata, \traindata)} \bigg) + \numtraindata \log(\sigma^2) + n \log(2\pi)\right)\\ 
	&= -\frac{1}{2}\left( \frac{1}{\sigma^2} \bigg(\norm{\targets - \meanfn_\idxiter(\traindata)}_2^2 + \sum_{j=1}^{\numtraindata}\covfn_\idxiter(\vx_j, \vx_j) \bigg) + \numtraindata \log(\sigma^2) + \numtraindata \log(2\pi)\right)
\end{align*}
Since both \(q(\vf)\) and the prior \(p(\vf)\) are Gaussian, the KL divergence term between them is given by
\begin{align*}
	\dkl{q(\vf)}{p(\vf)} &= \frac{1}{2}\bigg( (\meanfn_\idxiter(\traindata) - \meanfn(\traindata))^\top \kernmat^{-1}(\meanfn_\idxiter(\traindata) - \meanfn(\traindata))  +\log\left(\frac{\det{\kernmat}}{\det{\covfn_\idxiter(\traindata, \traindata)}}\right)\\
	&\qquad + \trace{\kernmat^{-1}\covfn_\idxiter(\traindata, \traindata)} - \numtraindata\bigg)\\
	&= \frac{1}{2}\bigg((\kernmat\kernmatinvapprox_\idxiter(\targets - \meanfn(\traindata)))^\top \kernmat^{-1}\kernmat\kernmatinvapprox_\idxiter(\targets - \meanfn(\traindata)) -\log\det{\kernmat^{-1}\covfn_\idxiter(\traindata, \traindata)}\\
	&\qquad + \trace{\mI_{\numtraindata \times \numtraindata} - \kernmatinvapprox_\idxiter \kernmat} - \numtraindata\bigg)\\
	&= \frac{1}{2}\bigg((\targets - \meanfn(\traindata))^\top\kernmatinvapprox_\idxiter \kernmat\kernmatinvapprox_\idxiter(\targets - \meanfn(\traindata)) -\log\det{\mI_{\numtraindata \times \numtraindata} - \kernmatinvapprox_\idxiter \kernmat} + \trace{\mI_{\numtraindata \times \numtraindata} - \kernmatinvapprox_\idxiter \kernmat} - \numtraindata\bigg)\\
	&= \frac{1}{2}\bigg(\tilde{\rweightsapprox}_\idxiter^\tr \mActions^\tr \kernmat \mActions \tilde{\rweightsapprox}_\idxiter -\log\det{\mI_{\numtraindata \times \numtraindata} - \kernmatinvapprox_\idxiter \kernmat} + \trace{\mI_{\numtraindata \times \numtraindata} - \kernmatinvapprox_\idxiter \kernmat} - \numtraindata\bigg)\\
	&= \frac{1}{2}\bigg(\tilde{\rweightsapprox}_\idxiter^\tr \mActions^\tr \kernmat \mActions \tilde{\rweightsapprox}_\idxiter -\log\det{\mI_{\numtraindata \times \numtraindata} - \kernmatinvapprox_\idxiter \kernmat} - \trace{(\mActions^\top\hat{\kernmat}\mActions)^{-1} \mActions^\top\kernmat\mActions}\bigg)\\
	\intertext{Next, we use the matrix determinant lemma \(\det{\mA + \mU \mV^\tr} = \det{\mI_m + \mV^\top \mA^{-1}\mU}\det{\mA}\):}
	&= \frac{1}{2}\bigg(\tilde{\rweightsapprox}_\idxiter^\tr \mActions^\tr \kernmat \mActions \tilde{\rweightsapprox}_\idxiter -\log\det{\mI_{\idxiter \times \idxiter} - (\mActions^\top\hat{\kernmat}\mActions)^{-1} \mActions^\top\kernmat\mActions} - \trace{(\mActions^\top\hat{\kernmat}\mActions)^{-1} \mActions^\top\kernmat\mActions}\bigg)\\
	&= \frac{1}{2}\bigg(\tilde{\rweightsapprox}_\idxiter^\tr \mActions^\tr \kernmat \mActions \tilde{\rweightsapprox}_\idxiter -\log\det{(\mActions^\top\hat{\kernmat}\mActions)^{-1}(\mActions^\top\hat{\kernmat}\mActions -  \mActions^\top\kernmat\mActions)} - \trace{(\mActions^\top\hat{\kernmat}\mActions)^{-1} \mActions^\top\kernmat\mActions}\bigg)\\
	&= \frac{1}{2}\bigg(\tilde{\rweightsapprox}_\idxiter^\tr \mActions^\tr \kernmat \mActions \tilde{\rweightsapprox}_\idxiter -\log\det{(\mActions^\top\hat{\kernmat}\mActions)^{-1}(\sigma^2  \mActions^\top\mActions)} - \trace{(\mActions^\top\hat{\kernmat}\mActions)^{-1} \mActions^\top\kernmat\mActions}\bigg)\\
	&= \frac{1}{2}\bigg(\tilde{\rweightsapprox}_\idxiter^\tr \mActions^\tr \kernmat \mActions \tilde{\rweightsapprox}_\idxiter +\log\det{\mActions^\top\hat{\kernmat}\mActions} - \log\det{\sigma^2  \mActions^\top\mActions} - \trace{(\mActions^\top\hat{\kernmat}\mActions)^{-1} \mActions^\top\kernmat\mActions}\bigg)\\
	&= \frac{1}{2}\bigg(\tilde{\rweightsapprox}_\idxiter^\tr \mActions^\tr \kernmat \mActions \tilde{\rweightsapprox}_\idxiter +\log\det{\mActions^\top\hat{\kernmat}\mActions} - \idxiter\log(\sigma^2)-\log\det{\mActions^\top\mActions} - \trace{(\mActions^\top\hat{\kernmat}\mActions)^{-1} \mActions^\top\kernmat\mActions}\bigg)
\end{align*}
\end{proof}

\newpage
\subsection{Comparison of Training Losses}

\begin{figure}[h!]
	\centering
	\includegraphics[width=0.9\textwidth]{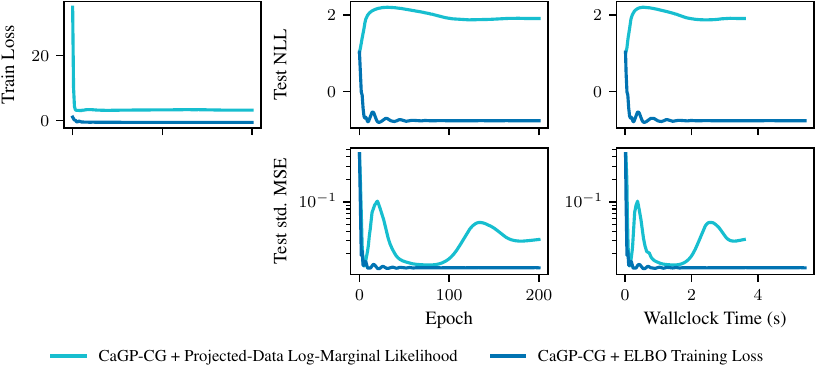}
	\caption{
		\emph{Comparison of two different training losses for CaGP.}
		The naive choice of the projected-data log-marginal likelihood leads to increasingly worse generalization performance as measured by NLL.
		In comparison, the ELBO training loss leads to much better performance. 
	}
	\label{fig:training-losses}
\end{figure}

\begin{figure}[h!]
	\centering
	\includegraphics[width=0.9\textwidth]{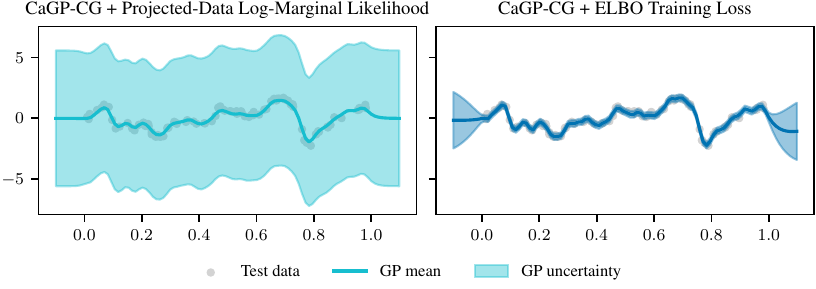}
	\caption{\emph{CaGP predictive distributions with hyperparameters optimized using different losses.}
		When optimizing hyperparameters with respect to the projected-data log-marginal likelihood, CaGP-CG completely overestimates the noise scale, which leads to increasingly worse generalization performance. In comparison, the ELBO training loss leads to a much better overall fit.
	}
	\label{fig:training-losses-predictives}
\end{figure}

\section{Choice of Actions}
\label{suppsec:policies}

We begin by proving that the \emph{CaGP posterior in \cref{eqn:computation-aware-gp-posterior} is uniquely defined by the space spanned by the columns of the actions} \(\colsp{\mActions}\), rather than the specific choice of the matrix \(\mS\).

\begin{restatable}[The CaGP Posterior Is Uniquely Defined by the Column Space of the Actions]{lemma}{cagpuniquecolumnspace}
	\label{lem:cagp-posterior-uniquely-defined-column-space-actions}
	Let \(\mActions, \mActions' \in \R^{\numtraindata \times \idxiter}\) be two action matrices, each of which consists of non-zero and linearly independent action vectors, such that their column spaces are identical, \ie
	\begin{equation}
		\label{eqn:equal-column-spaces}
		\colsp{\mS} = \colsp{\mS'},
	\end{equation}
	then the corresponding CaGP posteriors \(\gp{\meanfn_\idxiter}{\kernel_\idxiter}\) and \(\gp{\meanfn_\idxiter'}{\kernel_\idxiter'}\) are equivalent.
\end{restatable}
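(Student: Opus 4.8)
The plan is to reduce the entire claim to a single algebraic invariance, namely that the low-rank precision approximation
\[
  \kernmatinvapprox_\idxiter = \mActions (\mActions^\top \shat{\kernmat}\mActions)^{-1} \mActions^\top
\]
depends only on \(\colsp{\mActions}\). First I would observe that both functions defining the CaGP posterior in \cref{eqn:computation-aware-gp-posterior} see the actions solely through \(\kernmatinvapprox_\idxiter\): the covariance function is \(\covfn_\idxiter(\testpoint, \testpoint) = \covfn(\testpoint, \testpoint) - \covfn(\testpoint, \mX)\kernmatinvapprox_\idxiter \covfn(\mX, \testpoint)\), and the mean function uses \(\rweightsapprox_\idxiter = \kernmatinvapprox_\idxiter(\targets - \vmu)\), which is itself determined by \(\kernmatinvapprox_\idxiter\). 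Hence if \(\kernmatinvapprox_\idxiter = \kernmatinvapprox_\idxiter'\), then both \(\meanfn_\idxiter = \meanfn_\idxiter'\) and \(\covfn_\idxiter = \covfn_\idxiter'\) for every test input, and the posteriors coincide.

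The key step is then to exploit the hypothesis \(\colsp{\mActions} = \colsp{\mActions'}\) from \cref{eqn:equal-column-spaces}. Since each of \(\mActions, \mActions' \in \R^{\numtraindata \times \idxiter}\) consists of \(\idxiter\) linearly independent columns, both have full column rank \(\idxiter\) and span the same \(\idxiter\)-dimensional subspace; standard linear algebra then guarantees an invertible change-of-basis matrix \(\mT \in \R^{\idxiter \times \idxiter}\) with \(\mActions' = \mActions \mT\). (Invertibility of \(\mActions^\top \shat{\kernmat}\mActions\) and \({\mActions'}^\top \shat{\kernmat}\mActions'\) is assured because \(\shat{\kernmat} = \kernmat + \noisescale^2\mI \succ 0\) and the actions have full column rank, so the reduced Gram matrices are positive definite.)

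Substituting \(\mActions' = \mActions \mT\) into the expression for \(\kernmatinvapprox_\idxiter'\) and using \((\mT^\top \mA \mT)^{-1} = \mT^{-1}\mA^{-1}\mT^{-\top}\) yields
\[
  \kernmatinvapprox_\idxiter' = \mActions\mT \big(\mT^\top \mActions^\top \shat{\kernmat}\mActions \mT\big)^{-1} \mT^\top \mActions^\top = \mActions\mT\,\mT^{-1}\big(\mActions^\top \shat{\kernmat}\mActions\big)^{-1}\mT^{-\top}\,\mT^\top \mActions^\top = \kernmatinvapprox_\idxiter,
\]
so the \(\mT\) factors cancel on both sides. This immediately gives \(\rweightsapprox_\idxiter' = \rweightsapprox_\idxiter\) as well, and by the first paragraph the two CaGP posteriors \(\gp{\meanfn_\idxiter}{\kernel_\idxiter}\) and \(\gp{\meanfn_\idxiter'}{\kernel_\idxiter'}\) are identical, completing the proof. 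There is no genuine obstacle here beyond the bookkeeping: the only point requiring care is justifying the existence and invertibility of \(\mT\), which is exactly where the linear-independence and equal-column-space hypotheses are used. The argument is purely a consequence of the oblique-projection structure of \(\kernmatinvapprox_\idxiter\) being invariant under right-multiplication of \(\mActions\) by an invertible matrix.
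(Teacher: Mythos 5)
Your proof is correct and follows essentially the same route as the paper's: both reduce the claim to the invariance of \(\kernmatinvapprox_\idxiter = \mActions(\mActions^\top\shat{\kernmat}\mActions)^{-1}\mActions^\top\) under right-multiplication of \(\mActions\) by an invertible change-of-basis matrix, whose existence and invertibility follow from the equal column spaces and linear independence of the action vectors. Your additional remark justifying the invertibility of the reduced Gram matrices via \(\shat{\kernmat} \succ 0\) is a nice touch the paper leaves implicit.
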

\begin{proof}
	By assumption \eqref{eqn:equal-column-spaces} there exists \(\mW \in \R^{\idxiter \times \idxiter}\) such that \(\mActions^{'} = \mActions \mW\). Since action vectors are assumed to be linearly independent and non-zero, it holds that \(\idxiter = \rank{\mActions'} = \rank{\mActions \mW} = \rank{\mW}\), where the last equality follows from standard properties of the matrix rank. Therefore \(\mW\) is invertible, and we have that
\begin{equation*}
	\kernmatinvapprox' = \mActions^{'}({\mActions^{'}}^\tr \hat{\kernmat}\mActions^{'})^{-1}{\mActions^{'}}^\tr = \mActions \mW(\mW^\tr\mActions^\tr \hat{\kernmat}\mActions\mW)^{-1}\mW^\tr\mActions^\tr  = \mActions(\mActions^\tr \hat{\kernmat}\mActions)^{-1}\mActions^\tr = \kernmatinvapprox.
\end{equation*}
	Since the CaGP posterior in \cref{eqn:computation-aware-gp-posterior} is fully defined via the approximate precision matrix \(\kernmatinvapprox\), the desired result follows.
\end{proof}

\begin{restatable}[Action Order and Magnitude Does Not Change CaGP Posterior]{corollary}{actionordermagnitude}
	The CaGP posterior in \cref{eqn:computation-aware-gp-posterior} is invariant under permutation and rescaling of the actions.
\end{restatable}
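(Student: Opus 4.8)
The plan is to reduce the claim to the immediately preceding \cref{lem:cagp-posterior-uniquely-defined-column-space-actions}, which already establishes that the CaGP posterior depends only on the column space $\colsp{\mActions}$ of the actions. It therefore suffices to show that permuting and rescaling the action vectors leave this column space unchanged while preserving the hypotheses of that lemma, namely that the action vectors remain non-zero and linearly independent.

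First I would encode both operations as right-multiplication by an invertible matrix. Permuting the columns of $\mActions$ amounts to $\mActions \mP$ for a permutation matrix $\mP \in \R^{\idxiter \times \idxiter}$, and rescaling each column by a non-zero scalar $c_j \neq 0$ amounts to $\mActions \mD$ for the diagonal matrix $\mD = \operatorname{diag}(c_1, \dots, c_\idxiter)$. Both $\mP$ and $\mD$ are invertible, so each is an instance of the change-of-basis matrix $\mW$ appearing in \cref{lem:cagp-posterior-uniquely-defined-column-space-actions}; consequently $\colsp{\mActions \mP} = \colsp{\mActions} = \colsp{\mActions \mD}$, and more generally any composition of a permutation and a rescaling is right-multiplication by an invertible (monomial) matrix with the same column space.

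Next I would verify that the transformed action matrices still satisfy the lemma's hypotheses. Permutation merely reorders the columns, so non-zeroness and linear independence are preserved trivially. Rescaling by non-zero scalars keeps every column non-zero, and since $\mD$ is invertible it preserves rank, so the rescaled columns remain linearly independent. Invoking \cref{lem:cagp-posterior-uniquely-defined-column-space-actions} with $\mActions' = \mActions \mP$ (respectively $\mActions' = \mActions \mD$) then gives equality of the associated precision approximations $\kernmatinvapprox' = \kernmatinvapprox$, hence of the two CaGP posteriors, which is the assertion.

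I do not anticipate a substantive obstacle, since the statement is a direct specialization of the column-space lemma. The only point requiring minor care is ensuring the rescaling factors are genuinely non-zero: this is precisely the condition that makes $\mD$ invertible and simultaneously preserves the non-zero, linearly-independent structure the lemma requires, so the two requirements coincide and no separate argument is needed.
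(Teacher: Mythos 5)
Your proposal is correct and follows essentially the same route as the paper, which likewise proves the corollary by taking $\mW$ in \cref{lem:cagp-posterior-uniquely-defined-column-space-actions} to be a permutation matrix or an invertible diagonal matrix. Your additional check that the transformed actions remain non-zero and linearly independent is a sensible (if routine) verification that the paper leaves implicit.
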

\begin{proof}
	This follows immediately by choosing a permutation or a diagonal matrix \(\mW\), respectively, such that \(\mS' = \mS \mW\) in \cref{lem:cagp-posterior-uniquely-defined-column-space-actions}.
\end{proof}

\subsection{(Conjugate) Gradient / Residual Policy}
\label{suppsec:cg-policy}

Consider the following linear system
\begin{equation}
	\label{eqn:lanczos-cg-linsys}
	\hat{\kernmat}\rweights = \targets - \vmu
\end{equation}
with symmetric positive definite kernel matrix \(\hat{\kernmat} = \kernmat + \noisescale^2 \mI\), observations \(\targets\), prior mean evaluated at the data \(\vmu = \meanfn(\traindata)\) and representer weights \(\rweights\). 

\paragraph{Lanczos process \cite{Lanczos1950IterationMethod}} 
The Lanczos process is an iterative method, which computes approximate eigenvalues \(\hat{\mLambda} = \operatorname{diag}(\hat{\lambda}_1, \dots, \hat{\lambda}_\idxiter)\in \R^{\idxiter \times \idxiter}\) and approximate eigenvectors \(\hat{\mU} = \begin{pmatrix} \hat{\vu}_1 \cdots \hat{\vu}_\idxiter\end{pmatrix} \in \R^{\numtraindata \times \idxiter}\) for a symmetric positive definite matrix \(\hat{\kernmat}\) by repeated matrix-vector multiplication. Given an arbitrary starting vector \(\vq_1 \in \R^\numtraindata\), \st \(\norm{\vq_1}_2 = 1\), it returns \(\idxiter\) orthonormal vectors \(\mQ = \begin{pmatrix} \vq_1 \cdots \vq_\idxiter \end{pmatrix} \in \R^{\numtraindata \times \idxiter}\) and a tridiagonal matrix \(\mT = \mQ^\top \hat{\kernmat} \mQ \in \R^{\idxiter \times \idxiter}\). The eigenvalue approximations are given by an eigendecomposition of \(\mT = \mW \hat{\mLambda} \mW^\top\), where \(\mW \in \R^{\idxiter \times \idxiter}\) orthonormal, and the eigenvector approximations are then given by \(\hat{\mU} = \mQ \mW \in \R^{\numtraindata \times \idxiter}\) \cite[\eg Sec.~10.1.4 of][]{GolubVanLoan2012}.

\paragraph{Conjugate Gradient Method \cite{Hestenes1952MethodsConjugate}}
The conjugate gradient method is an iterative method to solve linear systems with symmetric positive definite system matrix by repeated matrix-vector multiplication. When applied to \cref{eqn:lanczos-cg-linsys}, it produces a sequence of representer weights approximations \(\rweightsapprox_\idxiter \approx \rweights = \hat{\kernmat}^{-1}(\targets - \vmu)\). Its residuals \(\vr_\idxiter = \targets - \vmu - \hat{\kernmat}\rweightsapprox_\idxiter\) are proportional to the Lanczos vectors for a Lanczos process initialized at \(\vq_1 = \frac{\vr_0}{\norm{\vr_0}_2}\), \ie \(\mQ = \mR \mD\) where \(\mR \in \R^{\numtraindata \times \idxiter}\) is the matrix of residuals and \(\mD \in \R^{\idxiter \times \idxiter}\) a diagonal matrix (\eg \cite[Alg.~11.3.2 in][]{GolubVanLoan2012} or \cite[Sec.~3 \& Eqn. (3.4) of][]{Meurant2006LanczosConjugate}).

Therefore choosing actions defined by the residuals of CG in CaGP-CG, \ie \(\mActions = \mR\), is equivalent to choosing actions \(\mActions' = \hat{\mU}\) given by the eigenvector approximations computed by the Lanczos process initialized as above, since
\begin{equation*}
	\colsp{\mActions} = \colsp{\mR} = \colsp{\mR\mD} = \colsp{\mQ} = \colsp{\mQ \mW} = \colsp{\hat{\mU}} = \colsp{\mS'}
\end{equation*}
and by \Cref{lem:cagp-posterior-uniquely-defined-column-space-actions} it holds that the corresponding CaGP posteriors with actions \(\mActions\) and \(\mActions'\) are equivalent.

\subsection{Information-theoretic Policy}
\label{suppsec:info-theoretic-policy}

In information-theoretic formulations of active learning, new data is selected to minimize uncertainty about a set of latent variables \(\vz\). In other words, we would aim to minimize the entropy of the posterior \(\entropy[p(\vz \mid \traindata)]{\vz} = - \int \log p(\vz \mid \traindata) p(\vz \mid \traindata)\, d\vz\) as a function of the data \(\traindata\) \cite{Houlsby2011Bayesian}. In analogy to active learning, in our setting we propose to perform computations \(\targets \mapsto \mActions_\idxiter^\top \targets\) to maximally reduce uncertainty about the latent function \(f(\traindata)\) evaluated at the training data.

\begin{restatable}[Information-theoretic Policy]{lemma}{informationtheoreticpolicy}
	\label{lem:info-theory-policy}
	The actions \(\mActions\) minimizing the entropy of the computation-aware posterior \(p(f(\traindata) \mid \mActions^\top \targets)\) at the training data, or equivalently the actions maximizing the mutual information between \(f(\traindata)\) and the projected data \(\mActions^\top \targets\), are given by
	\begin{align}
		\begin{pmatrix} \vs_1, \dots, \vs_\idxiter \end{pmatrix} &= \argmin_{\mActions  \in \R^{n \times \idxiter}} \entropy[p(f(\traindata) \mid \mActions^\top \targets)]{f(\traindata)}\\
		&= \argmax_{\mActions  \in \R^{n \times \idxiter}} \underbracket[0.1ex]{\entropy{f(\traindata)} - \entropy{f(\traindata) \mid \mActions^\top \targets}}_{\eqqcolon\mutualinfo{f(\traindata); \mActions^\top \targets}}
	\end{align}
	where \(\vs_1, \dots, \vs_\idxiter\) are the top-\(\idxiter\) eigenvectors of \(\shat{\kernmat}\) in descending order of the eigenvalue magnitude.
\end{restatable}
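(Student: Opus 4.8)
The plan is to exploit the Gaussianity of every distribution involved and reduce the problem to a determinant maximization over $\idxiter$-dimensional subspaces. First I would rewrite the mutual information in the form most convenient for computation, using its symmetry,
\[
\mutualinfo{f(\traindata); \mActions^\top\targets} = \entropy{\mActions^\top\targets} - \entropy{\mActions^\top\targets \mid f(\traindata)},
\]
and note that maximizing this is equivalent to minimizing the posterior entropy because $\entropy{f(\traindata)}$ is a constant independent of $\mActions$. Since $f(\traindata)\sim\gaussian{\vmu}{\kernmat}$ and $\targets\mid f(\traindata)\sim\gaussian{f(\traindata)}{\noisescale^2\mId}$, both $\mActions^\top\targets\sim\gaussian{\mActions^\top\vmu}{\mActions^\top\shat{\kernmat}\mActions}$ and $\mActions^\top\targets\mid f(\traindata)\sim\gaussian{\mActions^\top f(\traindata)}{\noisescale^2\mActions^\top\mActions}$ are Gaussian, so the standard entropy of a Gaussian with covariance $\mSigma$, namely $\tfrac12\log\det{2\pi e\,\mSigma}$, gives
\[
\mutualinfo{f(\traindata); \mActions^\top\targets} = \tfrac12\log\det{\mActions^\top\shat{\kernmat}\mActions} - \tfrac12\log\det{\noisescale^2\mActions^\top\mActions} = \tfrac12\log\frac{\det{\mActions^\top\shat{\kernmat}\mActions}}{\noisescale^{2\idxiter}\det{\mActions^\top\mActions}}.
\]

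This identity already exposes the structure of the problem: the objective depends on $\mActions$ only through the ratio $\det{\mActions^\top\shat{\kernmat}\mActions}/\det{\mActions^\top\mActions}$, which is invariant under $\mActions\mapsto\mActions\mW$ for any invertible $\mW\in\R^{\idxiter\times\idxiter}$ (the factors $\det{\mW}^2$ cancel). This is consistent with the column-space invariance of the CaGP posterior established in \cref{lem:cagp-posterior-uniquely-defined-column-space-actions}. I would therefore pick the representative of $\colsp{\mActions}$ with orthonormal columns, $\mActions^\top\mActions = \mId$, which reduces the task to maximizing $\det{\mActions^\top\shat{\kernmat}\mActions}$ over all $\mActions\in\R^{\numtraindata\times\idxiter}$ with orthonormal columns, equivalently over all $\idxiter$-dimensional subspaces of $\R^\numtraindata$.

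The remaining step is the subspace determinant maximization, which is the crux of the argument. Writing $\lambda_1\geq\dots\geq\lambda_\numtraindata$ for the eigenvalues of the symmetric positive definite matrix $\shat{\kernmat}$, the Poincar\'e separation theorem (equivalently Cauchy eigenvalue interlacing) guarantees that the eigenvalues $\mu_1\geq\dots\geq\mu_\idxiter$ of the compression $\mActions^\top\shat{\kernmat}\mActions$ satisfy $\mu_\ell\leq\lambda_\ell$ for every $\ell$, whence $\det{\mActions^\top\shat{\kernmat}\mActions} = \prod_{\ell=1}^{\idxiter}\mu_\ell \leq \prod_{\ell=1}^{\idxiter}\lambda_\ell$, with equality precisely when $\colsp{\mActions}$ is the span of the top-$\idxiter$ eigenvectors of $\shat{\kernmat}$. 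Taking $\mActions = \begin{pmatrix}\vs_1 & \cdots & \vs_\idxiter\end{pmatrix}$ to be these eigenvectors attains the bound, establishing the claim; by the permutation- and rescaling-invariance of the posterior (\cref{lem:cagp-posterior-uniquely-defined-column-space-actions}) the stated descending ordering is immaterial, and since $\shat{\kernmat} = \kernmat + \noisescale^2\mId$ shares its eigenvectors with $\kernmat$ the result may equivalently be phrased in terms of $\kernmat$.

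I expect the last step to be the main obstacle: everything preceding it is a routine Gaussian-entropy computation, but identifying that the maximizer is \emph{exactly} the top-$\idxiter$ eigenspace (rather than merely upper-bounding the objective) requires the interlacing argument together with a careful analysis of its equality condition.
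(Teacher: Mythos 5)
Your proof is correct, and it arrives at the same objective $\tfrac12\log\bigl(\det{\mActions^\top\shat{\kernmat}\mActions}/(\noisescale^{2\idxiter}\det{\mActions^\top\mActions})\bigr)$ as the paper, but by a genuinely different route. The paper works on the $f(\traindata)$ side of the mutual information: it first identifies the conditional entropy with the posterior entropy (using that the Gaussian posterior covariance does not depend on the realized data, via the law of the unconscious statistician), then evaluates $\entropy{f(\traindata)} - \entropy[p(f(\traindata) \mid \mActions^\top \targets)]{f(\traindata)}$ as a difference of $n\times n$ log-determinants and needs the matrix determinant lemma to collapse $\log\det{\kernmat - \kernmat\kernmatinvapprox_\idxiter\kernmat}$ down to an $\idxiter\times\idxiter$ expression. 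You instead exploit the symmetry of mutual information and work on the $\mActions^\top\targets$ side, where both relevant distributions are already $\idxiter$-dimensional Gaussians with covariances $\mActions^\top\shat{\kernmat}\mActions$ and $\noisescale^2\mActions^\top\mActions$, so the determinant-lemma manipulations disappear entirely; what you buy is a shorter computation, at the cost of leaving implicit the identification of the conditional entropy $\entropy{f(\traindata) \mid \mActions^\top \targets}$ with the entropy of the posterior for a \emph{fixed} realization of $\mActions^\top\targets$, which is needed to connect the mutual-information formulation to the ``minimize posterior entropy'' formulation in the lemma statement (for Gaussians this is immediate, so the omission is cosmetic, but the paper does spell it out). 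For the final optimization step your argument is tighter than the paper's: after reducing to orthonormal columns the paper simply asserts that the maximum of $\log\det{\tilde{\mS}^\top\shat{\kernmat}\tilde{\mS}}$ is attained at the top eigenvectors, whereas you invoke the Poincar\'e separation theorem to obtain the upper bound $\prod_{\ell=1}^{\idxiter}\lambda_\ell$ together with its equality condition, which is the cleanest way to see that the top-$\idxiter$ eigenspace is not merely feasible but optimal (uniquely so as a subspace when $\lambda_\idxiter > \lambda_{\idxiter+1}$).
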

\begin{proof}
	Let \(\tilde{\targets} \coloneqq \mActions^\top \targets\) and \(\vf \coloneqq f(\traindata)\). By assumption, we have that \(\vf \sim \gaussian{\vmu}{\kernmat}\). Recall that the entropy of a Gaussian random vector \(\vf \sim \gaussian{\vm}{\mActions}\) is given by \(\entropy{\vf} = \frac{1}{2}\big(\log \det{\mActions} + n \log(2\pi e) \big)\).
	Now since the covariance function of the computation-aware posterior in \cref{eqn:computation-aware-gp-posterior} does not depend on the targets \(\targets\), neither does its entropy \(\entropy[p(\vf \mid \mActions^\top{\targets})]{\vf}\).
	
	Therefore, by definition of the \emph{conditional} entropy and using the law of the unconscious statistician, it holds that
	\begin{align*}
		\entropy{\vf \mid \tilde{\targets}} &= - \int \int \log p(\vf \mid \tilde{\targets}) p(\vf \mid \tilde{\targets}) p(\tilde{\targets})\, d\vf\, d\tilde{\targets}\\
		&= \expval[p(\tilde{\targets})]{\entropy[p(\vf \mid \tilde{\targets})]{\vf}}\\
		&= \expval[p({\targets})]{\entropy[p(\vf \mid \mActions^\top{\targets})]{\vf}}\\
        \intertext{and since the covariance of a Gaussian conditioned on data doesn't depend on the data, we have}
		&= \entropy[p(\vf \mid \mActions^\top{\targets})]{\vf}
	\end{align*}
	Therefore we can rewrite the mutual information in terms of prior and posterior entropy, such that
	\begin{align*}
		\entropy{\vf} - \entropy{\vf \mid \mActions^\top \targets} & = \entropy{\vf} - \entropy[p(\vf \mid \mActions^\top{\targets})]{\vf}\\
		&= \frac{1}{2}\big( \log \det{\kernmat} + n \log(2\pi e) - \log \det{\kernmat - \kernmat \kernmatinvapprox_\idxiter \kernmat} - n \log(2\pi e)\big)\\
		&= - \frac{1}{2}\log \bigg(\det{\kernmat - \kernmat \mActions (\mActions^\tr \hat{\kernmat}\mActions)^{-1}\mActions^\tr \kernmat } \det{\kernmat^{-1}}\big)\\
		\intertext{Via the matrix determinant lemma \(\det{\mA + \mU \mW \mV^\tr} = \det{\mW^{-1} + \mV^\tr \mA^{-1}\mU}\det{\mW}\det{\mA}\), we obtain}
		&= - \frac{1}{2}\log \big(\det{- \mActions^\tr \hat{\kernmat}\mActions + \mActions^\tr \kernmat \kernmat^{-1}\kernmat \mActions}\det{-(\mActions^\tr \hat{\kernmat}\mActions)^{-1}}\det{\kernmat}\det{\kernmat^{-1}}\bigg)\\
		&= - \frac{1}{2}\log \big(\det{\mActions^\tr {\kernmat}\mActions - \mActions^\tr \hat{\kernmat}\mActions} \det{-(\mActions^\tr \hat{\kernmat}\mActions)^{-1}}\big)\\
		&=-\frac{1}{2}\log\det{\noisescale^2 \mActions^\tr \mActions(\mActions^\tr \hat{\kernmat}\mActions)^{-1}}\\
		&= \frac{1}{2}\log \det{\noisescale^{-2} (\mActions^\tr \mActions)^{-1}\mActions^\tr \hat{\kernmat}\mActions}\\
		&=\frac{1}{2}\big(\log \det{(\mActions^\tr \mActions)^{-1}\mActions^\tr \hat{\kernmat}\mActions} - \idxiter \log(\noisescale^2)\big)\\
		&=\frac{1}{2}\big(\log \det{(\mL\mL^\tr)^{-1}\mActions^\tr \hat{\kernmat}\mActions} - \idxiter \log(\noisescale^2)\big)\\
		&=\frac{1}{2}\big(\log \det{\mL^{-\tr}\mL^{-1}\mActions^\tr \hat{\kernmat}\mActions} - \idxiter \log(\noisescale^2)\big)\\
		&= \frac{1}{2} \big(\log \det{\mL^{-1}\mActions^\tr \shat{\kernmat} \mActions \mL^{-\tr}} - \idxiter \log(\sigma^2)\big)
	\end{align*}
	for \(\mL\) a square root of \(\mActions^\tr \mActions\).
	Now we can upper bound the above as follows
	\begin{align*}
		\max_{\mActions  \in \R^{n \times \idxiter}} \entropy{\vf} - \entropy{\vf \mid \mActions^\top \targets} &\leq \max_{\tilde{\mS} \in \R^{n \times \idxiter}} \frac{1}{2}\big(\log \det{\tilde{\mS}^\tr \shat{\kernmat} \tilde{\mS}} - \idxiter \log(\sigma^2)\big)\\
		&= \frac{1}{2}\big(\log \det{\mU \shat{\kernmat} \mU^\tr} - \idxiter \log(\sigma^2))\\
		&= \frac{1}{2}\big( \sum_{j=1}^\idxiter \log(\lambda_j(\shat{\kernmat})) - \idxiter \log(\sigma^2)\big)
	\end{align*}
	where \(\mU\) are the orthonormal eigenvectors of \(\shat{\kernmat}\) for the largest \(\idxiter\) eigenvalues.
	Now choosing \(\mActions = \mU\) achieves the upper bound since \(\mU^\tr \mU = \mI\) and therefore \(\mS_\idxiter = \mU\) is a solution to the optimization problem.

	Finally using the argument above and since \(\entropy{\vf}\) does not depend on \(\mActions\), we have that
	\begin{equation*}
		\argmax_{\mActions  \in \R^{n \times \idxiter}} \entropy{\vf} - \entropy{\vf \mid \mActions^\top \targets} = \argmax_{\mActions  \in \R^{n \times \idxiter}} \entropy{\vf} - \entropy[p(\vf \mid \mActions^\top{\targets})]{\vf} = \argmin_{\mActions  \in \R^{n \times \idxiter}} \entropy[p(\vf \mid \mActions^\top \targets)]{\vf}.
	\end{equation*}
	This proves the claim.
\end{proof}

\section{Algorithms}

\subsection{Iterative and Batch Versions of CaGP}

\begin{algorithm}[H]
	\caption{CaGP \(=\) IterGP: Iterative formulation as in \citet{Wenger2022PosteriorComputational}\label{alg:itergp}}
	\small
	\textbf{Input:} GP prior \(\gp{\mu}{\kernel}\), training data \((\traindata, \targets)\)\\
\textbf{Output:} (combined) GP posterior \(\gp{\mu_{\idxiter}}{\kernel_{\idxiter}}\)
\begin{algorithmic}[1]
    \Procedure{\textsc{CaGP}}{$\mu, \kernel, \traindata, \targets, \kernmatinvapprox_0 = \mZero$} \Comment{}{Time}{Space}
    \While{\textbf{not} \textsc{StoppingCriterion}()} 
    \State \(\action_\idxiter \gets
    \textsc{Policy}()\) \Comment{Select action via policy.}{}{}
    \State \(\residual_{\idxiter - 1} \gets (\targets - \vec{\mu}) -  \shat{\kernmat} \rweightsapprox_{\idxiter-1}\)
    \Comment{Residual.}{\(\bigO{n^2}\)}{\(\bigO{\numtraindata}\)}
    \State \(\observ_\idxiter \gets \action_\idxiter^\top \residual_{\idxiter-1}\)
    \Comment{Observation.}{\(\bigO{\nnzactions}\)}{\(\bigO{1}\)}
    \State \(\kernmataction_\idxiter \gets \hat{\kernmat}\action_\idxiter\) \Comment{}{\(\bigO{n\nnzactions}\)}{\(\bigO{\numtraindata}\)}
    \State \(\searchdir_\idxiter \gets \textcolor{gray}{\rweightscov_{\idxiter-1} \shat{\kernmat}
    \action_\idxiter = \,} \action_\idxiter - \kernmatinvapprox_{\idxiter-1}\kernmataction_\idxiter\) \Comment{Search direction.}{\(\bigO{\numtraindata\idxiter}\)}{\(\bigO{\numtraindata}\)}
    \State \(\searchdirsqnorm_\idxiter \gets \textcolor{gray}{\action_{\idxiter}^\top \shat{\kernmat} \rweightscov_{\idxiter-1} \shat{\kernmat} \action_{\idxiter} = \,}\kernmataction_\idxiter^\top \searchdir_\idxiter\) \Comment{}{\(\bigO{\numtraindata}\)}{\(\bigO{1}\)}
    \State \(\kernmatinvapprox_\idxiter \gets \kernmatinvapprox_{\idxiter-1} + \frac{1}{\searchdirsqnorm_\idxiter}\searchdir_\idxiter \searchdir_\idxiter^\top\)
    \Comment{Precision matrix approx. \(\kernmatinvapprox_\idxiter \approx \shat{\kernmat}^{-1}\).}{\(\bigO{\numtraindata}\)}{\(\bigO{\numtraindata\idxiter}\)}
    \State \(\rweightsapprox_\idxiter \gets \rweightsapprox_{\idxiter-1} + \frac{\alpha_\idxiter}{\searchdirsqnorm_\idxiter}\searchdir_\idxiter  \)
    \Comment{Representer weights estimate.}{\(\bigO{\numtraindata}\)}{\(\bigO{\numtraindata}\)}
    \State \textcolor{gray}{\(\rweightscov_\idxiter \gets \rweightscov_0 - \kernmatinvapprox_\idxiter\)}
    \Comment{Representer weights uncertainty.}{}{}
    \EndWhile
    \State \(\mu_{\idxiter}(\cdot) \gets \mu(\cdot) + \kernel(\cdot, \traindata) \rweightsapprox_\idxiter\)\Comment{Approximate posterior mean.}{\(\bigO{\numtestdata \numtraindata}\)}{\(\bigO{\numtestdata}\)}
    \State \(\kernel_{\idxiter}(\cdot, \cdot) \gets \kernel(\cdot, \cdot)	- \kernel(\cdot, \traindata)
    \kernmatinvapprox_\idxiter \kernel(\traindata, \cdot)\) \Comment{Combined covariance function.}{\(\bigO{\numtestdata \numtraindata \idxiter}\)}{\(\bigO{\numtestdata^2}\)}
    \State \Return \(\gp{\mu_{\idxiter}}{\kernel_{\idxiter}}\)
    \EndProcedure
    \vspace{-2em}
\end{algorithmic}
\end{algorithm}

\begin{algorithm}[H]
	\caption{CaGP: Batch Version \label{alg:itergp-batch}}
	\small
	\textbf{Input:} GP prior \(\gp{\mu}{\kernel}\), training data \((\traindata, \targets)\)\\
\textbf{Output:} (combined) GP posterior \(\gp{\mu_{\idxiter}}{\kernel_{\idxiter}}\)
\begin{algorithmic}[1]
    \Procedure{\textsc{CaGP}}{$\mu, \kernel, \traindata, \targets$} \Comment{}{Time}{Space}
    \State \(\mActions_\idxiter \gets \textsc{Policy}()\) \Comment{Select batch of actions via policy.}{}{}
    \State \(\tilde{\targets} \gets \mActions_\idxiter^\tr(\targets - \vmu)\) \Comment{``Projected'' data.}{\(\bigO{\nnzactions\idxiter}\)}{\(\bigO{\idxiter}\)}
    \State \(\mKernmatActions_\idxiter \gets \hat{\kernmat}\mActions_\idxiter\) \Comment{}{\(\bigO{n\nnzactions\idxiter}\)}{\(\bigO{\numtraindata\idxiter}\)}
    \State \(\mCholfacActionsKernmatActions_\idxiter \gets \textsc{Cholesky}(\mActions_\idxiter^\tr \mKernmatActions_\idxiter)\) \Comment{}{\(\bigO{\idxiter^2(\idxiter +\nnzactions)}\)}{\(\bigO{\idxiter^2}\)}
    \State \(\tilde{\rweightsapprox}_\idxiter \gets \mCholfacActionsKernmatActions_{\idxiter}^{-\top} \mCholfacActionsKernmatActions_{\idxiter}^{-1}\tilde{\targets}\)
    \Comment{``Projected'' representer weights.}{\(\bigO{\idxiter^2}\)}{\(\bigO{\idxiter}\)}
    \State \(\kernel_{\mActions}(\cdot, \traindata) \gets \kernel(\cdot, \traindata)\mActions_\idxiter\) \Comment{}{\(\bigO{\numtestdata \nnzactions \idxiter}\)}{\(\bigO{\numtestdata \idxiter}\)} 
    \State \(\mu_{\idxiter}(\cdot) \gets \mu(\cdot) + \kernel_{\mActions}(\cdot, \traindata)\tilde{\rweightsapprox}_\idxiter\)\Comment{}{\(\bigO{\numtestdata \idxiter}\)}{\(\bigO{\numtestdata}\)}
    \State \(\kernel_{\idxiter}(\cdot, \cdot) \gets \kernel(\cdot, \cdot)	- \kernel_{\mActions}(\cdot, \traindata) \mCholfacActionsKernmatActions_{\idxiter}^{-\top} \mCholfacActionsKernmatActions_{\idxiter}^{-1} \kernel_{\mActions}(\traindata, \cdot)\) \Comment{}{\(\bigO{\numtestdata \idxiter^2}\)}{\(\bigO{\numtestdata^2}\)}
    \State \Return \(\gp{\mu_{\idxiter}}{\kernel_{\idxiter}}\)
    \EndProcedure
    \vspace{-2em}
\end{algorithmic}
\end{algorithm}

\subsection{Implementation}
\label{sec:implementation}

We provide an open-source implementation of \href{https://github.com/cornellius-gp/gpytorch/blob/e0e8cd5365e7eea72befaa02d644f588984fd337/gpytorch/models/computation_aware_gp.py}{CaGP-Opt as part of GPyTorch}.
To install the package via \texttt{pip}, execute the following in the command line:

{
    \footnotesize
\begin{verbatim}
    pip install git+https://github.com/cornellius-gp/linear_operator.git@sparsity
    pip install git+https://github.com/cornellius-gp/gpytorch.git@computation-aware-gps-v2
    pip install pykeops
\end{verbatim}
}

\newpage

\section{Additional Experimental Results and Details}
\label{sec:additional_experiments}

\subsection{Inducing Points Placement and Uncertainty Quantification of SVGP}
\label{sec:inducing-points-placement-svgp}

To better understand whether the overconfidence of SVGP at inducing points observed in the visualization in \Cref{fig:visual-abstract} holds also in higher dimensions, we do the following experiment. 
For varying input dimension \(d \in \{1, 2, \dots, 25\}\), we generate synthetic training data by sampling \(\numtraindata=500\) inputs \(\traindata\) uniformly at random with corresponding targets sampled from a zero-mean Gaussian process \(y \sim \gp{0}{\covfn^\noisescale}\), where \(\covfn^\noisescale(\cdot, \cdot) = \covfn(\cdot, \cdot) + \noisescale^2\delta(\cdot, \cdot)\) is given by the sum of a Mat\'ern(\(3/2\)) and a white noise kernel with noise scale \(\noisescale\).  We optimize the kernel hyperparameters, variational parameters and inducing points (\(m=64\)) jointly for 300 epochs using Adam with a linear learning rate scheduler. At convergence we measure the \emph{average distance between inducing points and the nearest datapoint measured in lengthscale units}, i.e.
\begin{equation}
    \bar{d}_{\vec{\lengthscale}}(\inducingpoints, \traindata) = \frac{1}{m}\sum_{i=1}^m \bigg(\min_{j} \norm{\inducingpoint_i - \vx_j}_{\operatorname{diag}(\vec{\lengthscale^{-2}})}\bigg)
\end{equation}
where \(\vec{\lengthscale} \in \R^{\inputdim}\) is the vector of lengthscales (one per input dimension).
We also compute the \emph{average ratio of the posterior variance to the predictive variance at the inducing points}, i.e.
\begin{equation}
    \bar{\rho}(\inducingpoints) = \frac{1}{m}\sum_{i=1}^m \frac{\covfn_{\text{posterior}}(\inducingpoint_i, \inducingpoint_i)}{\covfn_{\text{posterior}}(\inducingpoint_i, \inducingpoint_i) + \noisescale^2}.
\end{equation}
The results of our experiments are shown in \cref{fig:overconfidence-inducing-points}. We find that as expected the inducing points are optimized to lie closer to datapoints than points sampled uniformly at random. 
However, the inducing points lie increasingly far away from the training data as the dimension increases relative to the lengthscale that SVGP learns.
Therefore this experiment suggests that the phenomenon observed in \cref{fig:visual-abstract}, that SVGP can be overconfident at inducing points if they are far away from training datapoints, to be increasingly present as the input dimension increases.
This is further substantiated by \cref{subfig:svgp-overconfidence-at-inducing-points} since the proportion of posterior variance to predictive variance at the inducing points is very small already in \(\inputdim=4\) dimensions. This illustrates both SVGP's overconfidence at the inducing points (in particular in higher dimensions) and that its predictive variance is dominated by the learned observation noise, as we also saw in the illustrative \cref{fig:visual-abstract}.

\begin{figure}[H]
    \centering
	\begin{subfigure}[t]{0.49\textwidth}
		\centering
		\includegraphics[width=\textwidth]{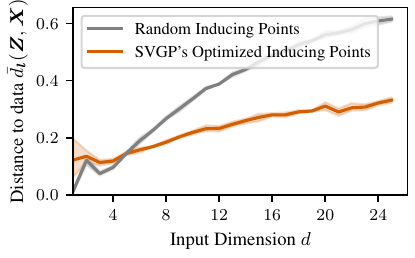}
        \caption{Average distance of inducing points to the nearest datapoint measured in lengthscale units.}
		\label{subfig:svgp-inducing-point-placement}
	\end{subfigure}%
	\hfill
	\begin{subfigure}[t]{0.49\textwidth}
		\centering
		\includegraphics[width=\textwidth]{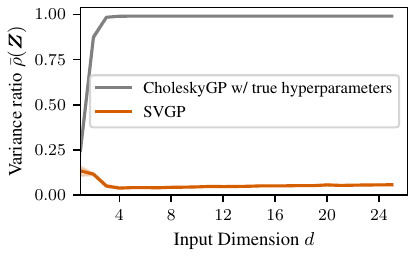}
        \caption{Average ratio of posterior to predictive variance at SVGP's inducing point locations.}
		\label{subfig:svgp-overconfidence-at-inducing-points}
	\end{subfigure}%
	\caption{
        \emph{SVGP's inducing point placement and uncertainty in higher dimensions.}
	(a) As the dimension increases, the inducing points SVGP learns lie increasingly far away from the data measured in lengthscale units given a fixed training data set size and number of inducing points. 
	(b) SVGP's variance at the inducing points is dominated by the learned observational noise in higher dimensions, rather than by the posterior variance. The comparison to a CholeskyGP with the data-generating hyperparameters shows that SVGP compensates for a lack of posterior variance at the inducing points by artificially inflating the observation noise. This illustrates both the overconfidence (in terms of posterior variance) of SVGP at the inducing points and its tendency to oversmooth.}
	\label{fig:overconfidence-inducing-points}
\end{figure}

\subsection{Grassman Distance Between Subspaces}
\label{sec:subspace-distance}

In \cref{fig:policy-illustration} we compute the distance between the subspaces spanned by random vectors, the actions \(\mActions\) of CaGP, and the space spanned by the top-\(i\) eigenvectors. The notion of subspace distance we use is the Grassman distance, \ie for two subspaces spanned by the columns of matrices \(\mA \in \R^{\numtraindata \times p}\) and \(\mB  \in \R^{\numtraindata \times p}\) \st \(p \geq q\) the Grassman subspace distance is defined by
\begin{equation}
d(\mA, \mB) = \norm{\vtheta}_2
\end{equation}
where \(\vtheta \in \R^{q}\) is the vector of principal angles between the two spaces, which can be computed via an SVD \citep[e.g.~Alg.~6.4.3 in][]{GolubVanLoan2012}.

\subsection{Generalization Experiment}

\begin{table*}[h!]
    \caption{
        Detailed configuration of the generalization experiment in \Cref{sec:experiments}.
    }
    \label{tab:experiment-configuration}
    \small
    \centering
    \renewrobustcmd{\bfseries}{\fontseries{b}\selectfont}
	\sisetup{detect-weight=true,detect-inline-weight=math}
    \resizebox{\textwidth}{!}{%
    \begin{tabular}{lcclclcc}
    \toprule
    \multirow{2}{*}{Method} & \multicolumn{2}{c}{Posterior Approximation} & \multicolumn{4}{c}{Model Selection / Training} &\\ 
    \cmidrule(lr){2-3} \cmidrule(lr){4-7}
    & Iters. \(\idxiter\) / Ind. Points \(m\) & Solver Tol. & Optimizer  & Epochs & (Initial) Learning Rate &  Batch Size & Precision \\
    \midrule
    \textcolor{CholeskyGP}{CholeskyGP} & - & - & LBFGS & 100 & \(\{1, 10^{-1}, 10^{-2}, 10^{-3}, 10^{-4}\}\) & \(\numtraindata\) & \texttt{float64}\\
    \textcolor{SGPR}{SGPR}& 1024 & - & Adam & 1000 & \(\{1, 10^{-1}, 10^{-2}, 10^{-3}, 10^{-4}\}\) & \(\numtraindata\) & \texttt{float32}\\
    &
    1024 & - & LBFGS & 100 & \(\{1, 10^{-1}, 10^{-2}, 10^{-3}, 10^{-4}\}\) & \(\numtraindata\) & \texttt{float64}\\
    \textcolor{SVGP}{SVGP}& 1024 & - & Adam & 1000 & \(\{1, 10^{-1}, 10^{-2}, 10^{-3}, 10^{-4}\}\) & 1024 & \texttt{float32}\\
    \textcolor{CGGP}{CGGP} & 512 & \(10^{-4}\) & LBFGS & 100 & \(\{1, 10^{-1}, 10^{-2}, 10^{-3}, 10^{-4}\}\) & \(\numtraindata\) & \texttt{float64}\\
    \textcolor{CaGP-CG}{CaGP-CG}& 512 & \(10^{-4}\) & Adam & 250 & \(\{1, 10^{-1}\}\) & \(\numtraindata\) & \texttt{float32}\\
    \textcolor{CaGP-Opt}{CaGP-Opt}& 512 & - & Adam & 1000 & \(\{1, 10^{-1}, 10^{-2}\}\) & \(\numtraindata\) & \texttt{float32}\\
    &
    512 & - & LBFGS & 100 & \(\{1, 10^{-1}, 10^{-2}\}\) & \(\numtraindata\) & \texttt{float64}\\
    \bottomrule
\end{tabular}
    }
\end{table*}

\subsubsection{Impact of Learning Rate on Generalization}

To show the impact of different choices of learning rate on the GP approximations we consider, we show the test metrics for the learning rate sweeps in our main experiment in \cref{fig:generalization-experiment-learning-rate-sweeps}. Note that not all choices of learning rate appear since a small minority of runs fail outright, for example if the learning rate is too large. 

\begin{figure}[h!]
	\centering
	\includegraphics*[width=\textwidth]{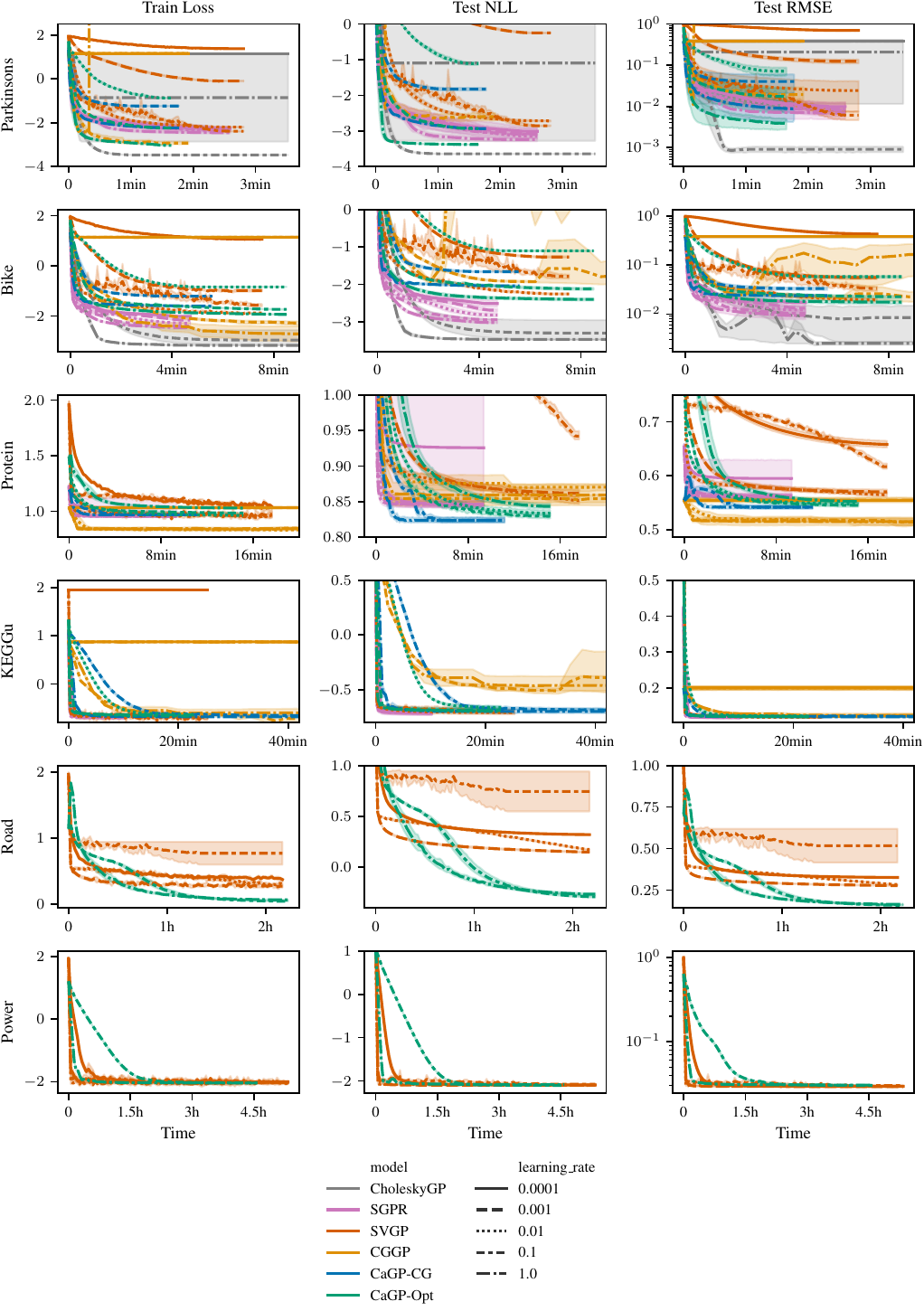}
	\caption{\emph{Effects of (initial) learning rate when using either LBFGS with Wolfe line search (CholeskyGP, SGPR) or Adam (SVGP, CaGP-CG, CaGP-Opt) for hyperparameter optimization.}}
	\label{fig:generalization-experiment-learning-rate-sweeps}
\end{figure}
\newpage 

\subsubsection{Evolution Of Hyperparameters During Training}

To better understand how the kernel hyperparameters of each method evolve during training, we show their trajectories in \cref{fig:generalization-experiment-hyperparameters} for each dataset. Note that we only show the first three lengthscales per dataset (rather than up to \(\inputdim = 26\)).

\begin{figure}[h!]
	\centering
	\includegraphics[width=\textwidth]{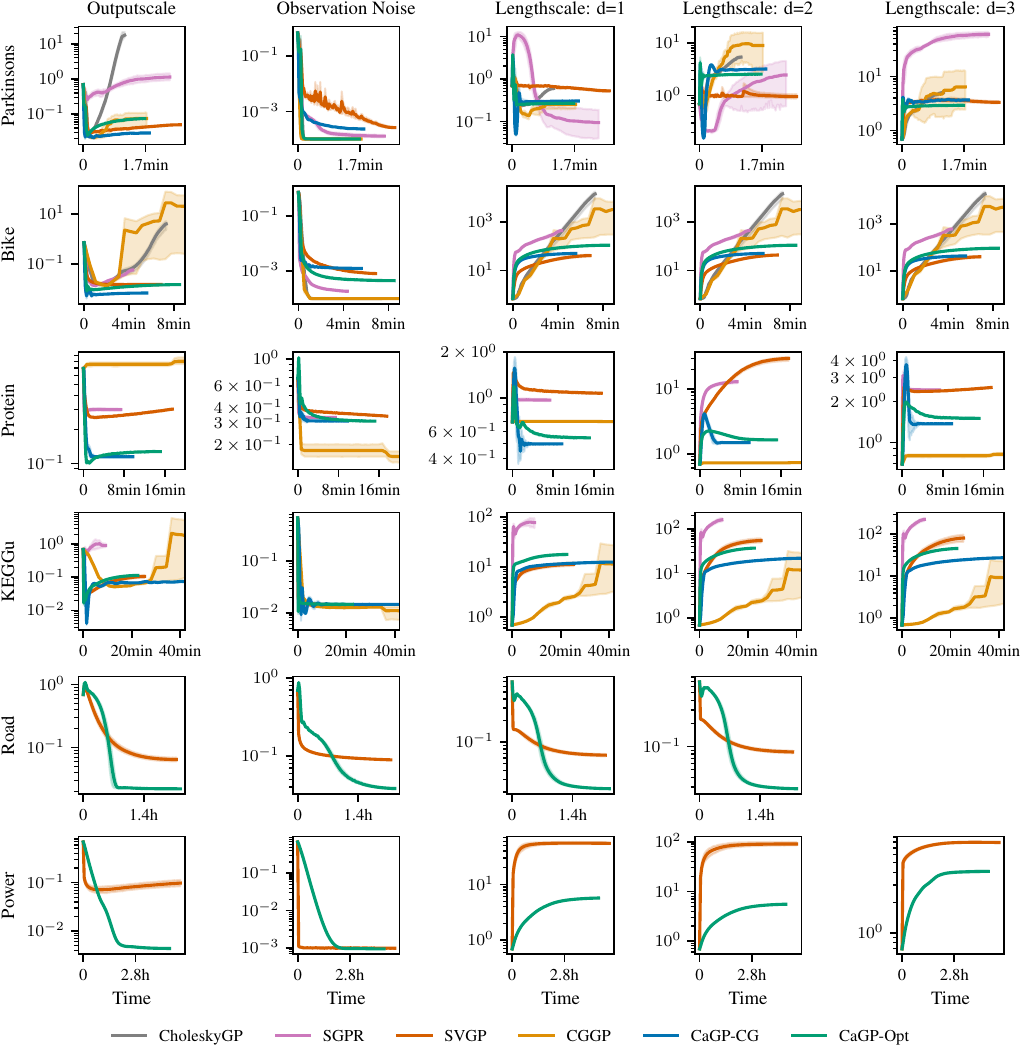}
	\caption{\emph{Learned hyperparameters for different GP approximations on UCI datasets.} Showing only results for the best choice of learning rate per method.
	}
	\label{fig:generalization-experiment-hyperparameters}
\end{figure}

\stopcontents[supplementary]

\end{document}